\documentclass{article}

\usepackage{arxiv}

\usepackage[utf8]{inputenc} 
\usepackage[T1]{fontenc}    
\usepackage{url}            
\usepackage{booktabs}       
\usepackage{amsfonts}       
\usepackage{nicefrac}       
\usepackage{microtype}      
\usepackage{lipsum}         
\usepackage{graphicx}
\usepackage{natbib}
\usepackage{doi}

\usepackage{algorithm}
\usepackage{algorithmic}

\usepackage{enumitem}
\usepackage{amsmath}
\usepackage{stmaryrd}
\usepackage{amssymb}
\usepackage{alltt}
\usepackage{tikz}
\usepackage{tikz-qtree}
\usetikzlibrary{shapes,arrows}
\usetikzlibrary{backgrounds,fit,arrows,positioning, calc}
\usepackage{array}
\usepackage{amsthm}
\newtheorem{Definition}{Definition}
\newtheorem{Lemma}{Lemma}
\newtheorem{Theorem}{Theorem}
\newtheorem{Example}{Example}

\title{KOS-TL (Knowledge Operation System Type Logic): \\ A Constructive Foundation for Executable Knowledge Systems}


\newif\ifuniqueAffiliation
\uniqueAffiliationtrue

\ifuniqueAffiliation 
\author{ \href{https://orcid.org/0000-0002-4298-1834}{\includegraphics[scale=0.06]{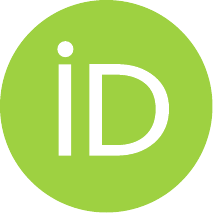}\hspace{1mm}Chen Peng}\thanks{Chen Peng, Doctor of Computer Science, born in May 1979, from Nanfeng County, Jiangxi Province.} \\
	School of Information Science \\
	Beijing University of Language and Culture\\
	Beijing 100081 \\
	\texttt{chenpeng@blcu.edu.cn} \\
}
\else
\usepackage{authblk}

\setlength{\affilsep}{0em}
\newbox{\orcid}\sbox{\orcid}{\includegraphics[scale=0.06]{orcid.pdf}}
\author[1]{%
	\href{https://orcid.org/0000-0000-0000-0000}{\usebox{\orcid}\hspace{1mm}David S.~Hippocampus\thanks{\texttt{hippo@cs.cranberry-lemon.edu}}}%
}
\author[1,2]{%
	\href{https://orcid.org/0000-0000-0000-0000}{\usebox{\orcid}\hspace{1mm}Elias D.~Striatum\thanks{\texttt{stariate@ee.mount-sheikh.edu}}}%
}
\affil[1]{Department of Computer Science, Cranberry-Lemon University, Pittsburgh, PA 15213}
\affil[2]{Department of Electrical Engineering, Mount-Sheikh University, Santa Narimana, Levand}
\fi


\hypersetup{
pdftitle={A template for the arxiv style},
pdfsubject={q-bio.NC, q-bio.QM},
pdfauthor={David S.~Hippocampus, Elias D.~Striatum},
pdfkeywords={First keyword, Second keyword, More},
}

\usepackage{hyperref} 
\usepackage{cleveref} 

\begin{document}
\maketitle

\begin{abstract}
As knowledge representation shifts from static databases to evolving operational systems, traditional logics face limitations in handling event-driven state transitions. This paper proposes Knowledge Operation System Type Logic (KOS-TL), or "Zhi-Xing Logic" By integrating intuitionistic dependent type theory with small-step operational semantics, KOS-TL establishes a unified framework for static knowledge constraints, dynamic state evolution, and physical environment refinement.
\end{abstract}

\keywords{Knowledge Operation System \and Dependent Types \and Operational Semantics \and Constructivism \and Formal Verification}

\section{Introduction}
The core challenge of modern knowledge systems has shifted from conceptual modeling to the execution, traceability, and verification of operations within event-driven environments. Traditional Description Logic (DL) frameworks, rooted in static model-theoretic semantics, struggle to express "how knowledge is updated" and "how operations are executed."

Knowledge representation and reasoning is an important application domain of logic. However, with the proliferation of large-scale data integration and complex decision-making systems, the research objects in knowledge representation and reasoning are gradually shifting from static knowledge bases to continuously evolving knowledge operation systems. In knowledge operations oriented toward continuous development, concept modeling or ontology consistency verification is no longer the core focus; the core challenge has transformed into how to perform executable, traceable, and verifiable operations on knowledge in environments driven by events, state evolution, and strong engineering constraints.

In the face of new demands in the field of knowledge operations, existing mainstream logical frameworks (especially formal systems represented by Description Logic (DL) and its Semantic Web implementations such as OWL) exhibit fundamental mismatches with the aforementioned requirements in terms of theoretical assumptions and semantic structures. This is specifically manifested in the following aspects.
\begin{enumerate}[label=(\arabic*)]
\item Tension between static semantics and dynamic operations\\
Description Logic is based on static model-theoretic semantics, with its core reasoning problems revolving around concept satisfiability, concept inclusion, and instance checking. This paradigm assumes knowledge describes "possible states of the world" rather than "system runtime states." In contrast, the core objects in knowledge operations are events, operations, and state transitions, where the fundamental questions are no longer whether a certain assertion is true in some model, but whether a particular operation can be legally executed and how the system state evolves after its execution.
The model-theoretic semantics of Description Logic centers on static interpretive structures, with the primary goal of characterizing "what the world might logically be like." Concepts are interpreted as subsets of the individual domain, roles as binary relations, and reasoning problems primarily focus on satisfiability, concept inclusion, and instance checking. This semantic structure is naturally suited for taxonomies, ontology engineering, and terminological reasoning.
However, in the application scenarios faced by knowledge operations, knowledge is not a static collection but a state system that evolves continuously over time. The core issues are no longer "whether a certain assertion is true in some model," but "whether a certain event has occurred, whether a certain state has been updated, and which new facts these changes will trigger." Description Logic does not treat events and state transitions as first-class logical objects; its support for dynamic processes can only be achieved indirectly through external mechanisms or reification, which is costly in engineering terms and semantically opaque.
\item Fundamental conflict between open world assumption and operational semantics\\
The open world assumption (Open World Assumption) adhered to by Description Logic fundamentally conflicts with the closed or semi-closed world semantics commonly used in knowledge operations. In engineering practice, missing information is often treated as an abnormal state or basis for operation failure, rather than logical "unknown."
Description Logic adheres to the open world assumption (Open World Assumption, OWA), where unknown does not equate to false. This assumption is reasonable in Semantic Web and open knowledge environments but often becomes an obstacle in knowledge operations. In scenarios such as enterprise governance, risk control, and compliance auditing, "missing records" themselves constitute negative information or abnormal states.
The semantics of knowledge operating systems are closer to the closed world assumption: facts that do not appear are regarded as non-occurring events, and unsatisfied constraints as system errors. This semantic orientation, centered on closed worlds and executable constraints, makes the existential reasoning of Description Logic models difficult to directly serve actual system operations.
\item Differences between conceptual semantics and nominal type semantics\\
The conceptual semantics in Description Logic is extensional, with membership dynamically determined through reasoning; in practical knowledge operations, however, types more often play nominal and constraint roles. Whether an object belongs to a certain type is not derived through logical entailment but is a prerequisite that must be satisfied during data ingestion and operation phases, directly determining whether an operation is legal and whether the process can continue. Type errors manifest as system non-executable states rather than mere reasoning failures. This type semantics is more akin to type systems in programming languages and operating systems than traditional ontology logic.
\item Fundamental shift in reasoning objectives\\
Finally, the reasoning objectives of Description Logic primarily involve proving logical entailment, whereas "reasoning" in knowledge operations is more akin to rule-driven fact materialization—i.e., under the given current data state, which new facts should be immediately generated, stored, and involved in subsequent computations\footnote{This kind of "reasoning" is closer to rule-driven approaches such as Datalog, triggers, and operational semantics, rather than classical logical reasoning centered on proof theory or model theory.}. The results of reasoning are not merely used to answer queries but directly alter the system's observable state and impose constraints on subsequent operations. This reasoning mode lacks direct formal characterization in classical logical frameworks.
\end{enumerate}
In the field of knowledge operations, the issue is not whether logical systems are sufficiently powerful, but whether they can natively support events, time, state changes, and executable rules. Traditional Description Logic remains irreplaceable in static knowledge representation, but its logical assumptions and semantic structures are not suitable for direct use as the kernel of knowledge operating systems.
The aforementioned differences reveal a key theoretical gap: traditional logical systems have yet to provide a unified formal foundation for systems where "knowledge is treated as an operable object." This directly gives rise to the demand for a new logical system: one that takes events and state transitions as core objects, employs type-driven and operational semantics for reasoning, incorporates built-in type constraints to characterize operational legality, supports rule-driven state evolution, and achieves an engineering-feasible balance between reliability and termination.
To address this theoretical gap, we propose and study a new formal logical system—"Zhi-Xing Logic" (Knowledge Operation System Type Logic, abbreviated as KOS-TL). KOS-TL aims to build on intuitionistic type theory, introducing eventification and operational semantics, enabling the logical system to natively characterize the processes of knowledge comprehension, operation, and state updating, thereby providing a verifiable, executable, and extensible logical kernel for knowledge operating systems.

\section{The Architecture of KOS-TL}
\label{sec:headings}

To integrate logical rigor with operational expressiveness, "Zhi-Xing Logic" (KOS-TL) is defined as a \emph{layered formal system}. It consists of three distinct layers of formal definitions: \emph{Core}, \emph{Kernel}, and \emph{Runtime}, each differentiated in terms of logical roles and semantic commitments (as shown in Table~\ref{tab:kos-layers}).
\begin{table}[htbp]
\centering
\caption{Overview of KOS-TL Layered Structure}
\label{tab:kos-layers}
\begin{tabular}{p{1.5cm}p{3cm}p{6cm}p{4cm}p{2cm}}
\toprule
\textbf{Layer} & \textbf{Formal Name} & \textbf{Core Responsibilities} & \textbf{Logical Objects} & \textbf{Decidability} \\
\midrule
L0 Core & Static Truth Layer (Logic) & Defines ``what is valid.'' Establishes type constructions and constraints based on intuitionistic dependent type theory (ITT). & Types $T$, proof terms $p$, propositions $P$ & Strongly decidable \\
L1 Kernel & Dynamic Transition Layer (Dynamics) & Defines ``how to change.'' Introduces small-step operational semantics to handle event-driven state transitions. & States $\sigma$, events $e$, transitions $\to$ & Locally decidable \\
L2 Runtime & Environment Evolution Layer (System) & Defines ``how to run.'' Handles external I/O, timeline mounting, and nondeterministic inputs. & Queues $Q$, external sources $\mathcal{E}_{\mathrm{env}}$ & Semi-decidable \\
\bottomrule
\end{tabular}
\end{table}

The layering principles of "Zhi-Xing" Logic are as follows:
\begin{itemize}
    \item Logical validity is determined solely by the core layer;
    \item Operational correctness is enforced by the kernel layer;
    \item System evolution is realized in the runtime layer.
\end{itemize}
This layered design ensures that the correctness of the core logic is maintained even in open, nondeterministic system evolutions.

\subsection{L0: Core (Core Layer) --- Static Logical Domain}

\subsubsection{Core Layer Requirements Analysis and Logical Construction}
In terms of constructive expression requirements, traditional systems often focus solely on the static storage of data, whereas in high-security and high-trust scenarios, the system must transition to the storage of knowledge.'' Based on Intuitionistic Type Theory \cite{MartinLof1984}, each knowledge item is defined as a dependent pair $\Sigma(d:D).P(d)$, where data $d$ is strongly coupled with the credential $P(d)$ that proves it satisfies the business ontology. This design eliminates rootless data'' at the source, ensuring that all knowledge entering the kernel undergoes constructive verification.
Deep alignment between physics and logic is another key requirement. Through the expressive power of dependent types (Dependent Types), the system can internalize physical laws and compliance constraints from industrial or financial domains, rather than relying on external ad-hoc logical judgments. This paradigm of ``Make Illegal States Unrepresentable'' ensures that any attempt to perform an operation violating axioms fails at the type-checking stage, thereby maintaining the system's steady state.
Furthermore, to address high-frequency compliance requirements, the system introduces computational reflexivity (Computational Reflexivity). By requiring the Core Layer to describe its own reduction rules, the system can automatically synthesize equivalence proofs $\text{Id}(t, t')$ during the execution of every small-step logical evolution ($\beta, \iota, \delta$ reductions). This full-path automated auditing elevates traditional post-hoc investigation to runtime real-time formal verification. Finally, by defining elaboration operator ($\textsf{elab}$) templates, the Core Layer provides a semantic elevation benchmark for runtime (Runtime) signals, establishing a unique mapping path from physical bits (Bits) to logical truths (Truth).
\subsubsection{Overall Architecture Description}
The Core Layer is architected as a strongly verified microkernel based on dependent type theory, primarily driven by the following three functional modules: First, the type constructor and ontology manager act as the system's legislators,responsible for transforming business domain ontologies into sorts (Sorts) and dependent type structures in the type system, clearly defining the boundaries of legal objects. Second, the reduction engine serves as the system's reasoning machine,handling fine-grained evolution of logical terms by executing function applications and structural decompositions to compute the logical steady state after knowledge evolution. Third, the type checker acts as the system's gatekeeper,'' performing bidirectional type checking (Bidirectional Type Checking) to ensure that all operations entering the Kernel Layer satisfy pre-verified correctness (Correct-by-construction).

\subsubsection{Key Design Decisions}

The Core Layer's technology selections and architectural decisions reflect a balance between logical rigor and engineering feasibility. \begin{enumerate}[label=(\arabic*)]
\item Replacement of First-Order Logic (FOL) with Dependent Type Theory (MLTT)

Traditional knowledge bases rely on FOL or description logics (DL), leading to a disconnect between logical assertions and concrete data. The Core Layer selects dependent type theory, utilizing $ \Sigma $   types to achieve atomic encapsulation of data and constraints. This decision resolves the persistent issue of evidence absence in knowledge operations, enforcing physical constraints at the architectural level.
\item Introduction of Computational Reflexivity and Endogenous Auditing

To address the risks of traditional auditing being delayed and easily tampered with, the Core Layer models reduction rules as logical processing objects. Whenever a state changes, the system automatically synthesizes an identity proof (Identity Proof). This design transforms auditing behavior into an automated type-checking process; as long as the proof chain is complete, the system behavior achieves absolute compliance.
\item Dual-Universe System and Proposition Shrinking (Prop-Shrinking)

To solve the computational overhead problem posed by formal proofs, the Core Layer designs multi-level universes ($\mathcal{U}_i$) for complex modeling and introduces a dedicated proof space (Prop). Based on the proof irrelevance (Proof Irrelevance) principle, the system contracts proof details through type erasure (Erasure) techniques after completing rigorous verification. This decision achieves a balance between logical depth and engineering efficiency, supporting efficient processing of large-scale real-time knowledge streams.
\item Paradigm Shift from Denotational Semantics to Operational Semantics

Traditional logic emphasizes static truth values, whereas the KOS-TL Core Layer defines small-step reduction (Small-step reduction) rules for logical terms, defining knowledge operations as a reduction process. This decision ensures that the transformation from the Core Layer to the Kernel Layer is lossless and deterministic, achieving high isomorphism between logical inference steps and physical computation steps.
\end{enumerate}
The comparison between the Core Layer's decisions and traditional knowledge base architectures is shown in Table~\ref{tab:KosCorecomparison}.
\begin{table}[h]
\caption{Comparison of Traditional Architecture and KOS-TL Core Layer Decisions}
\label{tab:KosCorecomparison}
\centering
\begin{tabular}{p{3cm}|p{6cm}|p{6cm}}
\hline
\textbf{Design Dimension} & \textbf{Traditional Architecture Decision} & \textbf{KOS-TL Core Layer Decision} \\
\hline
Knowledge Carrier & Database Records + External Validation & Logical Terms (Terms) with Dependent Proofs \\
\hline
Constraint Trigger & Runtime Interception / Business Code if-else & Type Checking  \\
\hline
Evolution Driver & Database Transactions  & Logical Reduction \\
\hline
Trust Root & System Administrator Permissions / Log Records & Immutable Mathematical Proof Chain \\
\hline
\end{tabular}
\end{table}

\subsection{L1: Kernel (Kernel Layer) — Operational Semantics Domain}

In the layered architecture of the KOS-TL system, the Kernel Layer undertakes the central function of transforming the static truths defined by the Core Layer into dynamic evolutionary drivers. If the Core Layer is likened to the constitution, then the Kernel Layer serves as the administrative hub'' and power engine. Its core objective is to ensure, through formal reduction mechanisms, that the knowledge system maintains logical determinism and evolutionary continuity when handling high-frequency business streams and time streams.

\subsubsection{Kernel Layer Requirements Modeling: Dynamic Evolution and State Determinism}

For complex knowledge operating systems, the Kernel Layer's design primarily addresses the issues of collapse'' and ``generation'' of knowledge states.
First is the causal traceability requirement: the system mandates that every change in knowledge state must have explicit causal associations, meaning all changes must trace back to specific elaboration events (Event). Second is the state consistency requirement: especially in distributed or concurrent environments, atomicity of state transitions must be guaranteed, eradicating conflicting states at the logical level. Finally, real-time performance and progress guarantees (Progress Guarantee): the system must ensure that, upon receiving valid inputs, it can logically deterministically evolve to the next stable state, avoiding undefined behaviors or logical deadlocks caused by non-determinism.

\subsubsection{Design Methodology: Small-Step Operational Semantics and State Machine Model}
The Kernel Layer rejects traditional black-box batch processing and is instead built upon formalized small-step operational semantics (Small-step Operational Semantics).
The core of its methodology lies in the state triple model. The Kernel abstracts the system as $\sigma = \langle \mathcal{K}, \mathcal{TS}, \mathcal{P} \rangle$, where $\mathcal{K}$ represents the current verified set of knowledge truths, $\mathcal{TS}$ is the coupling of logical clocks and physical anchors, and $\mathcal{P}$ is the queue of elaborated but unexecuted events. The Kernel Layer is essentially event-centric, based on the ontological status of events \cite{Davidson1967}, where events are defined as state transition operators (Events as Transitions): each type of event corresponds to an explicit transition operator that, by synchronously invoking the Core Layer's judgment results, drives the monotonic evolution from the old state $\sigma$ to the new state $\sigma'$.

\subsubsection{Overall Architecture Description}
As the "logical router" between the logical foundation (Core) and the physical environment (Runtime), the Kernel Layer collaborates through three core modules:
\begin{itemize}
\item Event Queue Manager: Responsible for receiving event packets $\langle e, p \rangle$ elaborated from the Runtime Layer, and performing strict sequencing (Sequencing) and dependency conflict detection.
\item Evolution Scheduler: Drives the system's core evolution loop. This module employs a Peek-Verify-Reduce-Confirm'' process, i.e., invoking the Core Layer to verify preconditions before consuming events, executing reduction operations, and validating postconditions. Its operating objects are always logical terms (Terms) rather than underlying physical bits. \item State Mirror: Maintains the latest truth view'' at the logical level, providing consistent context for state queries from the Runtime Layer and contextual verification from the Core Layer.
\end{itemize}

\subsubsection{Key Design Decisions}

\begin{enumerate}[label=(\arabic*)]
\item Decoupling of Strong Sequential Commit and Asynchronous Elaboration

To address the contradiction between the high-frequency generation of physical signals and the time-consuming nature of logical verification (deep proofs), the Kernel Layer adopts a decision of asynchronous elaboration, sequential commit.'' The Runtime Layer can execute signal elaboration in parallel, but the Kernel Layer insists on sequential commit (Sequential Commit). This decision ensures the uniqueness of the causal chain, presenting the knowledge evolution trajectory as a deterministic linear path, mitigating risks of complex logical branch backtracking.
\item Closed-Loop Evolution: Precondition Verification and Postcondition Evidence Synthesis

To ensure that the system state after operations continues to conform to the ontology definition, the Kernel Layer establishes a closed-loop mechanism of pre-verification, post-synthesis.'' Before executing an operation, the Kernel forcibly verifies the $Pre(e)$ provided by the Core Layer; after the operation completes, it automatically triggers the Core Layer to synthesize a new proof for $Post(e)$. This decision guarantees that the system always transitions between states proven to be true,'' eliminating logical vacuum periods.
\item Loose Coupling Mechanism Between Logical Time and Physical Time

To tackle the pain point of physical time precedence inconsistencies with causal logic in distributed networks, the Kernel Layer maintains an independent logical order. Physical timestamps serve only as evidence anchors, with synchronization occurring only during the materialization phase (Runtime). This decision leverages time evidence in logical proofs for reordering, fundamentally solving the phantom timing'' problem and ensuring absolute fidelity of the \textbf{Causality Order}.
\item Deterministic Reduction

The Kernel Layer strictly prohibits non-deterministic (Non-deterministic) choices. If an event points to multiple possible evolution branches during reduction, it will be deemed a type error in the Core Layer. This decision greatly enhances the system's predictability: under the same initial state and event sequence, the Kernel Layer will produce a mathematically unique, reproducible knowledge view.
\end{enumerate}

\subsection{L2: Runtime Layer — System Evolution Domain}

In the architecture of KOS-TL (ZhiXing Logic), the Runtime Layer is defined as the key anchor point between the logical world and the non-deterministic physical world. If the Core Layer is regarded as the system's constitution and the Kernel Layer as the administrative hub,'' then the Runtime Layer serves as the system's senses, limbs, and physical carrier.'' Its core mission is to solve the ``last mile'' problem of formal logic implementation in complex engineering environments, achieving seamless integration between logical semantics and underlying physical signals.

\subsubsection{Requirements Modeling: Physical Fidelity and Environmental Elaboration}
The design of the Runtime Layer aims to address the fundamental tension between the physical world and logical abstractions.
First is the signal elevation requirement: raw bit streams (Raw Bits) generated by the physical world lack intrinsic semantics, and the Runtime Layer needs to elevate'' them into event objects with logical connotations. Second is the resource mapping requirement: abstract state updates output from the logical layer must ultimately be precisely implemented in disk bits, memory entries, or hardware controller voltage states. Finally, real-time and concurrency requirements: when handling high-frequency sensor signals, the system must ensure low-latency ingestion without disrupting the causal order consistency maintained by the Kernel Layer.

\subsubsection{Design Methodology: Elaboration and Materialization in Bidirectional Mapping}

The methodological core of the Runtime Layer lies in establishing a bidirectional elaboration relationship between logical semantics and physical resources.
\begin{enumerate}[label=(\arabic*)]
\item Elaboration (Refinement/Elaboration) Methodology

The system introduces the $ \textsf{elab} $ operator to achieve signal elevation. Unlike traditional syntactic parsing (Parsing), $ \textsf{elab} $ is a proof construction process: it references the Core Layer's ontology templates to find logical evidence $ p $ for the raw signal $ s $, thereby converting it into a proven event $ \langle e, p \rangle $ that conforms to the Kernel interface. This process ensures that all inputs entering the system have a formalized basis for legitimacy.
\item Materialization Methodology

Through the $ \mathcal{M} $ operator, the Runtime Layer is responsible for degrading the abstract knowledge state $\mathcal{K}$ in the Kernel Layer into concrete physical forms. For example, materializing a logical transfer successful assertion into an ACID transaction in a database or a transaction entry on a blockchain. The materialization mechanism ensures the faithful execution of logical conclusions in the physical world.
\end{enumerate}

\subsubsection{Overall Architecture Description}
As the hub responsible for cross-border interaction,the Runtime Layer consists of the following key components:
\begin{itemize}
\item Elaboration Engine (Elaborator): Connects to physical I/O devices, responsible for monitoring external interrupts and sensor data streams, and performing bidirectional elaboration to anchor non-deterministic signals into deterministic logical events.
\item Physical Storage Manager (Physical Storage Manager): Abstracts underlying media differences, manages databases and memory mappings, ensuring the atomicity and persistence of materialization operations.
\item Scheduler Relay (Scheduler Relay): Acts as a buffer for the Kernel's sequential evolution, responsible for concurrent ingestion of multi-threaded signals and ordered queuing.
\end{itemize}
\subsubsection{Key Design Decisions}
\begin{enumerate}[label=(\arabic*)]
\item Input Filtering Decision Based on Elaboration

 To address the problem of traditional systems directly reading variables being susceptible to environmental interference (dirty data), the Runtime Layer mandates that all inputs must pass through the $ \textsf{elab} $operator. If a signal cannot construct a valid proof $ p $ in the Core Layer, it is deemed invalid input and immediately discarded. This decision establishes a logical firewall, ensuring the Kernel Layer is not polluted by unexpected signals.
 \item Atomic Commit Fence

 To solve the knowledge-action inconsistency problem (i.e., logical update succeeds but physical write fails), the Runtime Layer introduces a fence mechanism similar to two-phase commit. Only after the physical layer returns a write acknowledgment (ACK) does the logical clock $\mathcal{TS}$ formally advance. This decision achieves precise synchronization between physical storage and logical truths.
\item Resource Abstraction and Multi-Backend Plug-and-Play Support
To adapt to diverse environments from embedded devices to cloud clusters, the Runtime Layer designs the materialization operator $\mathcal{M}$ as a pluggable backend. This decision realizes logical portability: the logical axioms of the Core Layer and Kernel Layer remain unchanged, and cross-platform semantic consistency can be achieved by simply replacing the Runtime Layer backend.
\item Deterministic Trajectory Replay and Post-Disaster Self-Healing

Based on the unreliability of the physical environment, the Runtime Layer fully records the original trajectories of all elaboration events. Leveraging the deterministic reduction characteristics of the Kernel Layer, the system can, after a failure, replay the event stream to logically re-derive and repair the physical configuration. This decision provides the system with extremely strong logical robustness and self-healing capabilities.
\end{enumerate}
The hierarchical structure and stability guarantees of the entire KOS-TL system are shown in Table~\ref{tab:KOSHiera}.
\begin{table}[htbp]
\centering
\caption{System Hierarchical Structure and Stability Guarantees}
\label{tab:KOSHiera}
\begin{tabular}{llll} 
\toprule 
Layer & Core Focus & Core Operators & Stability Guarantee \\
\midrule 
L0: Core & Static Structure of Knowledge & $\Pi, \Sigma, \textsf{Prop}$ & Type Checking (Type Checking) \\
L1: Kernel & Event-Based State Transitions & $\textsf{STEP}, \textsf{Ev}$ & Proof Verification (Proof Verification) \\
L2: Runtime & Mapping Between Reality and Logic & $\textsf{elaborate}, \mathcal{M} $ & Transactional Consistency (Transactional Consistency) \\
\bottomrule 
\end{tabular}
\end{table}

\section{KOS-TL Core Layer: Static Logical Foundation}

The Core Layer is the "formal constitution" of the KOS-TL system, based on Intuitive Dependent Type Theory . Its core task is to define the static structure of knowledge, logical constraints, and their validity proofs, providing an unalterable logical foundation for upper-layer execution. It does not change with time and is only responsible for defining what constitutes a "legal construction".

\subsection{Syntax}

\subsubsection{Domain ($\mathcal{D}_{Core}$)}

The domain ($\mathcal{D}_{Core}$) world of the Core Layer consists of a dual-axis structure: one is the data axis, and the other is the logical axis.

\begin{itemize}
    \item \textbf{Dual-Axis Universes}:
    \begin{itemize}
        \item \textbf{Computational Axis ($\mathcal{U}_i$)}: Follows predicativity. $\mathcal{U}_0$ contains base sorts (Sorts), and $\mathcal{U}_{i+1}$ contains $\mathcal{U}_i$ as an element. Used for modeling data with physical effects.
        \item \textbf{Logical Axis ($\textsf{Type}_i$)}: Follows predicativity, but its base $\textsf{Prop} : \textsf{Type}_1$ has impredicativity. $\textsf{Type}_i$ is used for modeling logical predicate spaces and metalogical rules.
        \item \textbf{Hierarchical Relations}:
        \begin{itemize}
            \item $\textsf{Prop} : \textsf{Type}_1, \quad \textsf{Type}_i : \textsf{Type}_{i+1}$
            \item $\mathcal{U}_i : \mathcal{U}_{i+1}, \quad \mathcal{U}_i : \textsf{Type}_{i+1}$ (Computational universes can be objects of logical discussion)
            \item $\textsf{Prop} \hookrightarrow \mathcal{U}_1$ (Propositions can be embedded into the data axis)
        \end{itemize}
    \end{itemize}
    \item \textbf{Base Sorts }: $\textsf{Val}$ (atomic values), $\textsf{Time}$ (time point scalars), $\textsf{ID}$ (unique identifiers).
    \item \textbf{Knowledge Objects }: Instances of all dependent record types.
\end{itemize}

\subsubsection{Syntax}

\begin{itemize}
\item \textbf{Type Constructions (Types)}:
$$\begin{aligned}
A, B ::= & \ \textsf{Prop} \mid \textsf{Type}_i \mid \mathcal{U}_i & (\text{Universes}) \\
& \mid \textsf{Val} \mid \textsf{Time} \mid \textsf{ID} & (\text{Base Sorts}) \\
& \mid \Pi(x:A).B \mid \Sigma(x:A).B \mid A + B \mid \textsf{Id}_A(a, b) & (\text{Constructors})
\end{aligned}$$
\item \textbf{Term Constructions (Terms)}:
$$
\begin{aligned}
t, u ::= & \ x \mid \lambda x.t \mid t\,u & (\Pi\text{ Intro/Elim}) \\
& \mid \langle t, u \rangle \mid \textsf{split}(t, x.y.u) & (\Sigma\text{ Intro/Elim}) \\
& \mid \textsf{inl}(t) \mid \textsf{inr}(t) \mid \textsf{case}(t, x.u, y.v) & (+\text{ Intro/Elim}) \\
& \mid \textsf{refl} & (\text{Id Intro})
\end{aligned}
$$
\item \textbf{Judgments}:
\begin{itemize}
    \item $\Gamma \vdash A : \mathcal{S}$: Indicates that $A$ is a valid type, where $\mathcal{S} \in \{ \textsf{Type}_i, \mathcal{U}_i \}$.
    \item $\Gamma \vdash t : A$: Indicates that $t$ is a valid instance of type $A$ (for the data axis) or a valid proof (for the logical axis).
    \item $\Gamma \vdash A \equiv B$ and $\Gamma \vdash t \equiv u$: Indicates that types or terms are computationally equivalent (conversion rules).
\end{itemize}
\end{itemize}

To maintain the simplicity of the Core Layer, logical operations in $\textsf{Prop}$ are implemented through type construction operators in $\mathcal{U}$. The specific semantic mapping relations are shown in Table~\ref{tab:correspondenceofpro2type}.

Regarding the special nature of $\textsf{Prop}$. The KOS-TL Core follows the \textbf{Proof Irrelevance} principle: for any $P : \textsf{Prop}$, if $p, q : P$, then semantically $\llbracket p \rrbracket = \llbracket q \rrbracket$.

\begin{table}[h]
\centering
\caption{Isomorphism Between Logical Propositions and Types}
\label{tab:correspondenceofpro2type}
\begin{tabular}{|l|l|l|}
\hline
\textbf{Logical Proposition ($\textsf{Prop}$)} & \textbf{Type Construction ($\mathcal{T}$)} & \textbf{Term Construction (Terms)} \\
\hline
Universal Quantifier $\forall x:A.\, P(x)$ & Dependent Product $\Pi(x:A).P$ & $\lambda x.\, p$ \\
\hline
Existential Quantifier $\exists x:A.\, P(x)$ & Dependent Sum $\Sigma(x:A).P$ & $\langle a, p \rangle$ \\
\hline
Logical Implication $P \to Q$ & Function Space $P \to Q$ & $\lambda p.\, q$ \\
\hline
Logical Conjunction $P \wedge Q$ & Product Type $P \times Q$ & $\langle p, q \rangle$ \\
\hline
Logical Disjunction $P \vee Q$ & Sum Type $P + Q$ & $\textsf{inl}(p) / \textsf{inr}(q)$ \\
\hline
\end{tabular}
\end{table}

According to the construction rules for terms and types, the type set $\mathcal{T}$ of the KOS-TL Core Layer is inductively defined by the following rules.

\begin{enumerate}[label=(\arabic*)]
    \item Base Rules
    $$\begin{aligned}
         \textsf{Prop} &: \textsf{Type}_1 \quad (\text{Logical axis starting point}) \\
         \textsf{Type}_i &: \textsf{Type}_{i+1} \quad (\text{Logical universe accumulation}) \\
         \mathcal{U}_i &: \mathcal{U}_{i+1} \quad (\text{Data universe accumulation}) \\
         \textsf{Prop} & \hookrightarrow \mathcal{U}_1 \quad (\text{Lifting rule: Propositions can be treated as data processing, which is an implicit coercion})
    \end{aligned}$$
    $\textsf{Prop} \hookrightarrow \mathcal{U}_1$ is a one-way embedding, allowing proofs to be embedded as objects into data records (such as $\Sigma$ types), but ordinary data in $\mathcal{U}_i$ cannot be directly used as propositions for logical derivation.
    Atomic types: $\textsf{Val} \in \mathcal{T}, \textsf{Time} \in \mathcal{T}, \textsf{ID} \in \mathcal{T}$.
    \item Dependent Product Construction

    \begin{itemize}
        \item Logical/Computational Hybrid Rule:
        $$\frac{\Gamma \vdash A : \textsf{Type}_i/\mathcal{U}_i \quad \Gamma, x:A \vdash B : \textsf{Prop}}{\Gamma \vdash \Pi(x:A).B : \textsf{Prop}} (\text{Impredicative})$$
        \item Pure Universe Rule:
        $$\frac{\Gamma \vdash A : \textsf{Type}_i \quad \Gamma, x:A \vdash B : \textsf{Type}_j}{\Gamma \vdash \Pi(x:A).B : \textsf{Type}_{\max(i, j)}} (\text{Predicative})$$
    \end{itemize}
    $\textsf{Prop}$ has a special property called impredicativity (Impredicativity). Regardless of how high the level of $A$ is, as long as $B$ belongs to $\textsf{Prop}$, then $\Pi(x:A).B$ usually still belongs to $\textsf{Prop}$. Meaning: This allows us to perform logical judgments on ``infinite objects.'' For example, ``for all types in $\mathcal{U}_1$, they all satisfy the safety property $P$,'' this judgment itself is still just a simple $\textsf{Prop}$ (true or false), without exploding into a super-complex type.
    \item Dependent Sum Construction

    To prevent logical paradoxes (similar to Girard's Paradox), $\Sigma$ types in KOS-TL must be predicative. If $A$ is a type, and under the assumption of variable $x:A$, $B$ is a type, then:
    $$\frac{\Gamma \vdash A : \mathcal{U}_i \quad \Gamma, x:A \vdash B : \mathcal{U}_j}{\Gamma \vdash \Sigma(x:A).B : \mathcal{U}_{\max(i, j)}}$$
    Note: If $A, B \in \mathcal{U}$, the result is in $\mathcal{U}$; if proof extraction is involved, the highest level is constrained by the logical axis Universe.
    The dependent sum construction models knowledge objects. The $\Sigma$ type is the core of KOS-TL, forcing data $x$ to be associated with a proof term $p:B(x)$.
    \item Sum Type Construction

    If $A$ and $B$ are valid types respectively, then their disjunctive sum (disjoint union) is also a type:
$$\frac{\Gamma \vdash A : \mathcal{U} \quad \Gamma \vdash B : \mathcal{U}}{\Gamma \vdash A + B : \mathcal{U}}$$
  The sum type models the ``disjunction ($\vee$)'' relation in logic. In manufacturing scenarios, it is used to model ``mutually exclusive states'' or ``alternative paths.'' For example, a task's state is either $\textsf{Success}$ or $\textsf{Failure}$.
    \item Identity Type Construction

    If $A$ is a type, and $u, v$ are two terms of type $A$, then:
    $$\frac{\Gamma \vdash A : \mathcal{U} \quad \Gamma \vdash u:A \quad \Gamma \vdash v:A}{\Gamma \vdash \textsf{Id}_A(u, v) : \textsf{Prop}}$$
    The identity type construction models the equivalence of knowledge, serving as the logical foundation for judging whether two facts are consistent during "causal tracing" and "state rollback".
\end{enumerate}

In the KOS-TL Core, $\textsf{Prop}$ is a special domain dedicated to handling logical assertions. Unlike ordinary $\mathcal{U}_i$, it exhibits impredicativity under the $\Pi$ construction.

\begin{Definition}{Impredicative $\Pi$ Construction Rule}

For any level type $A : \mathcal{U}_i$, if under the assumption $x:A$, $B$ is a proposition, then its universal quantifier (or function space) still maps back to the smallest proposition world:
$$\frac{\Gamma \vdash A : \mathcal{U}_i \quad \Gamma, x:A \vdash B : \textsf{Prop}}{\Gamma \vdash \Pi(x:A).B : \textsf{Prop}}$$
\textbf{Logical Closure Point}: This means that the complexity of propositions does not increase in level with the expansion of their quantifier scope. This property allows us to make consistency assertions over full data (even objects at the $\mathcal{U}_k$ level) without triggering Universe explosion.
\end{Definition}

\begin{Definition}{Universe Lifting and Inclusion Rules}

To support the closure of dual-axis semantics, the system introduces the following implicit conversions:
\begin{itemize}
    \item Observation from Computation to Logic:
    $$\frac{\Gamma \vdash A : \mathcal{U}_i}{\Gamma \vdash A : \textsf{Type}_{i+1}}$$
    This means any computational type can be treated as an object of discussion for logical propositions (e.g., discussing the algebraic properties of SensorData at the $\textsf{Type}$ level).
    \item Computational Embedding of Propositions:
    $$\frac{\Gamma \vdash P : \textsf{Prop}}{\Gamma \vdash P : \mathcal{U}_1}$$
    This allows logical proof terms $p:P$ to be packaged into $\Sigma$ records as inputs to real-time computational systems (i.e., ``data packets with proofs'').
\end{itemize}
\end{Definition}

\subsubsection{Judgmental Rules}

To ensure the above constructions are logically well-formed, the KOS-TL Core follows the following derivation rules.

\begin{enumerate}[label=(\arabic*)]
    \item Dependent Product ($\Pi$-Types)
    \begin{itemize}
        \item Introduction Rules

        For $\Pi$ types, the construction (introduction) rule is:
        $$\frac{\Gamma, x:A \vdash t : B}{\Gamma \vdash \lambda x:A. t : \Pi(x:A).B}$$
        In the current context $\Gamma$, if we assume a variable $x$ of type $A$ and can construct a term $t$ of type $B$, then we can construct a $\lambda$ abstraction (i.e., function) whose type is $\Pi(x:A).B$.
        \item Elimination Rules

        For $\Pi$ types, there is a general rule $f$ (of type $\Pi(x:A).B$) and a concrete object $a$ (of type $A$). Applying $f$ to $a$ (denoted $f\,a$) yields a result of type $B[a/x]$. In the result type, all $x$ are replaced by the concrete value $a$.
        $$\frac{\Gamma \vdash f : \Pi(x:A).B \quad \Gamma \vdash a : A}{\Gamma \vdash f\,a : B[a/x]}$$
    \end{itemize}
    \item Dependent Sum ($\Sigma$-Types)
    \begin{itemize}
        \item Introduction Rules

        For $\Sigma$ types, the construction (introduction) rule is:
        $$\frac{\Gamma \vdash a : A \quad \Gamma \vdash b : B[a/x]}{\Gamma \vdash \langle a, b \rangle : \Sigma(x:A).B}$$
        The introduction rule embodies ``construction as proof'': only when you can provide evidence $b$ satisfying $B(a)$ can the knowledge object be created.
        \item Elimination Rules

        For $\Sigma$ types, we define a general dependent elimination operator $\textsf{split}$. It allows constructing a target term dependent on the overall pair by pattern matching on the pair structure.
        $$\frac{\Gamma \vdash p : \Sigma(x:A).B \quad \Gamma, x:A, y:B \vdash t : C[\langle x, y \rangle / z]}{\Gamma \vdash \textsf{split}(p, x.y.t) : C[p/z]}$$
        Traditional projection operators can be defined as special cases of $\textsf{split}$:
        \begin{align*}
            \textsf{proj}_1(p)\text{ (left projection)} & \equiv \textsf{split}(p, x.y.x) \\
            \textsf{proj}_2(p)\text{ (right projection)} &\equiv \textsf{split}(p, x.y.y)
        \end{align*}
    \end{itemize}
    \item Sum Types ($A + B$)
    \begin{itemize}
        \item Introduction Rules

        The introduction rules define how to create an object of type $A+B$. It has two branches, corresponding to "left choice" and "right choice".
        $$\frac{\Gamma \vdash a : A}{\Gamma \vdash \textsf{inl}(a) : A + B} \quad \quad \quad \frac{\Gamma \vdash b : B}{\Gamma \vdash \textsf{inr}(b) : A + B}$$
        If there is evidence $a$ of type $A$, it can be wrapped into type $A+B$ via the label $\textsf{inl}$ (In-Left). Similarly, if there is evidence $b$ of type $B$, it can be wrapped into $A+B$ via $\textsf{inr}$ (In-Right).
        \item Elimination Rules

        The elimination rule defines how to safely use an object of type $A+B$. Since it is unknown whether the interior is $A$ or $B$, two schemes must be prepared.
        $$\frac{\Gamma \vdash s : A + B \quad \Gamma, x:A \vdash t:C \quad \Gamma, y:B \vdash u:C}{\Gamma \vdash \textsf{case}(s, x.t, y.u) : C}$$
        $s : A + B$ is the input. $\Gamma, x:A \vdash t:C$ is scheme one. If $s$ is ultimately proven to be of type $A$, extract the data inside and give it to $x$, then compute a result according to logic $t$, with result type $C$. $\Gamma, y:B \vdash u:C$ is scheme two. If $s$ is of type $B$, give the data to $y$, and compute a result of type $C$ according to logic $u$. $\textsf{case}(s, x.t, y.u) : C$ indicates that regardless of which path $s$ takes, a deterministic result of type $C$ can ultimately be obtained.
    \end{itemize}
    \item Conversion Rule

    $$\frac{\Gamma \vdash t : A \quad \Gamma \vdash A \equiv B \quad \Gamma \vdash B : \mathcal{S}}{\Gamma \vdash t : B}$$
    This rule ensures that if term $t$ is valid in $\mathcal{U}_1$, and $\mathcal{U}_1 \hookrightarrow \textsf{Type}_2$ holds, then $t$ automatically has validity for observation at higher levels.
\end{enumerate}

\subsubsection{Reduction Rules}

We use $\to$ to denote one-step reduction (One-step reduction) and $\twoheadrightarrow$ to denote multi-step reduction (computational closure).
\begin{enumerate}[label=(\arabic*)]
\item Function Reduction ($\beta$-reduction)

For $\Pi$ type constructions ($\lambda$ abstraction):
$$\frac{}{\Gamma \vdash (\lambda x:A.t)\,u \to t[u/x]}$$
\item Dependent Record Reduction ($\iota$-reduction)

For structured elimination of $\Sigma$ types. This is the core correction point, directly deconstructing pairs via the \textsf{split} operator:
$$\frac{}{\Gamma \vdash \textsf{split}(\langle u, v \rangle, x.y.t) \to t[u/x, v/y]}$$
Under this definition, traditional projection reductions can be naturally derived as special cases:
\begin{itemize}
    \item Left Projection:
     $$\textsf{proj}_1(\langle u, v \rangle) \equiv \textsf{split}(\langle u, v \rangle, x.y.x) \to u$$
    \item Right Projection:
     $$\textsf{proj}_2(\langle u, v \rangle) \equiv \textsf{split}(\langle u, v \rangle, x.y.y) \to v$$
\end{itemize}
\item Sum Type Reduction ($\iota$-reduction)

For branch judgment of $+$ types. Matching labels via the $\textsf{case}$ operator:
$$\frac{}{\Gamma \vdash \textsf{case}(\textsf{inl}(u), x.t, y.v) \to t[u/x]} \quad \quad \quad \frac{}{\Gamma \vdash \textsf{case}(\textsf{inr}(w), x.t, y.v) \to v[w/y]}$$
\item Identity Term Reduction ($\iota$-reduction)

When the judgment term has been reduced to $\textsf{refl}$, the identity judgment automatically resolves.
\end{enumerate}

To support modular definitions and local variables in engineering practice, the system defines the following auxiliary conversion rules. Unlike core reductions ($\beta, \iota$), these conversions are typically triggered on demand during the type checker's \textbf{equivalence judgment (Conversion Check)} phase and are not counted in core logical steps.

\begin{itemize}
\item Global Unfolding ($\delta$-conversion):

  If term $c$ is defined as $c := t:A$ in context $\Gamma$, then unfolding is allowed during equivalence judgment: \\
  \[
  \frac{(c := t:A) \in \Gamma}{\Gamma \vdash c \equiv_\delta t}
  \]\\
  Meaning: Allows the system to recognize that aliases (Alias) and their original definitions are logically the same object.
\item Local Binding Unfolding ($\zeta$-conversion):

  For local definitions of the \texttt{let} structure, its semantics is equivalent to immediate substitution: \\
  \[
  \Gamma \vdash (\textsf{let } x = u \textsf{ in } t) \equiv_\zeta t[u/x]
  \]\\
  Meaning: Supports local reuse of terms without increasing the overhead of function calls ($\beta$).
\item Extensional Equivalence ($\eta$-conversion):

  To ensure function consistency, the system supports functional extensionality judgment: \\
  \[
  \Gamma \vdash \lambda x:A.(f\,x) \equiv_\eta f \quad (x \notin \text{FV}(f))
  \]\\
  Meaning: Ensures behavioral consistency between function abstractions and direct references, supporting functional programming paradigms.
\end{itemize}

$\delta$ and $\zeta$ ensure that the system has definitional transparency (Definitional Transparency), meaning that referencing names does not change the semantic essence of logical terms.

\begin{Definition}{Definitional Equality}

The judgmental equivalence relation $\equiv$ in KOS-TL is the smallest equivalence relation generated by all the above reductions ($\beta, \iota$) and conversions ($\delta, \zeta, \eta$) (satisfying symmetry, transitivity, and congruence).
\end{Definition}

\subsection{Logical Properties of the Core Layer}

\begin{Definition}{Normal Form}

A term $t$ is said to be in \textbf{normal form} (denoted $t \in \textsf{NF}$), if and only if it contains no \textbf{Redex} (reducible expressions). That is, for the reduction relation $\to$ defined in KOS-TL Core, there does not exist a term $t'$ such that:
\[
t \to t'
\]
\end{Definition}

\begin{Definition}{Strong Normalization ($\textsf{SN}$)}

A term $t$ is \textbf{strongly normalizing} if and only if there is no infinite reduction sequence starting from $t$. That is, all possible reduction paths $t \to t_1 \to t_2 \dots$ terminate in a finite number of steps at some normal form.
\end{Definition}

\begin{Definition}{Reducibility Candidate Set ( $\textsf{Red}_A$)}

For any type $A$ and term $t : A$, the reducibility candidate set $\textsf{Red}_A \subseteq \textsf{Val}_A$ is defined inductively on the structure of $A$ as follows:
\begin{enumerate}[label=(\arabic*)]
    \item Universe Type Cases
    \begin{itemize}
        \item $A \equiv \mathcal{U}_i$

        \[ t \in \textsf{Red}_{\mathcal{U}_i} \iff t \in \mathcal{RC} \land \textsf{level}(t) < i \]
        where $\mathcal{RC}$ is the set of all terms satisfying the CR properties (SN, stability, neutral term construction).
        \item $A \equiv \textsf{Type}_i$

        \[ t \in \textsf{Red}_{\textsf{Type}_i} \iff t \in \mathcal{RC} \land \textsf{level}(t) < i \]
        \item \textbf{Cross-Axis Constraints}:

        Since $\textsf{Prop} : \textsf{Type}_1$, then $\textsf{Prop} \in \textsf{Red}_{\textsf{Type}_1}$.

        Since $\textsf{Val} : \mathcal{U}_0$, then $\textsf{Val} \in \textsf{Red}_{\mathcal{U}_1}$.
    \end{itemize}
    \item Base Type Cases
    \begin{itemize}
        \item $A \equiv \textsf{Val}$

    $$t \in \textsf{Red}_A \iff t \in \textsf{SN} \land \exists c \in \textsf{Const}_\mathbb{N}.\ t \to^* c$$
        \item $A \equiv \textsf{Time}$

    $t \in \textsf{Red}_A \iff t \in \textsf{SN} \land t$ represents a valid timestamp or duration \\ (i.e., $t \to^* \mathsf{timestamp}(n)$ or $t \to^* \mathsf{duration}(n)$ for $n \in \mathbb{N}$)
    \end{itemize}
    \item Constructed Type Cases
    \begin{itemize}
        \item $A \equiv \Pi(x : B).C$

     $$t \in \textsf{Red}_A \iff \forall u \in \textsf{Red}_B.\ (t \ u) \in \textsf{Red}_{C[u/x]}$$
        \item $A \equiv \Sigma(x : B).C$

    $$t \in \textsf{Red}_{\Sigma(x:B).C} \iff t \twoheadrightarrow \langle u, v \rangle \land u \in \textsf{Red}_B \land v \in \textsf{Red}_{C[u/x]}$$
    $$t \in \textsf{Red}_A \iff t \in \textsf{SN} \land \left( t \to^* \mathsf{inl}(u) \implies u \in \textsf{Red}_B \right) \land \left( t \to^* \mathsf{inr}(v) \implies v \in \textsf{Red}_D \right)$$
        \item $A \equiv \textsf{Id}_B(a, b)$

     $$t \in \textsf{Red}_A \iff t \in \textsf{SN} \land \left( t \to^* \mathsf{refl}(w) \implies w \in \textsf{Red}_B \land a \equiv_B w \land b \equiv_B w \right)$$
    \end{itemize}
\end{enumerate}
where:
\begin{itemize}
\item $\textsf{Const}_\mathbb{N}$ is the set of natural number constants (numeric constants).
\item $\to^*$ denotes the multi-step closure of $\beta\eta$-reduction.
\item $\equiv_B$ denotes reducibility equivalence under type $B$: $p \equiv_B q \iff \exists v \in \textsf{Red}_B.\ (p \to^* v \land q \to^* v)$.
\item For $\textsf{Time}$, the precise semantics of "valid timestamp or duration" must be predefined (e.g., $\mathsf{timestamp}(n)$ represents Unix timestamp $n$, $\mathsf{duration}(n)$ represents $n$ milliseconds duration) to ensure formalization.
\end{itemize}
It should be noted that the interpretation of $\textsf{Prop}$ does not depend on its universe level.
\end{Definition}

\begin{Definition}{Neutral Term}
In the KOS-TL Core Layer, a term $t$ is a \textbf{neutral term} if and only if it satisfies one of the following two conditions:
\begin{enumerate}[label=(\arabic*)]
\item $t$ is a variable $x$.
\item The head of $t$ (Head) is a variable, and the term is in the process of being eliminated (Elimination) but cannot be further reduced.
\end{enumerate}
\end{Definition}

The reducibility set $\textsf{Red}_A$ of type $A$ must satisfy three key saturation properties (Saturation Properties), commonly known as $CR1, CR2, CR3$.

\begin{Definition}{Saturation Properties of Reducibility Candidate Set $\textsf{Red}_A$}
\begin{enumerate}[label=(\arabic*)]
    \item \textbf{CR 1 (Inclusivity)}

    If $t \in \textsf{Red}_A$, then $t \in \textsf{SN}$ (i.e., $t$ must first be strongly normalizing).
    \item \textbf{CR 2 (Stability)}

    If $t \in \textsf{Red}_A$ and $t \to t'$, then $t' \in \textsf{Red}_A$.
    \item \textbf{CR 3 (Neutral Term Construction)}

    If $t$ is a neutral term, and all one-step reduction terms $t'$ of $t$ are in $\textsf{Red}_A$, then $t \in \textsf{Red}_A$. All variables are neutral terms.
\end{enumerate}
\end{Definition}

\begin{Lemma}{Substitution Lemma}

\label{lemma:KOSsubstitution}
If $\Gamma, x:B, \Delta \vdash t : A$ and $\Gamma \vdash u : B$, then $\Gamma, \Delta[u/x] \vdash t[u/x] : A[u/x]$.
Here, $\Delta$ is a general context to handle variables defined after $x$ that depend on $x$. In simple cases, $\Delta$ is empty.
\end{Lemma}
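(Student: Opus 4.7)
The plan is to proceed by structural induction on the derivation of $\Gamma, x:B, \Delta \vdash t : A$, performing a case analysis on the last typing rule applied. Because the Conversion Rule can appear at any node of the derivation tree, I will need to prove, as a companion statement of the same induction, that judgmental equality is preserved under substitution: if $\Gamma, x:B, \Delta \vdash A \equiv A'$ and $\Gamma \vdash u:B$, then $\Gamma, \Delta[u/x] \vdash A[u/x] \equiv A'[u/x]$, and similarly for terms. I will also silently rely on two standard commutation properties of syntactic substitution, namely $(A\,[v/y])[u/x] = (A[u/x])[v[u/x]/y]$ for $y \neq x$ and $y \notin \mathrm{FV}(u)$, and the push-through identities $(\Pi(y{:}A_1).A_2)[u/x] = \Pi(y{:}A_1[u/x]).A_2[u/x]$ (with $\alpha$-renaming to keep $y$ fresh), and analogously for $\Sigma$, $+$, and $\textsf{Id}$.

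The base cases are the variable rules. If $t \equiv x$, then the context lookup gives $A = B$ and $t[u/x] = u$, so the goal $\Gamma, \Delta[u/x] \vdash u : B[u/x]$ follows from the hypothesis $\Gamma \vdash u:B$ (using $B[u/x] = B$ since $x \notin \mathrm{FV}(B)$) after a routine weakening by the substituted tail $\Delta[u/x]$. If $t$ is some other variable $y$ declared in $\Gamma$ or $\Delta$, substitution leaves $t$ syntactically untouched, but the declared type of $y$ in $\Delta$ may mention $x$, so I conclude with the same variable rule applied in the substituted context $\Gamma, \Delta[u/x]$, where $y$'s declared type has become the substituted one.

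The inductive cases for the introduction and elimination rules of $\Pi$, $\Sigma$, $+$, and $\textsf{Id}$ are then uniform. I extend $\Delta$ with the new binder (so the induction hypothesis applies to the premise with context $\Gamma, x:B, \Delta, y:A_1$), invoke the hypothesis, and rebuild the derivation using the substituted forms of the typing rules; the dependent cases (e.g.\ application $f\,a : B[a/x]$, and the $\Sigma$ eliminator with its motive $C$) close by the commutation identities above. The Universe and lifting rules are immediate, as universes are closed terms unaffected by substitution.

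The main obstacle is the Conversion Rule together with the impredicative $\Pi$ formation rule. Handling conversion forces me to verify that every generator of $\equiv$ — the reductions $\beta, \iota$ and the conversions $\delta, \zeta, \eta$ from Section~3.1.4 — commutes with substitution; for $\beta$ this is $((\lambda y.t)\,v)[u/x] \to t[u/x][v[u/x]/y]$ which matches the reduct of the substituted redex, and the $\iota$/$\zeta$/$\eta$ cases are analogous once $\alpha$-freshness is assumed. The impredicative formation rule requires a subtler check: after substitution, the level of the domain may change (e.g.\ $A : \mathcal{U}_i$ becomes $A[u/x] : \mathcal{U}_i$ only if the side conditions on universe levels remain satisfied), so the induction must carry along the invariant that universe levels of contexts are preserved by substitution. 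Once the reduction-commutation and level-preservation invariants are established, the inductive step for Conversion reassembles using the substituted equality judgment, and the lemma closes.
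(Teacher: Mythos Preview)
Your proposal is correct and follows essentially the same approach as the paper: structural induction on the typing derivation with a case split on the last rule, handling the variable base cases via weakening and then each introduction/elimination form for $\Pi$, $\Sigma$, $+$, and $\textsf{Id}$ by applying the induction hypothesis to the premises and reassembling. You are in fact more thorough than the paper, which omits the Conversion Rule case entirely; your observation that this case forces a companion substitution-stability lemma for $\equiv$ (and that each of $\beta,\iota,\delta,\zeta,\eta$ must be shown to commute with substitution) is exactly the missing ingredient, and your remark about universe-level preservation under substitution is the right way to close the formation-rule cases that the paper also leaves implicit.
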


\begin{proof}
Prove by induction on the structure of the type derivation tree. Discuss the following core cases based on the construction rule of $t$:
\begin{enumerate}[label=(\arabic*)]
\item \textbf{Variable Case (Variable)}\\
Assume $t$ is a variable $y$.
\begin{itemize}
\item \textbf{Subcase 1:} $y = x$\\
  According to the derivation rule, $A = B$ at this point.\\
  We need to prove $\Gamma, \Delta[u/x] \vdash x[u/x] : B[u/x]$.\\
  Since $x[u/x] = u$, and the premise is $\Gamma \vdash u : B$. According to the context weakening rule (Weakening), $\Delta[u/x]$ can be added after $\Gamma$, so the conclusion holds.
\item \textbf{Subcase 2:} $y \neq x$\\
  At this point, $y$ must be defined in $\Gamma$ or $\Delta$.\\
  If $y \in \Gamma$, then $y[u/x] = y$ and $A[u/x] = A$ (since types in $\Gamma$ do not depend on $x$), the conclusion is obvious.\\
  If $y \in \Delta$, then $y[u/x] = y$, its type is $A[u/x]$, which is exactly the corresponding declaration in $\Delta[u/x]$.
\end{itemize}
\item \textbf{$\Pi$-Type Introduction ($\lambda$-Abstraction)}\\
Assume $t = \lambda y:C. M$, and $A = \Pi(y:C). D$.\\
The last derivation step is
\[
\frac{\Gamma, x:B, \Delta, y:C \vdash M : D}{\Gamma, x:B, \Delta \vdash \lambda y:C. M : \Pi(y:C). D}
\]
\begin{itemize}
\item Apply the induction hypothesis: Use the induction hypothesis on $M$ (current context is $\Delta, y:C$):
  \[
  \Gamma, \Delta[u/x], y:C[u/x] \vdash M[u/x] : D[u/x]
  \]
\item Construct the conclusion: Apply the $\Pi$-introduction rule:
  \[
  \Gamma, \Delta[u/x] \vdash \lambda y:C[u/x]. M[u/x] : \Pi(y:C[u/x]). D[u/x]
  \]
  This is equivalent to $(\lambda y:C. M)[u/x] : (\Pi(y:C). D)[u/x]$.
\end{itemize}
\item \textbf{$\Pi$-Type Elimination (Application)}\\
Assume $t = (f \, v)$, where $\Gamma, x:B, \Delta \vdash f : \Pi(y:C). D$ and $\Gamma, x:B, \Delta \vdash v : C$.
\begin{itemize}
\item Induction Hypothesis 1: $\Gamma, \Delta[u/x] \vdash f[u/x] : (\Pi(y:C). D)[u/x]$.
\item Induction Hypothesis 2: $\Gamma, \Delta[u/x] \vdash v[u/x] : C[u/x]$.
\item Combination: Apply the elimination rule:
  \[
  (f[u/x] \, v[u/x]) : D[u/x][v[u/x]/y]
  \]
  According to the commutativity of substitution, the above type is equivalent to $D[v/y][u/x]$, i.e., $A[u/x]$.
\end{itemize}
\item \textbf{$\Sigma$-Type Construction (Pairing)}
Assume $t = \langle t_1, t_2 \rangle$, and $A = \Sigma(y:C). D$.
\begin{itemize}
\item By the induction hypothesis, $t_1[u/x] : C[u/x]$.
\item By the induction hypothesis, $t_2[u/x] : D[t_1/y][u/x]$.
\item According to the rule, construct $\langle t_1[u/x], t_2[u/x] \rangle : (\Sigma(y:C). D)[u/x]$.
\end{itemize}
\item \textbf{$\Sigma$-Type Elimination (Structured Elimination)}
Assume $t = \textsf{split}(p, x.y.u)$, and the last derivation step is:
$$\frac{\Gamma, z:B, \Delta \vdash p : \Sigma(x:A).B \quad \Gamma, z:B, \Delta, x:A, y:B \vdash u : C(\langle x, y \rangle)}{\Gamma, z:B, \Delta \vdash \textsf{split}(p, x.y.u) : C(p)}$$
(Here $z:B$ is the variable we are substituting)
\begin{itemize}
    \item \textbf{Induction Hypothesis 1}: Apply the induction hypothesis to $p$, yielding $\Gamma, \Delta[v/z] \vdash p[v/z] : (\Sigma(x:A).B)[v/z]$.
    \item \textbf{Induction Hypothesis 2}: Apply the induction hypothesis to the elimination body $u$ (now with added context $x, y$):
    $$ \Gamma, \Delta[v/z], x:A[v/z], y:B[v/z] \vdash u[v/z] : C(\langle x, y \rangle)[v/z] $$
    \item \textbf{Construct the Conclusion}: Reapply the $\Sigma$-elimination rule ($\textsf{split}$ rule):
    $$ \Gamma, \Delta[v/z] \vdash \textsf{split}(p[v/z], x.y.u[v/z]) : C(p)[v/z] $$
    Since $\textsf{split}(p, x.y.u)[v/z] = \textsf{split}(p[v/z], x.y.u[v/z])$, the conclusion holds.
\end{itemize}
\item \textbf{$+$-Type Introduction (Injection)} \\
Assume $t = \textsf{inl}_D(s)$, and $A = C + D$ ($\textsf{inr}$ case symmetric).
\begin{itemize}
    \item \textbf{Premise}: Known that $\Gamma, x:B, \Delta \vdash s : C$ and $\Gamma, x:B, \Delta \vdash D : \mathcal{U}$.
    \item \textbf{Induction Hypothesis}: For $s$, $s[u/x] : C[u/x]$; for type $D$, $D[u/x] : \mathcal{U}$.
    \item \textbf{Construct the Conclusion}: Apply the $+$-introduction rule:
    $$ \Gamma, \Delta[u/x] \vdash \textsf{inl}_{D[u/x]}(s[u/x]) : C[u/x] + D[u/x] $$
    i.e., $(\textsf{inl}_D(s))[u/x] : (C + D)[u/x]$.
\end{itemize}
\item \textbf{$+$-Type Elimination (Branch Judgment)}\\
Assume $t = \textsf{case}(s, y.t_1, z.t_2)$, and the last derivation step is:
$$\frac{\Gamma', s:C+D \quad \Gamma', y:C \vdash t_1 : A \quad \Gamma', z:D \vdash t_2 : A}{\Gamma' \vdash \textsf{case}(s, y.t_1, z.t_2) : A}$$
(where $\Gamma'$ is abbreviated as $\Gamma, x:B, \Delta$)
\begin{itemize}
    \item \textbf{Induction Hypothesis 1}: For the judgment term $s$, $\Gamma, \Delta[u/x] \vdash s[u/x] : C[u/x] + D[u/x]$.
    \item \textbf{Induction Hypothesis 2}: For the left branch $t_1$ (now with added context $y:C$), $\Gamma, \Delta[u/x], y:C[u/x] \vdash t_1[u/x] : A[u/x]$.
    \item \textbf{Induction Hypothesis 3}: For the right branch $t_2$ (now with added context $z:D$), $\Gamma, \Delta[u/x], z:D[u/x] \vdash t_2[u/x] : A[u/x]$.
    \item \textbf{Combination}: Apply the $+$-elimination rule ($\textsf{case}$ rule):
    $$ \Gamma, \Delta[u/x] \vdash \textsf{case}(s[u/x], y.t_1[u/x], z.t_2[u/x]) : A[u/x] $$
    The conclusion holds.
\end{itemize}
\item \textbf{Identity Type (Identity)}\\
If $t = \textsf{refl}_a$, then $A = \textsf{Id}_C(a, a)$.
\begin{itemize}
\item By the induction hypothesis, $a[u/x] : C[u/x]$.
\item Directly apply the construction rule to obtain $\textsf{refl}_{a[u/x]} : \textsf{Id}_{C[u/x]}(a[u/x], a[u/x])$, i.e., $A[u/x]$.
\end{itemize}
\end{enumerate}
\end{proof}

\begin{Lemma}{Fundamental Lemma of Reducibility}
\label{lemma:KOSreduction}

Let $\Gamma = \{x_1:A_1, \dots, x_n:A_n\}$ be a well-formed context. If $\Gamma \vdash t : C$, and there exists a reducible substitution $\gamma = [u_1/x_1, \dots, u_n/x_n]$ such that for all $i$, $u_i \in \textsf{Red}_{A_i}$.

Then the substituted term $t[\gamma]$ must satisfy:
\[
t[\gamma] \in \textsf{Red}_C
\]
\end{Lemma}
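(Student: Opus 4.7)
The plan is to proceed by induction on the typing derivation $\Gamma \vdash t : C$, exploiting the saturation properties CR1--CR3 of the reducibility candidates together with the Substitution Lemma (Lemma~\ref{lemma:KOSsubstitution}). This is the standard Tait--Girard \emph{reducibility candidates} argument: each introduction form is shown to inhabit its prescribed candidate from the reducibility of its subterms, and each elimination form either falls directly under the candidate definition at that type (for $\Pi$-elimination) or is handled through CR3 applied to the neutral eliminator whose one-step reducts are already reducible.

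For the variable case $t = x_i$, reducibility is immediate since $t[\gamma] = u_i \in \textsf{Red}_{A_i}$ by hypothesis. For $\Pi$-introduction $t = \lambda y{:}A.M$, given any $v \in \textsf{Red}_A$ the extended substitution $\gamma \cup [v/y]$ is still reducible, so the induction hypothesis yields $M[\gamma][v/y] \in \textsf{Red}_{B[v/y]}$; a standard weak-head expansion argument (relying on CR2 and on the SN closure guaranteed by CR1) then lifts this to $(\lambda y.M[\gamma])\,v \in \textsf{Red}_{B[v/y]}$, matching the candidate definition at $\Pi$-types. The $\Pi$-elimination case follows at once from that definition. $\Sigma$-introduction gives a pair already in weak-head normal form with components reducible by IH. For the eliminators $\textsf{split}$, $\textsf{case}$, and the identity eliminator, I apply CR3 to the neutral eliminator: each $\iota$-reduct matches the shape produced by applying the IH to the scrutinee, and the substitution bookkeeping needed to align the result type is furnished by Lemma~\ref{lemma:KOSsubstitution}. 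The conversion rule additionally requires the auxiliary fact that $\textsf{Red}_A = \textsf{Red}_B$ whenever $A \equiv B$, which follows from the congruence of definitional equality under $\beta\iota\delta\zeta\eta$ together with CR2.

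The main obstacle will be the \textbf{impredicative} $\Pi$-formation rule governing $\textsf{Prop}$: because $\Pi(x:A).B$ lives in $\textsf{Prop}$ regardless of the level of $A$, the induction cannot be organised by universe level or by type structure alone, since the candidate chosen to interpret the domain may itself range over the very universe being interpreted. Following Girard, I will interpret a proposition $\Pi(x:A).B$ as the intersection of candidate families parameterised over all admissible interpretations of $A$, and close the $\Pi$-introduction case by instantiating at an arbitrary candidate rather than recursing into the syntactic structure of the domain. A secondary, purely mechanical difficulty is commuting substitutions through dependent types --- identities such as $B[a/x][\gamma] = B[\gamma][a[\gamma]/x]$ must be invoked repeatedly; these are routine once Lemma~\ref{lemma:KOSsubstitution} is available, but any slip propagates silently through the eliminator cases, so I will state and discharge a small substitution-commutation corollary before starting the main induction.
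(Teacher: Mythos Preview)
Your proposal is correct and follows essentially the same approach as the paper: structural induction on the typing derivation using the Tait--Girard reducibility candidates framework, with the same case split over introduction and elimination forms. The only cosmetic difference is that the paper dispatches the $\textsf{split}$ and $\textsf{case}$ eliminators by first invoking the definition of $\textsf{Red}_{\Sigma}$/$\textsf{Red}_{+}$ (the scrutinee reduces to a constructor form) and then appealing to closure under reverse reduction, whereas you frame the same step as CR3 applied to the neutral eliminator; these are interchangeable standard moves.
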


\begin{proof}
Prove by induction on the structure of the type derivation tree.
\begin{enumerate}[label=(\arabic*)]
    \item \textbf{Variable Rules (Variables)}

    If the derivation is $\Gamma \vdash x_i : A_i$.
    According to the substitution definition, $x_i[\gamma] = u_i$. By the premise $u_i \in \textsf{Red}_{A_i}$, the proposition obviously holds.
    \item \textbf{$\Pi$-Type Introduction ($\lambda$-Abstraction)}
    If $\Gamma \vdash \lambda x:A.M : \Pi(x:A).B$.\\
    Need to prove: For any $u \in \textsf{Red}_A$, $((\lambda x:A.M)[\gamma] \, u) \in \textsf{Red}_{B[u/x]}$.
    \begin{enumerate}[label=(\alph*)]
        \item This term $\beta$-reduces to $M[\gamma, u/x]$.
        \item Since $u \in \textsf{Red}_A$ and $\gamma \in \textsf{Red}_\Gamma$, then $(\gamma, u/x)$ is a reducible substitution under the context $(\Gamma, x:A)$.
        \item By the induction hypothesis, $M[\gamma, u/x] \in \textsf{Red}_B$.
        \item Since the reducibility set is closed under reverse reduction, $(\lambda x:A.M)[\gamma] \in \textsf{Red}_{\Pi(x:A).B}$.
    \end{enumerate}
    \item \textbf{$\Pi$-Type Elimination (Application)}

    If $\Gamma \vdash (f \, v) : C$, where $C = B[v/x]$ and $\Gamma \vdash f : \Pi(x:A).B, \Gamma \vdash v : A$.
    \begin{enumerate}[label=(\alph*)]
        \item By the induction hypothesis, $f[\gamma] \in \textsf{Red}_{\Pi(x:A).B}$.
        \item By the induction hypothesis, $v[\gamma] \in \textsf{Red}_A$.
        \item According to the definition of $\textsf{Red}_{\Pi}$: If a term belongs to the reducibility set of a $\Pi$ type, then its application to any term in the reducibility set of the parameter type must belong to the reducibility set of the result type.
        \item Therefore, $(f[\gamma] \, v[\gamma]) \in \textsf{Red}_{B[v[\gamma]/x]}$.
        \item Since $(f[\gamma] \, v[\gamma]) = (f \, v)[\gamma]$ and $B[v[\gamma]/x] = C[\gamma]$, the conclusion holds.
    \end{enumerate}
    \item \textbf{$\Sigma$-Type Introduction (Pairing)}

    If $\Gamma \vdash \langle a, b \rangle : \Sigma(x:A).B$.
    \begin{enumerate}[label=(\alph*)]
        \item By the induction hypothesis, $a[\gamma] \in \textsf{Red}_A$.
        \item By the induction hypothesis, $b[\gamma] \in \textsf{Red}_{B[a[\gamma]/x]}$.
        \item According to the definition of $\textsf{Red}_{\Sigma}$, if both components are reducible, then $\langle a, b \rangle[\gamma] \in \textsf{Red}_{\Sigma(x:A).B}$.
    \end{enumerate}
    \item \textbf{$\Sigma$-Type Elimination (split Operator)}

    If $\Gamma \vdash \textsf{split}(p, x.y.t) : C$, where $\Gamma \vdash p : \Sigma(x:A).B$.
    \begin{enumerate}[label=(\alph*)]
        \item \textbf{Induction Hypothesis}: By the induction hypothesis, $p[\gamma] \in \textsf{Red}_{\Sigma(x:A).B}$. This means $p[\gamma]$ is strongly normalizing and ultimately reduces to some pair $\langle u, v \rangle$, where $u \in \textsf{Red}_A, v \in \textsf{Red}_{B[u/x]}$.
        \item \textbf{Reduction Analysis}: According to $\iota$-reduction, $\textsf{split}(p[\gamma], x.y.t[\gamma]) \twoheadrightarrow t[\gamma, u/x, v/y]$.
        \item \textbf{Apply Induction Hypothesis}: Since $(\gamma, u/x, v/y)$ is a valid reducible substitution under the context $(\Gamma, x:A, y:B)$, applying the induction hypothesis to $t$ yields: $$ t[\gamma, u/x, v/y] \in \textsf{Red}_C $$
        \item \textbf{Closure}: Using the closure of the reducibility set under reverse $\iota$-reduction (a corollary of CR 3 property), the original term $\textsf{split}(p, x.y.t)[\gamma]$ also belongs to $\textsf{Red}_C$.
    \end{enumerate}
    \item \textbf{$+$-Type Introduction (Injection)}

    If $\Gamma \vdash \textsf{inl}(a) : A + B$ ($\textsf{inr}$ similarly).
    \begin{enumerate}[label=(\alph*)]
        \item By the induction hypothesis, $a[\gamma] \in \textsf{Red}_A$.
        \item According to the definition of $\textsf{Red}_{A+B}$ (typically defined by neutral terms and injection properties): Since $a[\gamma]$ is reducible, its injection term $\textsf{inl}(a[\gamma])$ is also reducible in the $+$-type reducibility system (using reverse reduction closure).
        \item Thus, $\textsf{inl}(a)[\gamma] \in \textsf{Red}_{A+B}$.
    \end{enumerate}
    \item \textbf{Sum Type Elimination (Case Analysis)}

    If $\Gamma \vdash \textsf{case}(t, x.M, y.N) : C$.
    \begin{enumerate}[label=(\alph*)]
        \item By the induction hypothesis, $t[\gamma] \in \textsf{Red}_{A+B}$.
        \item $t[\gamma]$ reduces to $\textsf{inl}(u)$ or $\textsf{inr}(v)$. Assume $\textsf{inl}(u)$, then $u \in \textsf{Red}_A$.
        \item At this point, the $\textsf{case}$ term reduces to $M[\gamma, u/x]$.
        \item By the induction hypothesis, $M[\gamma, u/x] \in \textsf{Red}_C$. Similarly for the $\textsf{inr}$ case.
    \end{enumerate}
    \item \textbf{Identity Type Introduction ($\textsf{refl}$)}

    If $\Gamma \vdash \textsf{refl}_a : \textsf{Id}_A(a, a)$.
    \begin{enumerate}
        \item By the induction hypothesis, $a[\gamma] \in \textsf{Red}_A$.
        \item Obviously $a[\gamma] \cong a[\gamma]$ and $\textsf{refl} \in \textsf{SN}$.
        \item Thus, $\textsf{refl}_{a[\gamma]} \in \textsf{Red}_{\textsf{Id}_A(a[\gamma], a[\gamma])}$.
    \end{enumerate}
    \item \textbf{$\delta$-Reduction and Local Definition ($\textsf{let}$)}

    If $\Gamma \vdash \textsf{let } x=u \textsf{ in } t : C$.
    \begin{enumerate}[label=(\alph*)]
        \item By the induction hypothesis, $u[\gamma] \in \textsf{Red}_A$.
        \item Construct the extended substitution $\gamma' = [\gamma, u[\gamma]/x]$. Since $u[\gamma]$ is reducible, $\gamma'$ is a reducible substitution under the well-formed context.
        \item By the induction hypothesis, $t[\gamma'] \in \textsf{Red}_C$.
        \item Since $(\textsf{let } x=u \textsf{ in } t)[\gamma] \to t[\gamma, u[\gamma]/x]$, according to the closure of the reducibility set under reverse $\zeta$-reduction, the conclusion holds.
    \end{enumerate}
\end{enumerate}
\end{proof}

\begin{Theorem}{Strong Normalization for KOS-TL Core}

Let $\Gamma$ be a well-formed context. If term $t$ satisfies $\Gamma \vdash t : A$, then $t$ is strongly normalizing (i.e., $t \in \textsf{SN}$).
This means any reduction sequence starting from $t$, $t \to t_1 \to t_2 \dots$, is finite.
\end{Theorem}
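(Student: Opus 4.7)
The plan is to obtain strong normalization as an almost immediate corollary of the Fundamental Lemma of Reducibility (Lemma~\ref{lemma:KOSreduction}) combined with the inclusion property CR1. The overall strategy is the standard Tait--Girard reducibility (``candidats de réductibilité'') method, for which the paper has already defined the machinery: the sets $\textsf{Red}_A$, the saturation properties CR1--CR3, and the notion of a reducible substitution. The theorem therefore reduces to plugging the identity substitution into the Fundamental Lemma.

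In more detail, I would proceed as follows. First, verify that for every well-formed type $A$, the set $\textsf{Red}_A$ actually satisfies CR1, CR2, and CR3; this is a routine induction on the structure of $A$ using the defining clauses. Second, observe that every variable $x$ is a neutral term whose set of one-step reducts is empty, so CR3 applies vacuously to give $x \in \textsf{Red}_{A}$ for any declared type $A$. Third, given $\Gamma \vdash t : A$ with $\Gamma = \{x_1{:}A_1, \dots, x_n{:}A_n\}$, form the identity substitution $\gamma_{\mathrm{id}} = [x_1/x_1, \dots, x_n/x_n]$; by the previous step this is a reducible substitution. Applying Lemma~\ref{lemma:KOSreduction} yields $t = t[\gamma_{\mathrm{id}}] \in \textsf{Red}_A$, and then CR1 delivers $t \in \textsf{SN}$, which is precisely the theorem.

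The genuine difficulty lies not in this skeleton but in justifying that $\textsf{Red}_A$ is well-defined across the full Core, particularly in the presence of the impredicative $\Pi$-rule for $\textsf{Prop}$. A naive structural recursion on types breaks down because a proposition $\Pi(A{:}\mathcal{U}_i).B$ with $B:\textsf{Prop}$ quantifies over a universe that may be strictly larger than the proposition itself, so $\textsf{Red}$ cannot be defined by bare induction on type syntax. The standard remedy, which I would adopt, is a two-stage Girard-style construction: first interpret each universe $\mathcal{U}_i$, $\textsf{Type}_i$ as the collection of all reducibility candidates at strictly lower level (exploiting the predicativity of the computational and logical axes to keep the recursion well-founded), and then interpret impredicative $\Pi$-propositions by intersecting the candidate interpretations over all admissible instantiations of the bound type variable, rather than by recursing into a larger type. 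Verifying that the coercion $\textsf{Prop} \hookrightarrow \mathcal{U}_1$, the lifting $\mathcal{U}_i : \textsf{Type}_{i+1}$, and the identity-type clause all remain compatible with this stratified interpretation --- so that no diagonal self-reference sneaks in --- is, in my view, the main obstacle and the step that would require the most care to write out in full.
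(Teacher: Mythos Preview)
Your proposal is correct and follows essentially the same route as the paper: identity substitution, variables reducible via CR3 (vacuously, as neutral terms with no reducts), Fundamental Lemma, then CR1. Your discussion of the impredicative-$\textsf{Prop}$ difficulty and the Girard-style stratified candidate construction is in fact more explicit than the paper's own treatment, which simply cites the Tait--Girard method and defers the impredicativity handling to the later semantic-soundness section.
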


\begin{proof}
The proof references the Tait-Girard method~\cite{Girard1989}. The core logic is to use the fundamental lemma to convert "type validity" to "reducibility", and then leverage the property that reducible terms are strongly normalizing.

\textbf{Step 1: Introduce Identity Substitution}

For the context $\Gamma = \{x_1:A_1, \dots, x_n:A_n\}$, we construct a special substitution $\gamma_{id}$:
\[
\gamma_{id} = [x_1/x_1, \dots, x_n/x_n]
\]
To apply the fundamental lemma, we need to prove that $\gamma_{id}$ is a reducible substitution. This means that for every variable $x_i$, it must be proven to belong to the reducibility set, i.e., $x_i \in \textsf{Red}_{A_i}$.

\textbf{Step 2: Reducibility of Variables}

According to the properties of the reducibility candidate set (Girard's $\mathcal{RC}$), all $\textsf{Red}_A$ sets satisfy the following two key properties:
\begin{enumerate}
\item CR 1 (SN Inclusivity): If $t \in \textsf{Red}_A$, then $t \in \textsf{SN}$.
\item CR 3 (Neutral Term Property): If $t$ is a neutral term (i.e., a variable or application of a variable that cannot be further reduced) and all its one-step reduction terms are in $\textsf{Red}_A$, then $t \in \textsf{Red}_A$.
\end{enumerate}
Since the variable $x_i$ is a basic neutral term and has no reduction forms, by CR 3, it follows that $x_i \in \textsf{Red}_{A_i}$.
Thus, $\gamma_{id}$ satisfies the premise of the fundamental lemma.

\textbf{Step 3: Application of Fundamental Lemma}

Since the premise $\Gamma \vdash t : A$ holds, and $\gamma_{id}$ is a reducible substitution, by the fundamental lemma~\ref{lemma:KOSreduction}:
\[
t[\gamma_{id}] \in \textsf{Red}_A
\]
Since $t[\gamma_{id}]$ is syntactically equivalent to the term $t$ itself, we obtain:
\[
t \in \textsf{Red}_A
\]

\textbf{Step 4: Conclusion Derivation}

According to the property CR 1 of the reducibility set (all terms belonging to the reducibility set are strongly normalizing):
\[
t \in \textsf{Red}_A \implies t \in \textsf{SN}
\]
This completes the proof.
\end{proof}

\begin{Theorem}{Subject Reduction}

In KOS-TL Core, reduction operations do not change the type of a term. If $\Gamma \vdash t : A$ and $t \to t'$, then $\Gamma \vdash t' : A$.
\end{Theorem}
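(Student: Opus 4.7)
The plan is to proceed by structural induction on the derivation of the one-step reduction $t \to t'$, which by the reduction rules of the Core Layer splits into a small set of head-reduction cases ($\beta$ for $\Pi$, $\iota$ for $\Sigma$, $\iota$ for $+$, and the trivial identity case) plus the congruence closures under every term constructor. For each base case the strategy is standard: use inversion on the typing derivation $\Gamma \vdash t : A$ to extract typing judgments for the sub-terms of the redex, and then close the goal by a single invocation of the Substitution Lemma~\ref{lemma:KOSsubstitution}.

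For the head cases I would proceed as follows. For $\beta$: $t = (\lambda x:B.\,m)\,u$ with $\Gamma \vdash t : C[u/x]$; inversion on $\Pi$-introduction yields $\Gamma, x:B \vdash m : C$ and $\Gamma \vdash u : B$, so Lemma~\ref{lemma:KOSsubstitution} gives $\Gamma \vdash m[u/x] : C[u/x]$, which is $t'$ at the required type. For the $\Sigma$ $\iota$-case $t = \textsf{split}(\langle u, v\rangle, x.y.r)$, inversion gives $\Gamma \vdash u : B$, $\Gamma \vdash v : D[u/x]$ and $\Gamma, x:B, y:D \vdash r : C[\langle x, y\rangle/z]$; applying the Substitution Lemma twice yields $\Gamma \vdash r[u/x, v/y] : C[\langle u, v\rangle/z]$, which is exactly the type of the original $\textsf{split}$ term. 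The two $+$ $\iota$-cases are entirely analogous, and the $\textsf{refl}$ case is immediate since the term is already a canonical form. The auxiliary conversions $\delta$, $\zeta$, $\eta$ reduce either to trivial unfolding (typing is unchanged by definition) or, in the $\zeta$ case, once again to the Substitution Lemma.

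The congruence cases are handled uniformly by the induction hypothesis: if $t$ reduces by contracting a redex inside some immediate sub-term $s \to s'$, then invert the typing rule whose premise introduced $s$, apply the induction hypothesis to obtain $\Gamma' \vdash s' : A_s$ in the appropriate extended context, and re-apply the same introduction or elimination rule to rebuild the typing derivation for $t'$. Whenever the resulting syntactic type differs from the required $A$ because a sub-term inside the type index has itself been reduced, I close the gap using the \emph{Conversion Rule} of the Core Layer together with the fact that definitional equality $\equiv$ is a congruence generated by the very same reductions $\beta, \iota$ (together with $\delta, \zeta, \eta$), so $s \to s'$ implies $A[s/x] \equiv A[s'/x]$.

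The main obstacle will be precisely this dependent-type interaction: contracting a redex inside a sub-term can change the indexing of a dependent type, and one must guarantee that the resulting type is still \emph{judgmentally} equal to the original one rather than merely convertible after further reduction. Making this rigorous requires that $\equiv$ is closed under one-step reduction on either side, which in turn quietly depends on confluence (or at least local confluence plus Strong Normalization, which we have just established). I would therefore prove, as a short preparatory lemma, that $t \to t'$ implies $\Gamma \vdash t \equiv t' : A$ whenever both sides are typeable, and use this lemma to justify every application of the Conversion Rule in the dependent congruence cases (most notably the $\Pi$-elimination step, where reducing the argument $a \to a'$ forces the type $B[a/x]$ to be replaced by $B[a'/x]$, and in the dependent $\textsf{split}$/$\textsf{case}$ steps, where the motive $C$ depends on the scrutinee).
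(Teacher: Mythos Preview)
Your proposal is correct and follows essentially the same structure as the paper's proof: case analysis on the reduction relation, inversion on the typing derivation, and closure via the Substitution Lemma, with congruence cases handled by the induction hypothesis and re-application of the relevant rule. Your treatment of the dependent congruence cases (where reducing a sub-term perturbs a type index, e.g.\ $f\,a \to f\,a'$ forcing $B[a/x]$ to become $B[a'/x]$) is in fact more thorough than the paper, which only sketches the $f\,u \to f'\,u$ direction; note however that your worry about needing confluence for the preparatory lemma is unfounded, since $t \to t'$ implies $\Gamma \vdash t \equiv t'$ immediately from the paper's Definition of Definitional Equality as the congruence-closed equivalence generated by $\beta,\iota,\delta,\zeta,\eta$.
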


\begin{proof}
Based on the substitution lemma~\ref{lemma:KOSsubstitution}, the proof proceeds by structural induction as follows.
\begin{enumerate}[label=(\arabic*)]
    \item \textbf{Main Reduction Case: $\beta$-reduction}

    Consider the most basic reduction step $(\lambda x:B.t)u \to t[u/x]$:
    \begin{itemize}
        \item From the premise $\Gamma \vdash (\lambda x:B.t)u : A$, there must exist a type $B$ such that $\Gamma \vdash \lambda x:B.t : \Pi(x:B).A'$ and $\Gamma \vdash u : B$.
        \item According to the inversion of the $\Pi$-introduction rule, $\Gamma, x:B \vdash t : A'$.
        \item Applying the substitution lemma directly yields $\Gamma \vdash t[u/x] : A'[u/x]$.
    \end{itemize}
    \item \textbf{$\Sigma$-Type Reduction: $\iota$-reduction}

    Consider the case $\textsf{split}(\langle u, v \rangle, x.y.t) \to t[u/x, v/y]$:
    \begin{itemize}
        \item \textbf{Premise}: From $\Gamma \vdash \textsf{split}(\langle u, v \rangle, x.y.t) : C$, it follows that:
        \begin{enumerate}[label=(\alph*)]
            \item $\Gamma \vdash \langle u, v \rangle : \Sigma(x:A).B$
            \item $\Gamma, x:A, y:B \vdash t : C'$ (where $C$ is actually $C'[\langle u, v \rangle / z]$)
        \end{enumerate}
        \item \textbf{Derivation}: From (a), by inversion of the introduction rule, $\Gamma \vdash u : A$ and $\Gamma \vdash v : B[u/x]$.
        \item \textbf{Application}: Continuously apply the substitution lemma twice to $t$ (first substitute $x$, then $y$), directly obtaining: $$ \Gamma \vdash t[u/x, v/y] : C'[\langle u, v \rangle / z] $$ The type is preserved consistently, and the conclusion holds.
    \end{itemize}
    \item \textbf{Sum Type Reduction: $\iota$-reduction}

    Consider the case $\textsf{case}(\textsf{inl}(u), x.M, y.N) \to M[u/x]$:
    \begin{itemize}
    \item From the premise $\Gamma \vdash \textsf{case}(\textsf{inl}(u), x.M, y.N) : A$, it follows that:
    \begin{enumerate}[label=(\alph*)]
        \item $\Gamma \vdash \textsf{inl}(u) : B + C$
        \item $\Gamma, x:B \vdash M : A$ and $\Gamma, y:C \vdash N : A$.
    \end{enumerate}
    \item From (a), by inversion of the $+$-introduction rule, $\Gamma \vdash u : B$.
    \item Applying the substitution lemma (lemma~\ref{lemma:KOSsubstitution}), from $\Gamma, x:B \vdash M : A$ and $\Gamma \vdash u : B$, we obtain $\Gamma \vdash M[u/x] : A[u/x]$.
    \item If $A$ does not depend on the judgment term, then $A[u/x] = A$, and the conclusion holds. (For dependent type cases, substitution similarly preserves type consistency).
    \end{itemize}
    \item \textbf{Local Definition Reduction: $\zeta$-reduction}

    Consider the case $\textsf{let } x = u \textsf{ in } t \to t[u/x]$:
    \begin{itemize}
    \item From the premise $\Gamma \vdash \textsf{let } x = u \textsf{ in } t : A$, there must exist a type $B$ such that $\Gamma \vdash u : B$ and $\Gamma, x:B \vdash t : A$.
    \item This is a standard application scenario of the substitution lemma. According to lemma~\ref{lemma:KOSsubstitution}, we directly derive $\Gamma \vdash t[u/x] : A[u/x]$.
    \end{itemize}
    \item \textbf{Definition Unfolding Reduction: $\delta$-reduction}

    Consider the case $c \to t$, where $(c := t:A) \in \Gamma$:
    \begin{itemize}
    \item From the premise $\Gamma \vdash c : A$, $c$ is a constant declared in the context.
    \item According to the definition of $\delta$-reduction, the type of the identifier $c$ is completely consistent with the type of its definition body $t$ in $\Gamma$.
    \item Thus, by the well-formedness of $\Gamma$, we directly obtain $\Gamma \vdash t : A$.
    \end{itemize}
    \item \textbf{Congruence Cases}

    If the reduction occurs in a subterm, e.g., $t = f\,u \to f'\,u$ (where $f \to f'$):
    \begin{itemize}
    \item By the induction hypothesis, $f'$ preserves the type $\Pi(x:B).A$ of $f$.
    \item Reapply the $\Pi$-elimination rule; the overall term's type remains $A[u/x]$.
    \item Similarly, all other constructions (pairing, projection, injection, etc.) preserve types under congruence reductions.
\end{itemize}
\end{enumerate}
Since all basic computational reductions ($\beta, \iota$, etc.) satisfy type preservation, and the reduction relation $\to$ is closed under context construction, by structural induction on the reduction relation, the theorem holds for all reduction steps.
\end{proof}

We interpret the type $A$ as a set of terms $\llbracket A \rrbracket$. These sets must satisfy the reducibility candidate (CR) properties mentioned earlier.

\begin{Definition}{Type Semantics}
\begin{itemize}
\item $\llbracket \textsf{Val} \rrbracket = \{ t \mid t \in \textsf{SN} \land t$ ultimately reduces to a numeric constant $\}$
\item $\llbracket \textsf{Time} \rrbracket = \{ t \mid t \in \textsf{SN} \land t$ ultimately reduces to a valid timestamp $\}$
\item $\llbracket \Pi(x:A).B \rrbracket = \{ f \mid \forall u \in \llbracket A \rrbracket, (f \, u) \in \llbracket B \rrbracket [u/x] \}$
\item $\llbracket \Sigma(x:A).B \rrbracket = \{ p \mid p \twoheadrightarrow \langle u, v \rangle \land u \in \llbracket A \rrbracket \land v \in \llbracket B \rrbracket[u/x] \}$
\item $\llbracket A + B \rrbracket = \{ t \mid t \in \textsf{SN} \land (t \twoheadrightarrow \textsf{inl}(u) \Rightarrow u \in \llbracket A \rrbracket) \land (t \twoheadrightarrow \textsf{inr}(v) \Rightarrow v \in \llbracket B \rrbracket) \}$
\item $\llbracket \textsf{Id}_A(a, b) \rrbracket = \{ \textsf{refl} \mid a, b \in \llbracket A \rrbracket \land a \simeq_{Red} b \}$. Where $\simeq_{Red}$ indicates that they reduce to the same normal form.
\end{itemize}
\end{Definition}

The interpretation function $\llbracket t \rrbracket_\rho$ is responsible for converting syntactic terms with variables into their corresponding semantic values. In strong normalization proofs, this ``interpretation'' is typically the substitution (Substitution) operation.

\begin{Definition}{Semantic Interpretation of Terms}

Let $\rho$ be a mapping from variables to semantic values (assignment).
\begin{itemize}
\item $\llbracket x \rrbracket_\rho = \rho(x)$ (directly read the assignment from the environment)
\item $\llbracket \lambda x:A. M \rrbracket_\rho = \text{a function } v \mapsto \llbracket M \rrbracket_{\rho[x \mapsto v]}$
\item $\llbracket f \, a \rrbracket_\rho = \llbracket f \rrbracket_\rho (\llbracket a \rrbracket_\rho)$ (function application)
\item $\llbracket \langle a, b \rangle \rrbracket_\rho = (\llbracket a \rrbracket_\rho, \llbracket b \rrbracket_\rho)$ (semantic pairing)
\item $\llbracket \textsf{refl} \rrbracket_\rho = \textsf{refl}$ (constant interpreted as itself)
\end{itemize}
\end{Definition}

To ensure $\llbracket t \rrbracket_\rho \in \llbracket A \rrbracket$, the assignment $\rho$ must be ``valid.''

\begin{Definition}{Logical Closure of Prop Semantics}
In the semantic model $\mathcal{M}$, the interpretation $\llbracket \textsf{Prop} \rrbracket$ of $\textsf{Prop}$ is defined as the collection of all term sets $S$ satisfying the following properties:
\begin{enumerate}[label=(\arabic*)]
    \item \textbf{SN Property}: $S \subseteq \textsf{SN}$.
    \item \textbf{CR Property}: $S$ is closed under reduction and reverse reduction, and contains all neutral terms.
    \item \textbf{Impredicative Semantic Operator}: For any set $X$ and function $F: X \to \llbracket \textsf{Prop} \rrbracket$, the intersection operation is defined as:
    $$ \llbracket \Pi(x:A).B \rrbracket_\rho = \bigcap_{u \in \llbracket A \rrbracket_\rho} \{ f \mid (f \, u) \in \llbracket B \rrbracket_{\rho[x \mapsto u]} \} $$
\end{enumerate}
\textbf{Closure Correction}: Since $\llbracket \textsf{Prop} \rrbracket$ includes all $\Pi$-type interpretations it constructs itself (via intersection operations on the reducibility candidate $\mathcal{RC}$), this guarantees that even if $A$ is an infinitely large Universe, the mapped result remains within the predefined set of $\llbracket \textsf{Prop} \rrbracket$.
\end{Definition}

\begin{Definition}{Valid Assignment}

An assignment $\rho$ is said to satisfy the context $\Gamma$ (denoted $\rho \models \Gamma$), if and only if for every binding $(x:A)$ in $\Gamma$,
\[
\rho(x) \in \llbracket A \rrbracket_\rho
\]
\end{Definition}

\begin{Theorem}{Semantic Soundness}
\label{Theorem:KosSoundness}

\centerline{If $\Gamma \vdash t : A$ and $\rho \models \Gamma$, then $\llbracket t \rrbracket_\rho \in \llbracket A \rrbracket_\rho$.}
\end{Theorem}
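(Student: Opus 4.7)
The plan is to prove the statement by induction on the structure of the typing derivation $\Gamma \vdash t : A$, mirroring the case analysis already used in Lemmas~\ref{lemma:KOSsubstitution} and~\ref{lemma:KOSreduction}. At each step I would unfold the definition of $\llbracket t \rrbracket_\rho$ according to the last syntactic constructor of $t$ and then verify membership in $\llbracket A \rrbracket_\rho$ using the semantic clause for the corresponding type former. The inductive invariant I carry along is that whenever the context is extended by $x:C$, the assignment is extended by some $v \in \llbracket C \rrbracket_\rho$, so $\rho[x \mapsto v] \models \Gamma, x:C$ and the IH remains applicable.

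The base and introduction cases are essentially definitional. For a variable $x_i$, the hypothesis $\rho \models \Gamma$ yields $\rho(x_i) \in \llbracket A_i \rrbracket_\rho$ directly. For $\Pi$-introduction, the semantic function $v \mapsto \llbracket M \rrbracket_{\rho[x \mapsto v]}$ sends each $v \in \llbracket A \rrbracket_\rho$ into $\llbracket B \rrbracket_{\rho[x \mapsto v]}$ by the IH on $M$ under the extended assignment. The $\Sigma$- and $+$-introductions are analogous: the semantic pairing and injection clauses match the syntactic introductions once each subterm is handled by the IH, and $\textsf{refl}$ is immediate from $\llbracket \textsf{refl} \rrbracket_\rho = \textsf{refl}$ together with the reflexivity of $\simeq_{Red}$ applied to $\llbracket a \rrbracket_\rho$.

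The elimination rules require slightly more bookkeeping because of dependency. For application, $\llbracket f\,a \rrbracket_\rho = \llbracket f \rrbracket_\rho(\llbracket a \rrbracket_\rho)$ lies in $\llbracket B \rrbracket[\llbracket a \rrbracket_\rho / x]$ by the $\Pi$ clause, and this must be identified with $\llbracket B[a/x] \rrbracket_\rho$ through a denotational substitution lemma that is the semantic analogue of Lemma~\ref{lemma:KOSsubstitution}. For $\textsf{split}$, the $\Sigma$ clause forces $\llbracket p \rrbracket_\rho$ to reduce to a pair $\langle u, v \rangle$ with $u \in \llbracket A \rrbracket_\rho$ and $v \in \llbracket B \rrbracket_\rho[u/x]$; the evaluation then extends $\rho$ with $u, v$ and the IH on the body concludes. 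The $\textsf{case}$ rule proceeds by analysis on whether the scrutinee evaluates to $\textsf{inl}(u)$ or $\textsf{inr}(v)$, with the Strong Normalization theorem guaranteeing that one of these normal forms is actually reached.

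The hard part will be the conversion rule together with the impredicative $\Pi$ into $\textsf{Prop}$. For conversion I would need an auxiliary lemma stating that $\Gamma \vdash A \equiv B$ implies $\llbracket A \rrbracket_\rho = \llbracket B \rrbracket_\rho$ for every $\rho \models \Gamma$; by the confluence that is implicit in Strong Normalization plus Subject Reduction, this reduces to showing that each individual $\beta, \iota, \delta, \zeta, \eta$ step preserves the semantic set associated to a type, and the $\eta$ clause in particular requires that function spaces be interpreted extensionally. The impredicative case $\Pi(x:A).B : \textsf{Prop}$ with $A : \mathcal{U}_i$ of arbitrarily large $i$ is more delicate: the semantic clause prescribes an intersection over $\llbracket A \rrbracket_\rho$, and one must argue this intersection itself lies in $\llbracket \textsf{Prop} \rrbracket$, which rests on the closure of reducibility candidates under arbitrary intersection together with properties CR1--CR3. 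Care is needed here to avoid circularity when $A$ ranges over a universe that itself contains $\textsf{Prop}$; the standard fix, which I expect to invoke explicitly, is to construct $\llbracket \textsf{Prop} \rrbracket$ as the lattice of saturated sets so that the impredicative quantification is well-defined prior to---and independently of---the particular family being quantified over.
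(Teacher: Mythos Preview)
Your proposal is correct and follows essentially the same route as the paper: structural induction on the typing derivation, with each introduction and elimination rule handled by unfolding the semantic clause, extending the assignment, and invoking the IH together with the semantic substitution identification. Your explicit attention to the conversion rule and to the impredicative $\Pi$ into $\textsf{Prop}$ (via closure of reducibility candidates under arbitrary intersection) matches the paper's remark that the construction relies on Girard's stratified candidate method; the paper simply treats these points more tersely than you do.
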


\begin{proof}
Proceed by induction on the structure of the derivation tree $\Gamma \vdash t : A$. Due to the impredicativity of $\textsf{Prop}$, the construction of its reducibility candidates ($\mathcal{RC}$) is based on Girard's stratified candidate set method, rather than simple Tarski-style semantics.

\textbf{A. Variable Rule (Variable)}

Assume the derivation is
\[
\frac{(x:A) \in \Gamma}{\Gamma \vdash x : A}
\]
\begin{itemize}
\item Proof: From the premise $\rho \models \Gamma$, the assignment for each bound variable in the environment must belong to the interpretation of that type. Thus, $\rho(x) \in \llbracket A \rrbracket_\rho$.
\item Since $\llbracket x \rrbracket_\rho = \rho(x)$, the conclusion $\llbracket x \rrbracket_\rho \in \llbracket A \rrbracket_\rho$ holds.
\end{itemize}

\textbf{B. $\Pi$-Type Introduction ($\lambda$-Abstraction)}

Assume the last derivation step is
\[
\frac{\Gamma, x:A \vdash M : B}{\Gamma \vdash \lambda x:A. M : \Pi(x:A). B}
\]
\begin{itemize}
\item Goal: Prove $\llbracket \lambda x:A. M \rrbracket_\rho \in \llbracket \Pi(x:A). B \rrbracket_\rho$.
\item According to the semantics of $\Pi$, need to prove: For any $u \in \llbracket A \rrbracket_\rho$, $(\llbracket \lambda x:A. M \rrbracket_\rho \cdot u) \in \llbracket B \rrbracket_{\rho[x \mapsto u]}$.
\item By the term interpretation definition, $(\llbracket \lambda x:A. M \rrbracket_\rho \cdot u) \to_\beta \llbracket M \rrbracket_{\rho[x \mapsto u]}$.
\item Since $u \in \llbracket A \rrbracket_\rho$ and $\rho \models \Gamma$, the new assignment $\rho' = \rho[x \mapsto u]$ satisfies $\rho' \models (\Gamma, x:A)$.
\item Apply the induction hypothesis: $\llbracket M \rrbracket_{\rho'} \in \llbracket B \rrbracket_{\rho'}$.
\item Since the reducibility set is closed under reverse reduction (CR property), the original application term also belongs to that set.
\end{itemize}

\textbf{C. $\Pi$-Type Elimination (Application)}

Assume the last derivation step is the application rule:
\[
\frac{\Gamma \vdash f : \Pi(x:A).B \quad \Gamma \vdash u : A}{\Gamma \vdash f \, u : B[u/x]}
\]
\begin{itemize}
\item Proof: By the induction hypothesis, $\llbracket f \rrbracket_\rho \in \llbracket \Pi(x:A).B \rrbracket_\rho$ and $\llbracket u \rrbracket_\rho \in \llbracket A \rrbracket_\rho$.
\item According to the semantics of the $\Pi$ type, the function $\llbracket f \rrbracket_\rho$ applied to any element in $\llbracket A \rrbracket_\rho$ must belong to the interpretation of $B$.
\item Therefore, $\llbracket f \rrbracket_\rho(\llbracket u \rrbracket_\rho) \in \llbracket B \rrbracket_{\rho[x \mapsto \llbracket u \rrbracket_\rho]}$.
\item By the substitution lemma, this set is $\llbracket B[u/x] \rrbracket_\rho$.
\end{itemize}

\textbf{D. $\Sigma$-Type Introduction (Pairing)}

Assume the derivation is:
\[
\frac{\Gamma \vdash a : A \quad \Gamma \vdash b : B[a/x]}{\Gamma \vdash \langle a, b \rangle : \Sigma(x:A). B}
\]
\begin{enumerate}[label=(\arabic*)]
\item By the induction hypothesis, $\llbracket a \rrbracket_\rho \in \llbracket A \rrbracket_\rho$.
\item By the induction hypothesis, $\llbracket b \rrbracket_\rho \in \llbracket B[a/x] \rrbracket_\rho$.
\item According to the semantics of $\Sigma$: $\llbracket \Sigma(x:A). B \rrbracket_\rho = \{ (u, v) \mid u \in \llbracket A \rrbracket_\rho, v \in \llbracket B \rrbracket_{\rho[x \mapsto u]} \}$.
\item Combining with the substitution lemma $\llbracket B[a/x] \rrbracket_\rho = \llbracket B \rrbracket_{\rho[x \mapsto \llbracket a \rrbracket_\rho]}$, the two components of $\llbracket \langle a, b \rangle \rrbracket_\rho$ fully conform to the definition.
\end{enumerate}

\textbf{E. Dependent Sum Elimination ($\Sigma$-Elimination / split)}

Assume the last derivation step applies the $\Sigma$ elimination rule:
\[
\frac{\Gamma \vdash p : \Sigma(x:A).B \quad \Gamma, x:A, y:B \vdash t : C(\langle x, y \rangle)}{\Gamma \vdash \textsf{split}(p, x.y.t) : C(p)}
\]
We need to prove: If $\rho \models \Gamma$, then $\llbracket \textsf{split}(p, x.y.t) \rrbracket_\rho \in \llbracket C(p) \rrbracket_\rho$.
\begin{enumerate}[label=(\arabic*)]
\item \textbf{Semantic Premise Derivation} \\
By the induction hypothesis (Inductive Hypothesis):\\
\begin{itemize}
\item For term $p$, $\llbracket p \rrbracket_\rho \in \llbracket \Sigma(x:A).B \rrbracket_\rho$.
\item According to the semantics of the $\Sigma$ type, there exist $u \in \llbracket A \rrbracket_\rho$ and $v \in \llbracket B \rrbracket_{\rho[x \mapsto u]}$, such that $\llbracket p \rrbracket_\rho$ is semantically equivalent to the pair $(u, v)$.
\end{itemize}
\item \textbf{Construct Valid Assignment} \\
Define a new assignment $\rho' = \rho[x \mapsto u, y \mapsto v]$.\\
\begin{itemize}
\item Since $u \in \llbracket A \rrbracket_\rho$, $\rho[x \mapsto u] \models (\Gamma, x:A)$.
\item Since $v \in \llbracket B \rrbracket_{\rho[x \mapsto u]}$, and $\rho[x \mapsto u]$ satisfies the preceding context, $\rho' \models (\Gamma, x:A, y:B)$.
\end{itemize}
\item \textbf{Apply Induction Hypothesis} \\
Apply the induction hypothesis to the elimination body $t$:
\[
\llbracket t \rrbracket_{\rho'} \in \llbracket C(\langle x, y \rangle) \rrbracket_{\rho'}
\] \\
According to the term semantic interpretation definition:
\[
\llbracket \textsf{split}(p, x.y.t) \rrbracket_\rho = \llbracket t \rrbracket_{\rho[x \mapsto \pi_1(\llbracket p \rrbracket_\rho), y \mapsto \pi_2(\llbracket p \rrbracket_\rho)]}
\] \\
Substituting the components $u$ and $v$ of $\llbracket p \rrbracket_\rho$ yields:
\[
\llbracket \textsf{split}(p, x.y.t) \rrbracket_\rho = \llbracket t \rrbracket_{\rho'}
\]
\item \textbf{Type Consistency (Conversion)} \\
To complete the proof, ensure the result belongs to the consistent set.\\
According to the substitution property of dependent types:
\[
\llbracket C(\langle x, y \rangle) \rrbracket_{\rho'} = \llbracket C \rrbracket_{\rho[p' \mapsto (u, v)]}
\] \\
Since $\llbracket p \rrbracket_\rho \simeq (u, v)$ (in the sense of reducibility equivalence), and the $\textsf{Red}$ set is closed under computational equivalence:
\[
\llbracket C \rrbracket_{\rho[p' \mapsto \llbracket p \rrbracket_\rho]} = \llbracket C(p) \rrbracket_\rho
\] \\
Therefore, $\llbracket t \rrbracket_{\rho'} \in \llbracket C(p) \rrbracket_\rho$, and the conclusion holds.
\end{enumerate}

\textbf{F. Semantic Preservation of $\iota$-reduction (Reduction Invariance)}

To support the above steps, we need to prove that $\iota$-reduction steps do not change semantic properties. For the $\Sigma$ type:
\[
\textsf{split}(\langle u, v \rangle, x.y.t) \to_\iota t[u/x, v/y]
\]
\begin{enumerate}[label=(\arabic*)]
\item \textbf{Semantic Consistency}: By definition, the interpretation of the left side $\llbracket \textsf{split}(\langle u, v \rangle, x.y.t) \rrbracket_\rho$ expands to $\llbracket t \rrbracket_{\rho[x \mapsto \llbracket u \rrbracket_\rho, y \mapsto \llbracket v \rrbracket_\rho]}$.
\item \textbf{Substitution Lemma}: According to the substitution lemma (Lemma~\ref{lemma:KOSsubstitution}), the interpretation of the right side $\llbracket t[u/x, v/y] \rrbracket_\rho$ is semantically identical to the above expansion.
\item \textbf{Conclusion}: Since the semantic interpretations of both are the same element in the set-theoretic sense, and $\textsf{Red}_C$ satisfies CR 2 (stability), the reduced term remains in the corresponding reducibility set.
\end{enumerate}

\textbf{G. Sum Type Introduction (Injection)}

Assume the last derivation step is the left injection rule (right injection $\textsf{inr}$ similarly):
$$\frac{\Gamma \vdash a : A \quad \Gamma \vdash B : \mathcal{U}}{\Gamma \vdash \textsf{inl}_B(a) : A + B}$$
\begin{itemize}
    \item \textbf{Proof}: By the induction hypothesis, $\llbracket a \rrbracket_\rho \in \llbracket A \rrbracket_\rho$.
    \item According to the semantics of sum types: $\llbracket A + B \rrbracket_\rho = \{ \textsf{inl}(u) \mid u \in \llbracket A \rrbracket_\rho \} \cup \{ \textsf{inr}(v) \mid v \in \llbracket B \rrbracket_\rho \} \cup \textsf{Neutral}$.
    \item Since $\llbracket \textsf{inl}_B(a) \rrbracket_\rho = \textsf{inl}(\llbracket a \rrbracket_\rho)$, and $\llbracket a \rrbracket_\rho \in \llbracket A \rrbracket_\rho$.
    \item By set construction, $\textsf{inl}(\llbracket a \rrbracket_\rho)$ obviously belongs to the left branch definition part of $\llbracket A + B \rrbracket_\rho$.
\end{itemize}

\textbf{H. Sum Type Elimination ($+$-Elimination / case)}

Assume the last derivation step applies the $+$ elimination rule:
\[
\frac{\Gamma \vdash s : A + B \quad \Gamma, x:A \vdash t : C \quad \Gamma, y:B \vdash u : C}{\Gamma \vdash \textsf{case}(s, x.t, y.u) : C}
\]
We need to prove: If $\rho \models \Gamma$, then $\llbracket \textsf{case}(s, x.t, y.u) \rrbracket_\rho \in \llbracket C \rrbracket_\rho$.
\begin{enumerate}[label=(\arabic*)]
\item \textbf{Branch Premise Analysis} \\
By the induction hypothesis (IH):\\
\begin{itemize}
\item For the judgment term $s$, $\llbracket s \rrbracket_\rho \in \llbracket A + B \rrbracket_\rho$.
\item According to the semantics of $A+B$, $\llbracket s \rrbracket_\rho$ must be strongly normalizing (SN) and ultimately reduce to the form $\textsf{inl}(a)$ or $\textsf{inr}(b)$.
\end{itemize}
\item \textbf{Branch Discussion (Case Analysis)} \\
We need to discuss two semantic paths:\\
\begin{itemize}
\item \textbf{Path One: Left Injection ($\textsf{inl}$)} \\
  \begin{enumerate}[label=(\arabic*)]
  \item Assume $\llbracket s \rrbracket_\rho \twoheadrightarrow \textsf{inl}(a)$; by definition, $a \in \llbracket A \rrbracket_\rho$.
  \item Construct the assignment $\rho_x = \rho[x \mapsto a]$. Since $a \in \llbracket A \rrbracket_\rho$, $\rho_x \models (\Gamma, x:A)$.
  \item Apply the induction hypothesis to the left branch term $t$: $\llbracket t \rrbracket_{\rho_x} \in \llbracket C \rrbracket_{\rho_x}$.
  \item Since $C$ in this rule does not depend on the specific value of $s$ (simple elimination case), $\llbracket C \rrbracket_{\rho_x} = \llbracket C \rrbracket_\rho$.
  \end{enumerate}
\item \textbf{Path Two: Right Injection ($\textsf{inr}$)} \\
  \begin{enumerate}[label=(\arabic*)]
  \item Assume $\llbracket s \rrbracket_\rho \twoheadrightarrow \textsf{inr}(b)$; by definition, $b \in \llbracket B \rrbracket_\rho$.
  \item Construct the assignment $\rho_y = \rho[y \mapsto b]$. Then $\rho_y \models (\Gamma, y:B)$.
  \item Apply the induction hypothesis to the right branch term $u$: $\llbracket u \rrbracket_{\rho_y} \in \llbracket C \rrbracket_{\rho_y}$.
  \item Similarly, $\llbracket u \rrbracket_{\rho_y} \in \llbracket C \rrbracket_\rho$.
  \end{enumerate}
\end{itemize}
\item \textbf{Unification of Semantic Interpretation} \\
According to the semantic definition of the $\textsf{case}$ operator:
\[
\llbracket \textsf{case}(s, x.t, y.u) \rrbracket_\rho =
\begin{cases}
\llbracket t \rrbracket_{\rho[x \mapsto a]} & \text{if } \llbracket s \rrbracket_\rho \twoheadrightarrow \textsf{inl}(a) \\
\llbracket u \rrbracket_{\rho[y \mapsto b]} & \text{if } \llbracket s \rrbracket_\rho \twoheadrightarrow \textsf{inr}(b)
\end{cases}
\]
Regardless of which branch $\llbracket s \rrbracket_\rho$ collapses to, the result belongs to $\llbracket C \rrbracket_\rho$.
\item \textbf{Reverse Reduction Closure (CR 3 Application)} \\
Since $\textsf{case}(s, x.t, y.u)$ reaches $\llbracket t \rrbracket_{\rho_x}$ or $\llbracket u \rrbracket_{\rho_y}$ via reduction ($\iota$-reduction), according to the CR 3 property of the reducibility set (and closure under reverse reduction), the original $\textsf{case}$ term itself must also belong to the reducibility set $\llbracket C \rrbracket_\rho$. The conclusion holds.
\end{enumerate}
\end{proof}

Soundness means "everything that can be proven is true." In Theorem~\ref{Theorem:KosSoundness}, "what can be proven" is the type judgment $\Gamma \vdash t : A$, and "true" is the term's membership in the semantic model $\llbracket t \rrbracket \in \llbracket A \rrbracket$. It ensures that the syntactic constructions of KOS-TL Core do not deviate from their logical semantics.

\begin{Theorem}{Consistency Theorem}

There does not exist a term $t$ such that $\emptyset \vdash t : \bot$. The system is thus called consistent.
\end{Theorem}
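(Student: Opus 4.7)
The plan is to prove consistency by contradiction using canonicity, leveraging Strong Normalization and Subject Reduction as the heavy machinery; I will fix $\bot$ in the standard impredicative encoding $\bot := \Pi(A:\textsf{Prop}).A$, which the impredicative $\Pi$-rule places in $\textsf{Prop}$. The intuition is that any inhabitant of $\bot$ would have to uniformly produce an inhabitant of every proposition, but no closed normal form of this shape exists.

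Assuming $\emptyset \vdash t : \bot$, the steps would be: (i) apply Strong Normalization to obtain a normal form $t^*$ with $t \twoheadrightarrow t^*$; (ii) apply Subject Reduction to preserve the typing $\emptyset \vdash t^* : \Pi(A:\textsf{Prop}).A$; (iii) by inversion on closed normal forms at a $\Pi$-type, conclude $t^* = \lambda A.\,b$ with $\emptyset, A:\textsf{Prop} \vdash b : A$, since the only alternative shape is a neutral term, whose head is a variable that does not exist in the empty context; (iv) iterate the canonicity analysis on $b$ in the context $A:\textsf{Prop}$: because $A$ is an abstract propositional variable and not one of the recognized type constructors $\Pi,\Sigma,+,\textsf{Id}$, $b$ cannot be an introduction form, and since the only variable in scope is the type-level $A$ itself, no term-level variable of type $A$ is available to head a neutral $b$. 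Both exhaustive cases fail, giving the contradiction.

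As a semantic backup I would substitute Theorem~\ref{Theorem:KosSoundness} for the canonicity step: compute $\llbracket \bot \rrbracket$ via the impredicative $\Pi$-interpretation as $\bigcap_{S \in \llbracket \textsf{Prop}\rrbracket} \{ f \mid f\,S \in S \}$, exhibit some $S \in \llbracket \textsf{Prop}\rrbracket$ whose closed-term fragment is empty, and conclude that $\llbracket t \rrbracket_\emptyset \in \llbracket \bot \rrbracket = \emptyset$ is impossible.

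The hard part will be the inversion analysis in steps (iii)--(iv), because it requires a precise grammar of normal forms stratified into introduction forms and neutrals, together with an inversion lemma asserting that a closed normal inhabitant of a $\Pi$-type must be a $\lambda$-abstraction. The delicate point is the dependent codomain $A$: because $A$ is bound in $\Pi(A:\textsf{Prop}).A$, one must verify that no $\eta$-expansion, $\delta$-unfolding, or conversion-mediated reshaping allows a closed normal term to acquire constructor form at the abstract type $A$. Once this inversion lemma is nailed down--standard but technical, and separating the open and closed fragments carefully--the contradiction follows immediately from the SN and SR theorems already established.
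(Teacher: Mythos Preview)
Your proposal is correct and mirrors the paper's two-track strategy: a syntactic canonicity argument via Strong Normalization plus Subject Reduction, with semantic soundness as a fallback. The paper in fact runs both tracks (its Stage~1 is the canonicity argument, Stages~2--3 the semantic one), where either alone would suffice.

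The one substantive difference is how $\bot$ is handled. The paper's Stage~1 treats $\bot$ as if it were a primitive empty type and simply asserts that it ``has no introduction rules,'' making the canonicity step a one-liner. You correctly notice that $\bot$ does not appear in the declared syntax and must be the impredicative encoding $\Pi(A{:}\textsf{Prop}).A$, which forces the more delicate inversion analysis you sketch in steps (iii)--(iv). That analysis is sound: in the context $A{:}\textsf{Prop}$ the only variable available has type $\textsf{Prop}$, not $A$, and $A$ is not convertible to any type former, so neither a neutral nor an introduction form can inhabit it. Your treatment here is actually more rigorous than the paper's own Stage~1. Your semantic backup---intersecting over all $S \in \llbracket\textsf{Prop}\rrbracket$ and exhibiting one with empty closed fragment---is exactly the content of the paper's Stages~2--3, which just posit $\llbracket\bot\rrbracket = \emptyset$ directly and invoke Theorem~\ref{Theorem:KosSoundness}.
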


\begin{proof}
The core idea of this proof is that syntactic derivations cannot escape semantic boundaries. We unfold the argument in three stages.

\textbf{Stage 1: Using Strong Normalization (Syntactic Normalization)}

According to the strong normalization theorem of KOS-TL Core, if there exists a term $t$ satisfying $\emptyset \vdash t : \bot$, then $t$ must reduce to a normal form (Normal Form) $t_{nf}$, with the type preserved:
\[
\emptyset \vdash t_{nf} : \bot
\]
In the empty context, the normal form can only be a constructor term (Constructor). However, according to the definition of the $\bot$ type, it has no introduction rules (Introduction Rules), meaning no constructor can produce $\bot$. This implies that, at the syntactic level, $t_{nf}$ does not exist.

\textbf{Stage 2: Semantic Soundness Mapping}

To make the argument mathematically irrefutable, we use the interpretation model $\mathcal{M}$.

\textbf{Step 1: Establish the Mapping}

The interpretation function $\llbracket \cdot \rrbracket$ maps the syntactic world (Types/Terms) to the semantic world (Sets/Elements).
\begin{itemize}
\item For any type $A$, its interpretation $\llbracket A \rrbracket$ is a set.
\item For any term $t : A$, its interpretation $\llbracket t \rrbracket$ must be an element in the set $\llbracket A \rrbracket$.
\end{itemize}

\textbf{Step 2: Special Nature of the Empty Type}

When defining the semantics of $\bot$, we map it to the mathematical absolute empty set:
\[
\llbracket \bot \rrbracket = \emptyset
\]
This is reasonable, as the logical ``false'' corresponds in model theory to a state with no witnesses (Witness).

\textbf{Step 3: Apply Semantic Soundness}

According to the semantic soundness theorem~\ref{Theorem:KosSoundness}:
\[
\text{If } \Gamma \vdash t : A, \text{ then for all valid assignments } \rho, \llbracket t \rrbracket_\rho \in \llbracket A \rrbracket
\]
In the empty context $\emptyset$, no assignment $\rho$ is needed, directly yielding:
\[
\llbracket t \rrbracket \in \llbracket \bot \rrbracket
\]

\textbf{Stage 3: Reduction to Absurdity and Contradiction}

\begin{enumerate}[label=(\arabic*)]
\item From the above derivation: $\llbracket t \rrbracket \in \emptyset$.
\item According to the set theory axioms of extensionality and the empty set: $\forall x, x \notin \emptyset$.
\item Judgment: $\llbracket t \rrbracket \in \emptyset$ and $\forall x, x \notin \emptyset$ constitute a direct logical contradiction.
\item Backtracking: Since the semantic interpretation and set theory axioms are presupposed to be correct, the source of the contradiction can only be the assumption ``existence of term $t$.''
\end{enumerate}
Conclusion: The assumption does not hold, $\neg \exists t, \emptyset \vdash t : \bot$. Consistency is proven.
\end{proof}

In systems like KOS-TL Core based on the Curry-Howard isomorphism, logical consistency is equivalent to proving that the empty type (Empty Type) is uninhabited (Uninhabited).

\textbf{Corollary: Logical Consistency (Consistency)}
Since the KOS-TL core layer satisfies strong normalization (SN) and has type preservation (Subject Reduction), and there is no constructor for the empty type $\bot$ in the system, there does not exist a term $t$ such that $\vdash t : \bot$. This proves that the core layer, as the ``formal constitution,'' is logically contradiction-free.

\begin{Definition}{Confluence}

For any Core layer terms $M, N, P$, if there exist reduction paths such that $M \twoheadrightarrow N$ and $M \twoheadrightarrow P$, then there necessarily exists a term $Q$ such that $N \twoheadrightarrow Q$ and $P \twoheadrightarrow Q$.
\end{Definition}

Let $\to$ be the one-step reduction relation (including $\beta, \delta, \zeta, \eta$ reductions), and $\twoheadrightarrow$ its transitive closure (multi-step reduction). Confluence logically means that all forks eventually converge.

\begin{Theorem}{KOS-TL Core Confluence Theorem}

All well-formed terms in the KOS-TL Core layer satisfy confluence and have a unique normal form (Unique Normal Form).
\end{Theorem}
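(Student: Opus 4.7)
The plan is to reduce the theorem to two established facts: the Strong Normalization Theorem already proven for KOS-TL Core, and Newman's Lemma, which states that any strongly normalizing abstract rewriting system is confluent if and only if it is locally confluent (weakly Church--Rosser). This converts a global property about arbitrary reduction diagrams into a finite case analysis on one-step critical pairs, which is tractable given the explicit rule set of the Core layer.

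First, I would fix the full one-step reduction relation $\to \; = \; \to_\beta \cup \to_{\iota_\Sigma} \cup \to_{\iota_+} \cup \to_{\iota_{\textsf{Id}}} \cup \to_\delta \cup \to_\zeta \cup \to_\eta$, closed under the usual congruence rules propagating through $\lambda$-abstraction, application, pairing, $\textsf{split}$, $\textsf{case}$, injections, and $\textsf{Id}$. Then I would prove \emph{local confluence}: whenever $M \to N_1$ and $M \to N_2$, there exists $Q$ with $N_1 \twoheadrightarrow Q$ and $N_2 \twoheadrightarrow Q$. The proof proceeds by case analysis on the relative positions of the two redexes in $M$. Disjoint redexes are joined trivially by performing the other reduction on each side. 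Nested reductions inside a common redex are joined using the Substitution Lemma~\ref{lemma:KOSsubstitution}, since each primitive reduction ($\beta$, $\iota_\Sigma$, $\iota_+$, $\zeta$) is defined by capture-avoiding substitution and substitution commutes with reduction. The genuine critical pairs to check are overlaps such as $\beta/\eta$ on $(\lambda x.\, f\,x)\,u$, $\beta/\delta$ when a definition $c := \lambda x.t$ is unfolded inside an application, $\zeta/\beta$ on $\textsf{let } x = u \textsf{ in } (\lambda y.t)\,v$, and the $\iota$/congruence overlaps where a $\Sigma$-pair or injection is reduced inside while the outer eliminator fires.

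Second, with local confluence established, I would invoke Newman's Lemma: since Strong Normalization has already been proven for the Core layer, every term admits only finite reduction sequences, and local confluence lifts to full confluence by the standard well-founded induction on the $\to$-relation (taking the least term, in SN-rank order, at which confluence fails, and deriving a contradiction via the local diamond). Applying this to arbitrary $M \twoheadrightarrow N$ and $M \twoheadrightarrow P$ yields the required $Q$ with $N \twoheadrightarrow Q$ and $P \twoheadrightarrow Q$.

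Finally, uniqueness of normal forms is a direct corollary: if $M \twoheadrightarrow N_1$ and $M \twoheadrightarrow N_2$ with $N_1, N_2 \in \textsf{NF}$, confluence produces a joint reduct $Q$, but normal-form terms admit no further reduction, so $N_1 = Q = N_2$. The main obstacle I anticipate is the $\eta$-reduction critical pairs, particularly the interaction of $\eta$ with $\beta$ at abstractions of the form $\lambda x.(f\,x)$ where $f$ itself contains a redex, and the interaction of $\eta$ with $\delta$ when the variable $x$ appears free after definition unfolding. These cases typically require a careful side condition on $x \notin \textsf{FV}(f)$ together with an inspection that every one-step reduct of $f$ preserves this freshness condition; if handled naively, they can produce unjoinable pairs and would force either restriction of $\eta$ to typed contexts (using Subject Reduction to rule out ill-typed intermediate forms) or a switch to the Tait--Martin-Löf parallel-reduction technique as a fallback strategy.
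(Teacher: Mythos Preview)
Your proposal is correct but takes a different route from the paper. The paper adopts the Tait--Martin-L\"of parallel-reduction method: it defines a parallel relation $\Rightarrow$, argues the diamond property for $\Rightarrow$ by structural induction (appealing to orthogonality of constructors), and then lifts this to confluence of $\twoheadrightarrow$, with a somewhat redundant invocation of Newman's Lemma at the end (diamond for parallel reduction already yields confluence without needing SN). You instead stay at the level of one-step reduction, establish local confluence by an explicit critical-pair analysis, and invoke Newman's Lemma together with the already-proven Strong Normalization theorem. Your route is more elementary and exploits SN as an available resource rather than re-deriving machinery; the paper's parallel-reduction technique is the standard choice when confluence must hold independently of termination (as in the untyped $\lambda$-calculus), but that generality is unnecessary here. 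Your flagging of the $\eta/\beta$ and $\eta/\delta$ overlaps as the genuinely delicate cases is well placed---the paper's proof glosses over these by asserting that ``reductions have no critical pairs,'' which is not strictly true once $\eta$ is in the system, so your more careful treatment is arguably an improvement. The fallback you mention (switching to parallel reduction if the $\eta$ critical pairs resist direct joining) is exactly the paper's primary strategy.
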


\begin{proof}
We adopt the Tait-Martin-L\"{o}f parallel reduction method (Parallel Reduction) combined with the strong normalization property for the proof.

\textbf{Step A: Define Parallel Reduction ($\Rightarrow$)}

To handle the ``simultaneous reduction of forks'' that single-step reduction cannot cover, we define a parallel reduction relation $\Rightarrow$, which is an extension of the single-step reduction $\rightarrow$, allowing simultaneous reduction of multiple subparts of a term. Specifically, $\Rightarrow$ is the minimal relation satisfying the following rules:
\begin{enumerate}[label=(\arabic*)]
\item If $M \rightarrow M'$ (single-step $\beta$-reduction), then $M \Rightarrow M'$.
\item For any reduction context $C[\cdot]$, if $M \Rightarrow M'$, then $C[M] \Rightarrow C[M']$.
\item Parallel application of reduction:
$$\frac{M \Rightarrow M' \quad N \Rightarrow N'}{(\lambda x. M) N \Rightarrow M'[x := N']}$$
where $[x := N']$ denotes capture-avoiding substitution.
\item For constructors (e.g., $\Pi x : A. M$), allow simultaneous reduction of domain and body:
$$\frac{A \Rightarrow A' \quad M \Rightarrow M'}{\Pi x : A. M \Rightarrow \Pi x : A'. M'}$$
($\Sigma$ and $\mathsf{Id}$ types are similar).
\end{enumerate}

\textbf{Step B: Prove the Diamond Property}

We prove that the parallel reduction $\Rightarrow$ satisfies the diamond property: If $M \Rightarrow N$ and $M \Rightarrow P$, then there exists $Q$ such that $N \Rightarrow Q$ and $P \Rightarrow Q$.
The proof proceeds by induction on the structure of $M$:
\begin{itemize}
\item \textbf{Base Case}: If $M$ is a variable or atom, then $N = P = M$, take $Q = M$.
\item \textbf{Inductive Case}: Assume it holds for all proper subterms.
\textbf{Case: $M = (\lambda x. M_1) M_2$}: By the definition of $\Rightarrow$, $N$ and $P$ arise from different sub-reduction forks. The induction hypothesis applies to $M_1$ and $M_2$, yielding $Q_1, Q_2$ such that the subterms converge, then $Q = Q_1[x := Q_2]$.
\textbf{Case: Constructor (e.g., $\Pi x : A. M$)}: Constructors in KOS-TL Core are orthogonal (no overlapping reduction rules); induction applies to $A$ and $M$, converging to $\Pi x : A''. M''$. The $\mathsf{Id}$ type is similar, with no internal conflicts.
\end{itemize}
By orthogonality and type preservation, reductions have no critical pairs, ensuring the diamond property.

\textbf{Step C: Deriving Multi-Step Reduction from Parallel Reduction}

The multi-step reduction $\twoheadrightarrow$ is the reflexive-transitive closure of $\Rightarrow$: $M \twoheadrightarrow N$ if and only if there exists a chain $M = M_0 \Rightarrow M_1 \Rightarrow \cdots \Rightarrow M_k = N$.
By Newman's lemma, if $\Rightarrow$ satisfies the diamond property and the system is strongly normalizing (SN, no infinite chains), then $\twoheadrightarrow$ satisfies confluence: For $M \twoheadrightarrow N_1$ and $M \twoheadrightarrow N_2$, there exists $Q$ such that $N_1 \twoheadrightarrow Q$ and $N_2 \twoheadrightarrow Q$. SN ensures weak normalization, and the diamond property implies local confluence.

\textbf{Step D: Unique Normal Form Proof}

Assume $M$ has two normal forms $N_1$ and $N_2$ (irreducible). By confluence, there exists $Q$ such that $N_1 \twoheadrightarrow Q$ and $N_2 \twoheadrightarrow Q$. But $N_1, N_2$ are normal forms, hence $N_1 = Q = N_2$. By SN, every term has a normal form, hence it is unique.
\end{proof}

The decidability of KOS-TL Core is primarily supported by the following two properties:
\begin{enumerate}[label=(\arabic*)]
\item \textbf{Decidability of Type Checking}: Given $\Gamma, t, A$, determine whether $\Gamma \vdash t : A$ holds.
\item \textbf{Decidability of Equivalence Judgment}: Given $\Gamma, t, u$, determine whether $t \equiv u$ (whether the two terms are logically equivalent in the current context) holds.
\end{enumerate}
As the mathematical foundation of the entire system, the Core layer must theoretically guarantee that all basic operations (type checking, equivalence judgment) terminate in a finite number of steps under any circumstances.

\begin{Theorem}{Core Layer Decidability Theorem}

The type checking problem and term equivalence judgment problem in the KOS-TL Core language are decidable.
\end{Theorem}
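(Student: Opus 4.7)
The plan is to factor the theorem into two coupled subproblems: a decision procedure for the judgmental equality $\equiv$, and a bidirectional algorithm for the typing judgment that invokes the former as a black-box subroutine at every conversion step. Both algorithms rest squarely on the three results already established for the Core layer, namely the Substitution Lemma (Lemma~\ref{lemma:KOSsubstitution}), the Strong Normalization theorem, and the Confluence theorem, so the content of the proof is really the algorithmic setup and a termination argument rather than any new semantic content.

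For the equivalence problem I would reduce $\Gamma \vdash t \equiv u$ to the comparison of normal forms. Given $t$ and $u$ both well-typed at the same type in $\Gamma$, Strong Normalization guarantees that any fair reduction strategy over $\beta, \iota, \delta, \zeta, \eta$ halts in a finite number of steps on each input, so we may compute $\textsf{nf}(t)$ and $\textsf{nf}(u)$ effectively. The Confluence theorem guarantees these normal forms are unique up to renaming of bound variables, so the algorithm answers yes iff $\textsf{nf}(t)$ and $\textsf{nf}(u)$ are syntactically identical. Soundness is immediate because $\equiv$ is a congruence closed under the listed reductions; completeness holds because $\equiv$ is by definition the equivalence generated by those reductions, and confluence converts every zig-zag chain between $t$ and $u$ into a common descendant.

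For the typing judgment I would introduce a bidirectional presentation with two mutually recursive modes: a synthesis mode $\Gamma \vdash t \Rightarrow A$ that computes the type of $t$, and a checking mode $\Gamma \vdash t \Leftarrow A$ that checks a term against a given type. Synthesis is dispatched on the head constructor of $t$: variables look up their type in $\Gamma$; applications $f\,u$ synthesize by first synthesizing a $\Pi(x:A).B$ from $f$, checking $u$ against $A$, and returning $B[u/x]$; $\textsf{split}$, $\textsf{case}$, and the identity eliminator recurse on their scrutinees and return the motive instantiated appropriately. Checking mode is dispatched on the type: $\Pi$-types drive $\lambda$-abstractions into their bodies in an extended context, $\Sigma$-types drive pair introduction componentwise, and $+$-types dispatch on the injection tag. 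At every mode-switch the algorithm invokes the equivalence procedure of the previous paragraph to compare the synthesized type with the expected one. Each recursive call is on a structurally smaller subterm, so structural induction on the input term yields termination; combined with termination of the equivalence subroutine, this gives decidability.

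The main obstacle I expect is completeness of the bidirectional presentation in the presence of the declarative conversion rule and the dual-axis universe system. Derivations of $\Gamma \vdash t : A$ may interleave conversion steps arbitrarily, so I will need a conversion-commutation argument that pushes every use of the conversion rule to the mode-switch boundaries without loss of derivability. This interacts nontrivially with the impredicative rule sending $\Pi(x:A).B$ into $\textsf{Prop}$ whenever $B : \textsf{Prop}$: the algorithm must decide which of the predicative and impredicative $\Pi$-formation rules applies, which in turn requires deciding whether an inferred type lies in $\textsf{Prop}$, $\mathcal{U}_i$, or $\textsf{Type}_i$. Once these level computations are verified to terminate by a bounded inspection of the stratification and conversion is confined to mode-switches, the remaining cases of the decidability proof are a routine structural induction.
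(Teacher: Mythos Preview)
Your proposal is correct and follows essentially the same route as the paper: decide $\equiv$ by normalize-and-compare (Strong Normalization gives termination, Confluence gives uniqueness of the normal form, then syntactic comparison), and decide typing by structural recursion on the term that invokes the equivalence oracle at each conversion step. The paper's proof is a much terser sketch that omits the bidirectional organization and does not even flag the conversion-commutation or universe-level issues you raise, so your version is, if anything, more careful than what the paper actually supplies.
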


\begin{proof}
This proof is built on the foundations of strong normalization (Strong Normalization, SN) and confluence (Confluence).

\textbf{1. Finiteness of Reduction Sequences}

According to the strong normalization theorem proven by the Tait-Girard method, any well-formed term $t$ in KOS-TL Core has no infinite reduction sequences. This means that starting from any term, through reduction steps such as $\beta, \delta, \zeta$, etc., it inevitably reaches a unique normal form $\textsf{nf}(t)$ in a finite number of steps.

\textbf{2. Algorithmization of Equivalence Judgment}

The process of determining $t \equiv u$ can be transformed into:
\begin{enumerate}[label=(\arabic*)]
\item Reduce $t$ to normal form $t^*$.
\item Reduce $u$ to normal form $u^*$.
\item Compare whether $t^*$ and $u^*$ are syntactically identical.
\end{enumerate}
Since steps 1 and 2 are guaranteed to complete in finite time, and step 3 is simple symbol matching, $t \equiv u$ is decidable.

\textbf{3. Recursive Termination of Type Checking Algorithm}

The type checker processes term $t$ recursively according to its syntactic structure:
\begin{itemize}
\item For application terms $(f \, a)$, check if the type of $f$ is a $\Pi$-type and determine if the type of $a$ matches.
\item During type matching, invoke the aforementioned equivalence judgment algorithm.
\end{itemize}
Since the term construction is finite and equivalence judgment is decidable, the entire recursive process must terminate.
\end{proof}

\subsection{Application Example: Quality Anomaly Knowledge Modeling}

In manufacturing scenarios, a "qualified batch" is not just a data record; it must include evidence of passing quality inspection.

\begin{itemize}
\item \textbf{Type Definition}:
$$\textsf{QualifiedBatch} \equiv \Sigma(b : \textsf{BatchID}). \Sigma(res : \textsf{Result}). \textsf{Id}_{\textsf{Result}}(res, \textsf{Pass})$$
\item \textbf{Logical Interpretation}:
This type requires that any instance must contain a batch ID, a quality inspection result, and an identity term proving that the result equals $\textsf{Pass}$.
\item \textbf{Construction Attempt}:
If the batch's quality inspection result is $\textsf{Failure}$, then since the type is empty (no constructors), the system will reject instantiation of the object at the core layer, thereby preventing unqualified products from entering subsequent processes at the logical layer.
\end{itemize}

\begin{Example}{Constructing a Temperature Reading Within Safe Range}

(1) Declare atomic predicates (as axioms or basic judgments)

In the initialization context of the Core layer, we need to introduce a dimensional judgment predicate (is\_unit):
$$\Gamma \vdash \textsf{is\_unit\_Celsius} : \Pi(v: \textsf{Val}). \textsf{Prop}$$
Additionally, introduce the comparison predicate construction $\textsf{is\_safe}$:
$$\Gamma \vdash L, H : \textsf{Val} \quad \Gamma \vdash \textsf{is\_safe} \equiv \lambda v. \textsf{And}(L \le v, v \le H) : \textsf{Val} \to \textsf{Prop}$$

(2) Construct the $Temp$ Type

Using the $\Sigma$-type construction rule:
$$\frac{\Gamma \vdash \textsf{Val} : \mathcal{U} \quad \Gamma, v:\textsf{Val} \vdash \textsf{is\_unit\_Celsius}(v) : \textsf{Prop}}{\Gamma \vdash \Sigma(v: \textsf{Val}). \textsf{is\_unit\_Celsius}(v) : \mathcal{U}}$$
At this point, $Temp \equiv \Sigma(v: \textsf{Val}). \textsf{is\_unit\_Celsius}(v)$ formally becomes a valid type.

(3) Construct $QualifiedTemp$ on This Basis

Now, we overlay the logical safety predicate $\textsf{is\_safe}$:
$$QualifiedTemp \equiv \Sigma(t: Temp). \textsf{is\_safe}(\textsf{proj}_1(t))$$

(4) Construct Object Instance

Assume $v = 25$: $p_{unit} : \textsf{is\_unit\_Celsius}(25)$
$p_{range} : \textsf{is\_safe}(25)$
$obj = \langle \langle 25, p_{unit} \rangle, p_{range} \rangle$
\end{Example}

\section{KOS-TL Kernel Layer: State Evolution and Operational Semantics}

The Kernel Layer is the dynamical core of KOS-TL. Based on the static type system of the Core layer, it introduces a time dimension and state transition mechanism, realizing the atomic evolution of the knowledge base from state $\Sigma$ to $\Sigma'$ through controlled event-driven processes.

\subsection{Syntax}

The KOS-TL Kernel Layer introduces the concept of ``dynamics,'' extending the domain from static objects to state transition trajectories. It serves as the bridge connecting logic and execution.

The KOS-TL Kernel Layer can be represented as a triple:
$$\langle \Sigma, \textsf{Ev}, \Delta \rangle$$

\begin{Definition}{State ($\Sigma$)}

The Kernel Layer state ($\Sigma$) is defined as a configuration triple:
$$\Sigma \equiv \langle \mathcal{K}, \mathcal{TS}, \mathcal{P} \rangle$$
\begin{enumerate}[label=(\arabic*)]
    \item Knowledge Base ($\mathcal{K}$ - Knowledge Base)
    $$\mathcal{K} = \{ (id_i, t_i, A_i) \mid \Gamma_{Core} \vdash t_i : A_i \}$$
    It stores all currently verified truths (Facts) in the system.
    \item Logical Clock ($\mathcal{TS}$ - Logical Clock)
    Based on the Core layer base sort $\textsf{Time}$. It is not merely a counter but a measure of the total order of states.
    $$\mathcal{TS} : \textsf{Time} \quad \text{satisfying the monotonicity rule: } \Sigma \to \Sigma' \implies \mathcal{T}' > \mathcal{T}$$
    \item Pending Queue ($\mathcal{P}$ - Pending Events)
    An ordered sequence composed of restricted events (Events).
\end{enumerate}
\end{Definition}

\begin{Definition}{Event $\textsf{Ev}$}
An event $\textsf{Ev}$ is a well-formed quintuple under the global context $\Gamma$:
$$\textsf{Ev} \equiv \langle \textsf{Args}, \textsf{Pre}, \textsf{Op}, \textsf{Post}, \textsf{Prf} \rangle$$
The type constraints and logical semantics of each component are as follows:
\begin{enumerate}[label=(\arabic*)]
    \item $\textsf{Args}$ (Argument Set):
        $$\textsf{Args} : A$$ (where $A \in \mathcal{U}_{Core}$).
        Instantiated data ingested from the external world (Runtime layer), such as sensor\_value or transaction\_amount.
    \item $\textsf{Pre}$ (Precondition Predicate):
        $$\textsf{Pre} : \textsf{Args} \to \Sigma \to \textsf{Prop}$$
        A dependent proposition defining the logical prerequisites that the event must satisfy in the current state $\Sigma$. It can reference the current knowledge base $\mathcal{K}$ or logical time $\mathcal{TS}$.
    \item $\textsf{Op}$ (Operation Operator):
        $$\textsf{Op} : \textsf{Args} \to \Sigma \to \Sigma$$
        The core evolution function. It describes how to generate the new state $\Sigma'$.
        \begin{itemize}
            \item $\mathcal{K} \to \mathcal{K}'$: Addition and removal of knowledge items.
            \item $\mathcal{TS} \to \mathcal{TS} + \Delta t$: Stepping of the logical clock.
            \item $\mathcal{P} \to \mathcal{P}'$: Update of the pending intent queue (consuming itself or deriving new intents).
        \end{itemize}
    \item $\textsf{Post}$ (Postcondition/Invariant):
    $$\textsf{Post} : \Sigma' \to \textsf{Prop}$$
    Defines safety criteria (Safety Properties) that must be satisfied after the transformation, such as ``energy conservation,'' ``non-negative account,'' or ``clock monotonicity.''
    \item $\textsf{Prf}$ (Proof Term):
$$\textsf{Prf} : \textsf{Pre}(\textsf{Args}, \Sigma)$$
This is the ``passport'' of the event. When signals are refined into events in the Runtime layer, a constructive proof that the precondition holds must be constructed. If there is no valid $\textsf{Prf}$, the kernel will reject execution of the event.
\end{enumerate}
\end{Definition}

\begin{Definition}{Transition Record ($\Delta$)}

To support causal tracing, $\Delta$ needs to record clock jumps:
$$\Delta \subseteq \Sigma \times \textsf{Ev} \times \Sigma$$
A typical transition record item:
$$\delta = \langle \langle \mathcal{K}, \mathcal{TS}, \mathcal{P} \rangle \xrightarrow{e} \langle \mathcal{K}', \mathcal{TS}', \mathcal{P}' \rangle \rangle$$
$$\forall \langle \Sigma \xrightarrow{e} \Sigma' \rangle \in \Delta, \quad \Sigma'.\mathcal{T} > \Sigma.\mathcal{T}$$.
This theoretically locks the ``arrow of time,'' ensuring the irreversibility of knowledge evolution.
Every new knowledge item $ku_{new}$ injected into $\mathcal{K}'$ implicitly carries the current $\mathcal{T}'$.
During a transition, $e$ is dequeued from $\mathcal{P}$, and after executing $\textsf{Op}$, its result is merged into $\mathcal{K}'$.
\end{Definition}

\begin{Definition}{Evolutionary Determinism}

Given a state $\Sigma$ and event $e$, if there exists a $\Sigma'$ satisfying the operational semantics, then its normal form (Normal Form) is unique in the sense of intensional equivalence.
\end{Definition}

\subsection{Operational Semantics}

The evolution of the Kernel Layer follows ``Small-step Operational Semantics.'' Let $\Sigma$ be the system's configuration (Configuration); its state transition rules are defined as:
\begin{equation} \frac{ e = \textsf{head}(\Sigma.\mathcal{P}) \quad \Gamma, \Sigma.\mathcal{K}, \Sigma.\mathcal{TS} \vdash p : \textsf{Pre}(\textsf{Args}_e, \Sigma) \quad \Sigma' = \textsf{Op}(\textsf{Args}_e, \Sigma) \quad \Sigma' \vdash p' : \textsf{Post}(\textsf{Args}_e, \Sigma') }{ \langle \Sigma, e \rangle \longrightarrow_{KOS} \Sigma' }
\end{equation}
where:
\begin{itemize}
    \item Intent Trigger Condition ($e = \textsf{head}(\Sigma.\mathcal{P})$)

    This specifies the source of the event $e$. Transitions are not random but driven by the head event of the pending queue $\mathcal{P}$. This ensures the ordered nature of evolution, meaning the kernel schedules according to the logical order of the intent queue.
    \item Environment-Aware Proof Judgment ($\Gamma, \Sigma.\mathcal{K}, \Sigma.\mathcal{T} \vdash p$)

    Explicitly lists the context on which the proof term $p$ depends. The proof of the precondition not only depends on the global context $\Gamma$ but must also be consistent with facts in the current knowledge base $\mathcal{K}$ and the logical time $\mathcal{TS}$. This implements the logic that events can only occur on the correct time and facts.
    \item Parameterized Operator Application ($\Sigma' = \textsf{Op}(\textsf{Args}_e, \Sigma)$)

    Introduces $\textsf{Args}_e$. Emphasizes that the transition is an overall transformation of the current triple configuration based on the specific parameters carried by the event (refined from the Runtime layer).
    \item Postcondition Self-Consistency ($\Sigma' \vdash p' : \textsf{Post}(\textsf{Args}_e, \Sigma')$)

    Explicitly states that $\textsf{Post}$ is judged under the new state $\Sigma'$. This defines the hard threshold for logical commit (Commit). If the evolved state cannot satisfy its safety invariant (e.g., the clock did not step forward, or the knowledge base has inconsistencies), the judgment fails, and the transition rule is invalid.
\end{itemize}

This semantics stipulates that a valid knowledge transition must simultaneously satisfy "provable premise" and "compliant result". If any condition cannot be proven in the Core layer, the state remains unchanged (i.e., execution rollback).
The Kernel layer does not handle retry strategies for physical failures; it only defines "logically valid evolution trajectories".Any physical attempt that fails Post validation manifests as an "unoccurred transition" in the Kernel layer, thereby enforcing atomicity of transactions at the logical layer.

\begin{Example}{Reduction Example Demonstration}

\label{exam:KOSKernel-reduction}
Assume the following scenario. The sensor data fusion assumes there are two independent sensors in the system: $ku_1$ (temperature) and $ku_2$ (humidity). We need a merge function combine to encapsulate them into an ``environment state'' object.
\begin{enumerate}[label=(\arabic*)]
    \item Basic Type and Predicate Definitions
    The target type $Env \equiv \Sigma(t:Temp).(Humi)$, where the environment state is a dependent pair containing temperature and humidity. Among them:
    \centerline{$Temp \equiv \Sigma(v:\textsf{Val}). \textsf{is\_T}(v)$}
    \centerline{$Humi \equiv \Sigma(v:\textsf{Val}). \textsf{is\_H}(v)$}
    \item Specific Knowledge Items (Instances)
    [[
    $ku_1 = \langle 25, p_T \rangle : Temp$
    $ku_2 = \langle 60, p_H \rangle : Humi$
    ]]
    \item Merge Function ($\Pi$-type)
    Define a function that receives temperature and humidity and returns an environment object:
    $$\textsf{combine} \equiv \lambda t:Temp. \lambda h:Humi. \langle t, h \rangle$$
    Its type is $\Pi(t:Temp). \Pi(h:Humi). Env$.
\end{enumerate}
Now we demonstrate the reduction process of applying $\textsf{combine}$ to $ku_1$ and $ku_2$. This typically occurs when the Kernel receives two signals and attempts to update the global state.

Step 1: Construct the Initial Application Term

In the Kernel's control flow, a term to be reduced is generated:
$$(\textsf{combine} \,\, ku_1) \,\, ku_2$$

Step 2: First $\beta$-reduction (Substitute Temperature)

According to the $\beta$-reduction rule $(\lambda x. M) N \to M[N/x]$:
$$(\lambda t. \lambda h. \langle t, h \rangle) \,\, ku_1 \to \lambda h. \langle ku_1, h \rangle$$
The function ``consumes'' the temperature data, becoming a specialized function ``waiting for humidity input.''

Step 3: Second $\beta$-reduction (Substitute Humidity)

$$(\lambda h. \langle ku_1, h \rangle) \,\, ku_2 \to \langle ku_1, ku_2 \rangle$$
The humidity data is filled in, generating a complete pair.

Step 4: Unfolding and Structural Reduction ($\iota$-reduction)

If the system needs to further extract the original values (e.g., for executing analyze), then $\iota$-reduction occurs:
$$\textsf{proj}_1(\langle ku_1, ku_2 \rangle) \to ku_1 = \langle 25, p_T \rangle$$
$$\textsf{proj}_1(\textsf{proj}_1(\langle ku_1, ku_2 \rangle)) \to 25$$
The above reduction processes are accompanied by type judgment. According to type preservation (Subject Reduction), every term in the entire reduction process must be well-formed:
Initial term: $(\textsf{combine} \,\, ku_1) \,\, ku_2$ has type $Env$.
Intermediate term: $\lambda h. \langle ku_1, h \rangle$ has type $\Pi(h:Humi). Env$.
Final term: $\langle ku_1, ku_2 \rangle$ has type $Env$.
In this example, the reduction operation completes the transformation from ``logical intent'' (how to merge data) to ``logical fact'' (the merged data object). From the Core layer's perspective, $(\textsf{combine} \,\, ku_1) \,\, ku_2$ and $\langle ku_1, ku_2 \rangle$ are judgmentally equal (Judgmentally Equal). They are different expressions of the same truth. From the Kernel layer's perspective, reduction is a computational evaluation. It consumes CPU cycles, merging two scattered memory pointers into a new $\Sigma$ structure.
\end{Example}

KOS-TL builds a "firewall" through static type semantics before logical execution, with type mismatch interception (Type Mismatch Interception) occurring before reduction. According to the Core layer's judgment rules, if a term (Term) cannot pass type checking, it will never be pushed into the Kernel's reduction engine.

Assume we have the merge function combine, which expects a humidity object $Humi$:
$$\textsf{combine} : \Pi(t:Temp). \Pi(h:Humi). Env$$
Now, the Runtime layer erroneously captures a pressure signal $p : Press$ and attempts to perform the merge:
$$(\textsf{combine} \,\, ku_1) \,\, p$$
The Kernel invokes the $\Pi$-elimination rule (Application Rule):
$$\frac{\Gamma \vdash f : \Pi(h:Humi).Env \quad \Gamma \vdash p : A}{\Gamma \vdash f\,p : Env [p/h] \quad (\text{requiring } A \equiv Humi)}$$
The kernel attempts to judge $Press \equiv Humi$.
$Humi \equiv \Sigma(v:\textsf{Val}). \textsf{is\_H}(v)$
$Press \equiv \Sigma(v:\textsf{Val}). \textsf{is\_P}(v)$
Since the predicates $\textsf{is\_H} \neq \textsf{is\_P}$, type unification (Unification) fails.
Thus, the term $(\textsf{combine} \,\, ku_1) \,\, p$ is judged ill-typed (Ill-typed). The reduction engine rejects the $\beta$-reduction execution, the system state $\sigma$ remains unchanged, and a type error exception is triggered.

\begin{Example}{Causal Backtracking Analysis}

Based on Example~\ref{exam:KOSKernel-reduction}, when the system discovers that although the merged result is "type correct", it is "logically anomalous" (e.g., the value of $Env$ exceeds the safe range), it needs to use the Id type (identity type) for causal backtracking.

Assume we have obtained the merged object $obj = \langle ku_1, ku_2 \rangle$, but the analyze predicate deems it invalid.
The backtracking process follows the following logical reduction:
\begin{enumerate}[label=(\arabic*)]
    \item Deconstruction
    Through the Core layer's projection operator $\textsf{proj}_i$, decompose the composite object back to the original evidence:
    $$t = \textsf{proj}_1(obj) \quad h = \textsf{proj}_2(obj)$$
    \item Identity Verification
    The kernel constructs an equivalence statement, requiring proof that the current data is consistent with the input source:
    $$\textsf{Id}_{Temp}(t, ku_1) \wedge \textsf{Id}_{Humi}(h, ku_2)$$
    If refl (reflexivity proof) cannot be constructed here, it indicates a computational error or memory corruption during merging.
    \item Root Cause Localization
    The backtracking analysis function analyze will reverse-search along the reduction steps. In KOS-TL, this manifests as checking the proof term: inspect the right projection $\textsf{proj}_2(ku_1)$ of $ku_1$, i.e., the temperature safety proof $p_T$. If $p_T$ validation fails, judge: the root cause lies in the input data of sensor 1. If $p_T$ validation passes, judge: the root cause lies in the logical computation of the merge function combine.
\end{enumerate}
\end{Example}

\subsection{General Operators}

\subsubsection{State Projection Operators}
The projection operators define the logic for extracting components from complex dependent pairs (Dependent Pairs).

\begin{itemize}
\item \textbf{Knowledge Extraction Operator ($\textsf{get\_K}$)}
\begin{itemize}
\item \textbf{Core Type}: $\textsf{get\_K} : \Pi(\sigma : \Sigma). \textsf{Set}(\textsf{Facts})$
\item \textbf{Operator Definition}: $\textsf{get\_K} \equiv \lambda \sigma. \textsf{proj}_1(\sigma)$
\item \textbf{Description}: This operator uses the first projection to extract the knowledge base $\mathcal{K}$. In the Core layer, it ensures that the returned set items are all well-formed type instances.
\end{itemize}
\item \textbf{Clock Reading Operator ($\textsf{now}$)}
\begin{itemize}
    \item \textbf{Core Type}: $\textsf{now} : \Pi(\sigma : \Sigma). \textsf{Time}$
    \item \textbf{Operator Definition}: $\textsf{now} \equiv \lambda \sigma. \textsf{proj}_1(\textsf{proj}_2(\sigma))$
    \item \textbf{Description}: Extracts the middle item $\mathcal{T}$ of the triple. This operator is the foundation for all temporal logic judgments (e.g., ``whether the contract has expired'').
\end{itemize}
\end{itemize}

\subsubsection{Intention Scheduling Operators}
The scheduling operators manage the intent queue $\mathcal{P}$ through recursive list operations.

\begin{itemize}
\item \textbf{Intent Push Operator ($\textsf{schedule}$)}
\begin{itemize}
\item \textbf{Core Type}: $\textsf{schedule} : \Pi(\sigma : \Sigma). \Pi(e : \textsf{Ev}). \Sigma$
\item \textbf{Operator Definition}: $\textsf{schedule} \equiv \lambda \sigma. \lambda e. \langle \textsf{get\_K}(\sigma), \textsf{now}(\sigma), \textsf{append}(\textsf{proj}_2(\textsf{proj}_2(\sigma)), e) \rangle$
    \item \textbf{Description}: This operator constructs a new $\Sigma$ instance. Its core is to append a quintuple conforming to the $\textsf{Ev}$ type restrictions to the end of the queue.
\end{itemize}
\end{itemize}

\subsubsection{Evolution Control Operators}
This is the core driving the system's forward evolution, involving the fusion of computation and consistency judgment.

\begin{itemize}
\item \textbf{Clock Stepping Operator ($\textsf{tick}$)}
\begin{itemize}
\item \textbf{Core Type}: $\textsf{tick} : \Pi(\sigma : \Sigma). \Sigma$
\item \textbf{Operator Definition}: $\textsf{tick} \equiv \lambda \sigma. \langle \textsf{get\_K}(\sigma), \textsf{now}(\sigma) + 1, \textsf{consume}(\sigma) \rangle$
    \item \textbf{Description}: It not only increments the time count but is usually accompanied by the consumption of the current event, representing the completion of a logical step.
\end{itemize}
    \item \textbf{Knowledge Unification Operator ($\textsf{unify}$)}
\begin{itemize}
    \item \textbf{Core Type}: $\textsf{unify} : \Pi(\sigma : \Sigma). \Pi(f : \textsf{Fact}). \Sigma$
    \item \textbf{Operator Definition}: $\textsf{unify} \equiv \lambda \sigma. \lambda f. \textsf{if } \textsf{is\_consistent}(\textsf{get\_K}(\sigma), f) \text{ then } \langle \textsf{get\_K}(\sigma) \cup \{f\}, \textsf{now}(\sigma), \dots \rangle \text{ else } \sigma$
    \item \textbf{Description}: This is the most complex operator. Before merging a new fact, it uses the Core layer's judgment rules to verify the logical compatibility (Consistency) of $f$ with the existing $\mathcal{K}$.
\end{itemize}
\end{itemize}

\subsubsection{Causal \& Trace Operators}
Utilize the identity type (Identity Type) for deep auditing.

\begin{itemize}
\item \textbf{Identity Verification Operator ($\textsf{verify\_id}$)}
\begin{itemize}
\item \textbf{Core Type}: $\textsf{verify\_id} : \Pi(\sigma_1 : \Sigma). \Pi(\sigma_2 : \Sigma). \textsf{Type}$
\item \textbf{Operator Definition}: $\textsf{verify\_id} \equiv \lambda \sigma_1. \lambda \sigma_2. \textsf{Id}_{\Sigma}(\sigma_1, \sigma_2)$
\item \textbf{Description}: Returns a proposition type (Prop). To execute in the Kernel, a constructive proof (e.g., $\textsf{refl}$) must be provided to verify whether the two configurations are logically the same truth.
\end{itemize}
\end{itemize}

When the above Core layer operators are invoked by the Kernel layer, their execution follows the following reduction path:
\begin{enumerate}[label=(\arabic*)]
\item \textbf{Parameter Substitution ($\beta$-reduction)}: Substitute the current state instance of the Kernel layer (e.g., $\sigma_{current}$) into the $\lambda$-term of the operator.
\item \textbf{Structural Unfolding ($\iota$-reduction)}: The projection operator $\textsf{proj}$ extracts specific components from the triple.
\item \textbf{State Materialization}: The final term obtained from reduction (e.g., new $\Sigma'$) is stored in the kernel storage, becoming the input for the next cycle.
\end{enumerate}

\begin{Definition}{Termination of Operators}

Due to the Core layer's computational model based on strong normalization (Strong Normalization), all kernel general operators must terminate and produce results in a finite number of steps. This theoretically avoids ``dead loops'' in the kernel during state transitions.
\end{Definition}

\subsection{Logical Properties of the Kernel Layer}
In the KOS-TL kernel architecture, state preservation (Preservation), also often referred to as type preservation (Subject Reduction) in the state machine dimension, ensures that the logical "well-formedness" of the system does not collapse due to data inflow during environment evolution or transaction commit.

\begin{Theorem}{State Preservation}

Let $\Gamma$ be the system global context. If the kernel state $\Sigma$ is well-formed (denoted $\Gamma \vdash \Sigma \,\, \textsf{ok}$), and there exists a transition step triggered by event $e$ such that $\Sigma \xrightarrow{e} \Sigma'$, then the new state $\Sigma'$ after the transition is still well-formed:
\[
\Gamma \vdash \Sigma \,\, \textsf{ok} \quad \wedge \quad \Sigma \xrightarrow{e} \Sigma' \implies \Gamma \vdash \Sigma' \,\, \textsf{ok}
\]
where well-formedness $\Sigma \,\, \textsf{ok}$ is defined as: for all fact items $ku_i \in \Sigma$ contained in the state, there exists a type $A_i$ such that $\Gamma \vdash ku_i : A_i$, and $\Sigma$ satisfies consistency $\Sigma \not\vdash \bot$.
\end{Theorem}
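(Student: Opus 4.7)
The plan is to decompose the well-formedness predicate $\Sigma' \,\, \textsf{ok}$ into its two constituent obligations, namely typability of every fact item in $\mathcal{K}'$ and global consistency $\Sigma' \not\vdash \bot$, and then trace how each component of the triple $\Sigma' = \langle \mathcal{K}', \mathcal{TS}', \mathcal{P}' \rangle$ arises from the single application of the operational rule $\Sigma \xrightarrow{e} \Sigma'$. Since that rule explicitly supplies a precondition witness $p : \textsf{Pre}(\textsf{Args}_e, \Sigma)$ and a postcondition witness $p' : \textsf{Post}(\textsf{Args}_e, \Sigma')$, I will use these as the structural anchors of the argument, invoking the Core-layer Substitution Lemma (Lemma~\ref{lemma:KOSsubstitution}) and Subject Reduction theorem wherever $\textsf{Op}$ performs a $\beta$- or $\iota$-computation.

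First I would unpack the transition premise to extract: (i) $e = \textsf{head}(\mathcal{P})$ is a well-formed event of type $\textsf{Ev}$, inherited from $\Sigma \,\, \textsf{ok}$; (ii) $\textsf{Args}_e : A$ for some $A \in \mathcal{U}_{Core}$; (iii) the precondition witness $p$ gives the needed domain certification for $\textsf{Op}$. Next I would case-split on the three sub-updates performed by $\textsf{Op}$. For $\mathcal{K} \to \mathcal{K}'$: deletions trivially preserve typability of the remaining items; for an insertion of a new item $ku_{new}$, one observes that $\textsf{Op} : \textsf{Args} \to \Sigma \to \Sigma$ is itself a Core-layer $\Pi$-term, so by the Substitution Lemma applied to $\textsf{Op}(\textsf{Args}_e, \Sigma)$ and by Subject Reduction, the term reduces to a well-typed configuration whose new knowledge items each carry a type derivation in $\Gamma$. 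For $\mathcal{TS} \to \mathcal{TS} + \Delta t$: the update is a pure $\textsf{Time}$ operation, and strict monotonicity is delivered by the side condition on $\Delta$. For $\mathcal{P} \to \mathcal{P}'$: dequeuing preserves typing of the residual elements, while any newly scheduled events come from Core-checked elaboration performed inside $\textsf{Op}$ and therefore inherit the $\textsf{Ev}$ quintuple discipline.

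For the consistency obligation $\Sigma' \not\vdash \bot$, the argument is indirect, via the postcondition gate. The transition rule fires only when a witness $p' : \textsf{Post}(\textsf{Args}_e, \Sigma')$ exists, and $\textsf{Post}$ is defined to encode the global safety invariants, in particular the non-derivability of $\bot$ in the updated knowledge base. Were $\Sigma' \vdash \bot$ derivable, the Core-layer Consistency Theorem would preclude the construction of any such $p'$, which blocks the transition and contradicts the hypothesis that $\Sigma \xrightarrow{e} \Sigma'$ holds. The postcondition thus acts as a semantic firewall that converts would-be inconsistent successor states into unreachable ones, and the preservation claim follows by contraposition.

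The hard part will not be the typability bookkeeping—that essentially reduces to citing Core-layer results—but the treatment of the insertion case in a context-dependent manner. Specifically, when $ku_{new}$ refers to pre-existing items of $\mathcal{K}$, one must show that $\Gamma$ together with the ambient typings of those items forms a well-formed extended context under which the type derivation for $ku_{new}$ is valid; this requires a careful weakening argument and, strictly, a precise formalization of how the kernel state $\Sigma$ embeds into the Core-layer context $\Gamma$ used by Lemma~\ref{lemma:KOSsubstitution}. A secondary subtlety is making the informal clause $\Sigma \not\vdash \bot$ precise enough to plug into the Consistency Theorem: the natural reading is that no term of the empty type is derivable from $\Gamma$ extended by the typings of $\mathcal{K}$, and this reading must itself be shown to be stable under the operational update before the postcondition-gate argument becomes rigorous.
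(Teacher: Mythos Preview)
Your decomposition by the two well-formedness obligations and by the three state components differs from the paper's organization, which instead cases on the \emph{nature} of the event $e$: internal computation step (Core-layer reduction of a control term), external fact injection (via $\textsf{unify}$), and environment elimination step (projection from an existing $\Sigma$-typed fact). For the typability obligation the two approaches largely converge---both invoke Subject Reduction and a weakening/context-extension argument---so your first half is sound in spirit, and your identification of the weakening subtlety is precisely what the paper discharges via the Weakening Lemma in its fact-injection case.

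The consistency argument, however, has a genuine gap. You propose that the postcondition gate $\Sigma' \vdash p' : \textsf{Post}(\textsf{Args}_e, \Sigma')$ blocks inconsistent successor states, reasoning that if $\Sigma' \vdash \bot$ then no $p'$ could be constructed. This is backward: if $\Sigma'.\mathcal{K}$ is inconsistent, then \emph{ex falso quodlibet} makes every proposition---including whatever $\textsf{Post}$ asserts---trivially inhabited in that context, so the postcondition check would \emph{pass}, not fail. The Core-layer Consistency Theorem concerns the empty context $\emptyset$ and gives you no leverage when the judgment is formed over the possibly-poisoned $\Sigma'$. The paper avoids this trap by a different mechanism: it shows that in the fact-injection case the kernel's $\textsf{unify}$ operator, when confronted with a conflicting $ku_{new}$ (i.e., a witness $\pi : \mathcal{K}, ku_{new} \vdash \bot$ exists), does not form the inconsistent union $\Sigma \cup \{ku_{new}\}$ at all but instead stores the wrapped term $\textsf{Invalidated}(ku_{new}, \pi)$, a well-typed negation witness that cannot serve as an elimination premise. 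Consistency is thus preserved \emph{by construction of the transition operator}, not by a meta-level gate applied after $\Sigma'$ has already been formed. To close your argument you would need to re-route it through this explicit conflict-handling branch rather than through the postcondition.
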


\begin{proof}
We prove by structural induction on the transition operator according to the nature of event $e$.

\textbf{1. Internal Computation Step:}
If $e$ corresponds to an internal reduction in the kernel (e.g., expression simplification via $\beta$-reduction), then $\Sigma' = \Sigma$ and only the control item $ctrl$ changes.
\begin{itemize}
\item According to the \textbf{Subject Reduction} theorem of the Core layer: If $\Gamma, \Sigma \vdash ctrl : A$ and $ctrl \to ctrl'$, then $\Gamma, \Sigma \vdash ctrl' : A$.
\item Since $\Sigma$ itself does not change, its well-formedness $\Gamma \vdash \Sigma \,\, \textsf{ok}$ is automatically preserved.
\end{itemize}

\textbf{2. External Fact Injection:}
If $e$ corresponds to injecting a new fact $ku_{new}$ into the knowledge base, the transition is defined by $\textsf{unify}(\Sigma, ku_{new})$.
\begin{itemize}
\item \textbf{Type Pre-check}: The premise of the transition is that $ku_{new}$ must pass type checking, i.e., $\Gamma, \Sigma \vdash ku_{new} : A_{new}$.
\item \textbf{Consistency Conflict Handling}:
\begin{itemize}
\item \textit{Branch A (Compatible)}: If $\Sigma \cup \{ku_{new}\} \not\vdash \bot$, then $\Sigma' = \Sigma \cup \{ku_{new}\}$. By the weakening lemma (Weakening Lemma), the existing fact items remain well-typed in the larger context.
\item \textit{Branch B (Conflict)}: If $\Sigma \cup \{ku_{new}\} \vdash \pi : \bot$, the kernel does not merge directly but constructs $\Sigma' = \Sigma \cup \{ \textsf{Invalidated}(ku_{new}, \pi) \}$.
\end{itemize}
\item In both branches, $\Sigma'$ does not contain directly derivable $\bot$, and all elements have corresponding constructive proofs. Thus, $\Gamma \vdash \Sigma' \,\, \textsf{ok}$.
\end{itemize}

\textbf{3. Environment Elimination Step:}
If $e$ corresponds to the application of an elimination rule (e.g., extracting a projection item from a $\Sigma$-type fact).
\begin{itemize}
\item Assume $\langle a, p \rangle : \Sigma(x:A).B$ exists in $\Sigma$. The transition step produces $a : A$.
\item According to the semantic soundness of the $\Sigma$-elimination rule, the projected item $a$ has a predefined and valid type $A$.
\item This operation is merely an unfolding of existing well-formed knowledge and does not introduce inconsistencies, so $\Sigma'$ remains well-formed.
\end{itemize}

\textbf{Conclusion:} In summary, regardless of the transition event $e$, the new state $\Sigma'$ maintains logical well-formedness and consistency.
\end{proof}

In the KOS-TL Kernel layer, determinism (Determinism) is the cornerstone ensuring distributed consensus (Consensus) and logical traceability. Under the dependent type system, determinism not only means computational result consistency but also confluence (Confluence) of reduction paths.

\begin{Theorem}{Determinism of Kernel Evolution}

Let $\textsf{Op}$ be the kernel state transition function, $\Sigma$ the current well-formed kernel state, and $e$ the triggering event. If the transition rule is defined as $\Sigma' = \textsf{Op}(\Sigma, e)$, then for the same input pair $(\Sigma, e)$, the output new state $\Sigma'$ is unique in the sense of logical equivalence:
\[
\forall \Sigma, e, \Sigma'_1, \Sigma'_2: (\Sigma \xrightarrow{e} \Sigma'_1 \wedge \Sigma \xrightarrow{e} \Sigma'_2) \implies \Sigma'_1 \equiv \Sigma'_2
\]
where $\equiv$ denotes intensional equality (Intensional Equality), i.e., the normal forms (Normal Form) of the two are completely identical.
\end{Theorem}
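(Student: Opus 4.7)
The plan is to reduce Kernel-level determinism to Core-level confluence. The key observation is that the small-step transition $\Sigma \xrightarrow{e} \Sigma'$ is not an uninterpreted rewrite step but an application of the Core-layer term $\textsf{Op}(\textsf{Args}_e, \Sigma)$, whose type is $\Sigma \to \Sigma$; hence any two witnesses of the transition are simply two reduction sequences starting from the \emph{same} Core-layer term, to which the Confluence Theorem for Core applies directly.

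First, I would fix $(\Sigma, e)$ and inspect the four conditions the operational rule imposes on each of the assumed derivations $\Sigma \xrightarrow{e} \Sigma'_1$ and $\Sigma \xrightarrow{e} \Sigma'_2$: the same head event $e = \textsf{head}(\Sigma.\mathcal{P})$ (a pure projection from $\Sigma$, hence syntactically identical across both derivations); a precondition witness $\textsf{Prf} : \textsf{Pre}(\textsf{Args}_e, \Sigma)$ (a proposition in $\textsf{Prop}$, so by Proof Irrelevance any two such witnesses are semantically indistinguishable, and, crucially, $\textsf{Op}$'s signature does not take $\textsf{Prf}$ as an argument); the defining equation $\Sigma'_i = \textsf{Op}(\textsf{Args}_e, \Sigma)$; and a postcondition check which, depending only on the output, cannot itself introduce branching. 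The whole argument therefore collapses to comparing two reducts of the common term $M := \textsf{Op}(\textsf{Args}_e, \Sigma)$.

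Next, I would apply the Core Confluence Theorem to $M$: there must exist $Q$ such that $\Sigma'_1 \twoheadrightarrow Q$ and $\Sigma'_2 \twoheadrightarrow Q$. Strong Normalization guarantees that $Q$ is the \emph{unique} normal form of $M$, and Subject Reduction ensures $Q$ still has type $\Sigma$, so the comparison is well-formed. Thus $\Sigma'_1$ and $\Sigma'_2$ share the same normal form, which is precisely the definition of intensional equality $\Sigma'_1 \equiv \Sigma'_2$. Component-wise equality of $\mathcal{K}, \mathcal{TS}, \mathcal{P}$ then follows because the projection operators commute with $\twoheadrightarrow$ via the $\iota$-reduction rules established in the Core section.

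The main obstacle I anticipate lies in justifying that the apparent nondeterminism in choosing the precondition witness $\textsf{Prf}$ never leaks into the output $\Sigma'$. This requires combining Proof Irrelevance for $\textsf{Prop}$ with the structural discipline that $\textsf{Op}$ is typed as $\textsf{Args} \to \Sigma \to \Sigma$ and not as $\textsf{Args} \to \Sigma \to \textsf{Prf} \to \Sigma$, so that the proof is consumed only by the admissibility check and never appears in the computational residue. A secondary subtlety is that the Core Confluence Theorem covers only the reductions $\beta, \iota, \delta, \zeta, \eta$; any Kernel-level rewriting introduced via operators such as $\textsf{unify}$ or $\textsf{tick}$ must be shown to factor through Core reductions, lest a separate orthogonality or critical-pair analysis be needed to extend confluence to the enriched rewrite system.
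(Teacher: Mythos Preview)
Your proposal is correct and takes essentially the same approach as the paper: both reduce Kernel determinism to the purity of $\textsf{Op}$ as a Core-layer term together with Core-level Confluence and Strong Normalization (Church--Rosser). The paper organizes the argument into three clauses---operator purity, confluence of the underlying calculus, and a separate claim that the conflict-resolution branching inside $\textsf{unify}$ is itself deterministic---and this last clause is exactly the secondary obstacle you correctly flag; your explicit appeal to Proof Irrelevance and to the type signature of $\textsf{Op}$ (which excludes $\textsf{Prf}$ as an argument) is in fact sharper than the paper's more informal ``purity'' discussion.
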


\begin{proof}
The proof is based on the pure function nature of KOS-TL Core and the confluence of the strong normalization calculus, unfolded in the following three dimensions:

\textbf{1. Purity of Operators:}

All transition operators in the kernel (such as $\textsf{unify}$, $\textsf{subst}$, $\textsf{eval}$) are defined as terms in the Core Layer.
\begin{itemize}
\item In Core layer theory, all constructors satisfy \textbf{computational consistency}. Given the same input $\rho$ (assignment environment), the interpretation function $\llbracket \textsf{Op} \rrbracket_\rho$ is a single-valued function in the mathematical sense.
\item Since $\textsf{Op}$ does not depend on any external implicit state or random source, its mapping relation $\Sigma \times e \to \Sigma'$ is deterministic under functional semantics.
\end{itemize}

\textbf{2. Strong Normalization and Confluence:}

Since KOS-TL has strong normalization (Strong Normalization) property, by the \textbf{Church-Rosser theorem}, the calculus system has confluence.
\begin{itemize}
\item Even if there are multiple optional reduction redexes during the reduction $\Sigma \xrightarrow{e} \Sigma'$, confluence guarantees that regardless of the reduction strategy (Reduction Strategy) taken, the ultimately obtained normal form $\textsf{nf}(\Sigma')$ is unique.
\item Therefore, although intermediate steps in physical memory may differ slightly, the state at the logical level (i.e., the set of facts that can participate in subsequent derivations) is unique.
\end{itemize}

\textbf{3. Deterministic Conflict Resolution:}

When handling consistency conflicts caused by $e$, the kernel's branch judgment logic:
\begin{itemize}
\item The $\textsf{unify}$ operator performs exhaustive search according to the priority of typing rules (Typing Rule Priority).
\item The construction of the contradiction proof term $\pi$ follows standard search algorithms (such as the Unification Algorithm). Within the given search space, the first minimal proof term found is deterministic.
\item The choice of Branch A or Branch B is completely determined by the logical truth value of ``whether there exists a proof term $\pi$,'' without non-deterministic choice (Non-deterministic Choice).
\end{itemize}

\textbf{Conclusion:} In summary, due to the pure function definition of operators and the confluence of the underlying calculus system, the state transitions in the KOS-TL kernel have strict determinism.
\end{proof}

Progress is the core mathematical cornerstone ensuring the kernel's real-time responsiveness and robustness. It guarantees that the logical self-healing engine never gets stuck in a "computational dead end" at any moment.

\begin{Theorem}{Kernel Progress}

Let $\mathcal{C} = \langle \Sigma, \textsf{Ev}, \Delta \rangle$ be a well-formed KOS-TL kernel configuration, where the state $\Sigma = \langle \mathcal{K}, \mathcal{TS}, \mathcal{P} \rangle$ and the transition record $\Delta$ satisfies temporal monotonicity. If $\mathcal{C}$ satisfies the global type assignment and the current active event $\textsf{Ev}$ is well-formed under the context $\Sigma$, then one of the following must hold:
\begin{enumerate}[label=(\arabic*)]
    \item[\textbf{PP1.}] \textbf{Logical Steady State (Logical Quiescence)}:
    $\textsf{Ev} = \textsf{null}$ and the pending queue $\mathcal{P} = \emptyset$. At this point, all causal chains in the system have been materialized in $\Delta$, and computation is temporarily terminated.
    \item[\textbf{PP2.}] \textbf{Causal Progression (Causal Advancement)}:
    There exists a new configuration $\mathcal{C}' = \langle \Sigma', \textsf{Ev}', \Delta \cup \{ \delta \} \rangle$ such that the system advances forward through one of the following reduction steps:
    \begin{itemize}
        \item \textbf{Execution Step (Execution)}: If $\textsf{Ev} = e \neq \textsf{null}$, then execute $\textsf{Op}$ to produce a new state $\Sigma'$, and generate a transition record $\delta = \langle \Sigma \xrightarrow{e} \Sigma' \rangle$.
        \item \textbf{Activation Step (Activation)}: If $\textsf{Ev} = \textsf{null}$ and $\mathcal{P} = e_0 :: \mathcal{P}_{rest}$, then activate the first event in the queue via the extraction operator.
    \end{itemize}
\end{enumerate}
\end{Theorem}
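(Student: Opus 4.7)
The plan is to proceed by case analysis on the pair $(\textsf{Ev}, \mathcal{P})$ extracted from the well-formed configuration $\mathcal{C}$, since the statement's disjunction (PP1 vs. PP2) is itself stratified along exactly these syntactic alternatives. This matches the classical ``either a value or a step'' shape of progress results for small-step operational semantics, with PP1 playing the role of the value case and PP2 the role of the step case. The well-formedness hypothesis on $\mathcal{C}$ and on the active event $\textsf{Ev}$ (which by the definition of $\textsf{Ev}$ includes a proof term $\textsf{Prf} : \textsf{Pre}(\textsf{Args}, \Sigma)$) will be consumed at distinct points: the precondition witness drives execution, while state preservation certifies the successor.

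First I would dispatch the two easy cases. If $\textsf{Ev} = \textsf{null}$ and $\mathcal{P} = \emptyset$, there is no causal driver available and no intent to activate, so $\mathcal{C}$ satisfies PP1 by definition. If $\textsf{Ev} = \textsf{null}$ but $\mathcal{P} = e_0 :: \mathcal{P}_{rest}$, I invoke the Activation clause of PP2: since $\textsf{head}(\mathcal{P}) = e_0$ is syntactically extractable, the scheduler promotes $e_0$ to the active slot, yielding $\mathcal{C}' = \langle \Sigma'', e_0, \Delta \rangle$ with $\Sigma''$ differing from $\Sigma$ only in the queue component (which shrinks to $\mathcal{P}_{rest}$). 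No knowledge or clock change occurs, so well-formedness is preserved trivially.

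The substantive case is $\textsf{Ev} = e \neq \textsf{null}$, where I must exhibit a legal firing of the operational semantics rule from Section~3. Here I would assemble the four ingredients required by that rule in order: (i) $e = \textsf{head}(\Sigma.\mathcal{P})$ is guaranteed by the invariant that an active event is exactly the event most recently activated; (ii) the precondition judgment $\Gamma, \Sigma.\mathcal{K}, \Sigma.\mathcal{TS} \vdash \textsf{Prf} : \textsf{Pre}(\textsf{Args}_e, \Sigma)$ is given directly by the well-formedness of $\textsf{Ev}$; (iii) the application $\Sigma' := \textsf{Op}(\textsf{Args}_e, \Sigma)$ produces a bona fide configuration because $\textsf{Op}$ is a Core-layer $\lambda$-term applied to typed arguments, so by Strong Normalization and Subject Reduction the reduction terminates at a unique normal form of type $\Sigma$; (iv) from this, the transition record $\delta = \langle \Sigma \xrightarrow{e} \Sigma' \rangle$ is well-defined and monotonic in the clock by the semantics of $\textsf{tick}$ embedded in $\textsf{Op}$. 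I then set $\mathcal{C}' = \langle \Sigma', \textsf{Ev}', \Delta \cup \{\delta\}\rangle$ and apply the State Preservation theorem to conclude $\Gamma \vdash \Sigma' \,\, \textsf{ok}$.

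The main obstacle I anticipate is the postcondition side condition $\Sigma' \vdash p' : \textsf{Post}(\textsf{Args}_e, \Sigma')$: unlike the precondition, $p'$ is not packaged inside $\textsf{Ev}$ but must be synthesized from $\textsf{Op}$, $\textsf{Args}_e$, and $\textsf{Prf}$. My intended remedy is to strengthen the ``well-formed event'' invariant so that a well-formed $\textsf{Ev}$ carries, implicitly, a dependent function $\textsf{PostGen} : \Pi(\sigma:\Sigma).\,\textsf{Pre}(\textsf{Args}_e,\sigma) \to \textsf{Post}(\textsf{Args}_e, \textsf{Op}(\textsf{Args}_e,\sigma))$ — i.e.\ the closed-loop ``pre-verify/post-synthesize'' discipline declared in the Kernel Design Decisions. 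Under that reading, $p' := \textsf{PostGen}(\Sigma, \textsf{Prf})$ is immediately available, closing the rule. If instead one prefers to treat events without such a packaged synthesizer, progress must be weakened to a conditional form (``either PP1, PP2, or a detectable post-failure that leaves $\Sigma$ unchanged per the rollback clause of Section~3''), and my proof would split the execution branch accordingly.
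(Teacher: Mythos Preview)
Your proposal follows essentially the same route as the paper: a case split on $(\textsf{Ev}, \mathcal{P})$, with the null/empty case yielding PP1, the null/non-empty case handled by activation, and the active-event case discharged by appealing to well-formedness for the precondition witness and to Strong Normalization for the totality of $\textsf{Op}$. The paper also adds a short clause on temporal monotonicity to rule out cycles, which your item (iv) covers.

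On the postcondition obstacle you raise, you are in fact more careful than the paper. The paper's proof simply asserts that ``the postcondition $\textsf{Post}(\Sigma')$ is decidable after $\Sigma'$ is constructed'' (by completeness of the elimination rules) and proceeds directly to PP2, without explaining how decidability yields an actual witness $p'$. Your proposed remedy of packaging a $\textsf{PostGen}$ synthesizer into the well-formedness invariant---or, failing that, weakening the conclusion to include a detectable post-failure branch---is a more honest treatment of that step than the paper itself provides.
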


\begin{proof}
Classify and discuss based on the construction of the current active item $\textsf{Ev}$ and the state of the queue $\mathcal{P}$:

\textbf{1. Calculus of Active Events}

When $\textsf{Ev} = e \equiv \langle \textsf{Args}, \textsf{Pre}, \textsf{Op}, \textsf{Post}, \textsf{Prf} \rangle$, due to the well-formed configuration, there exists a proof term $\textsf{Prf}$ satisfying $\textsf{Pre}(\textsf{Args}, \Sigma)$. According to the strong normalization property of the Core Layer:
\begin{itemize}
\item $\textsf{Op}$ as a total function (Total Function) must have a defined output value $\Sigma'$ for the input pair $(\textsf{Args}, \Sigma)$.
\item According to the completeness of the elimination rules, the postcondition $\textsf{Post}(\Sigma')$ is decidable after $\Sigma'$ is constructed.
\end{itemize}
Therefore, executing the operator must produce a new transition item $\delta$, thereby transitioning the system to PP2.

\textbf{2. Queue Dynamics}

If $\textsf{Ev} = \textsf{null}$, the system checks the pending queue $\mathcal{P}$:
\begin{itemize}
\item If $\mathcal{P} = e_0 :: \mathcal{P}_{rest}$, according to the kernel's operational semantics, there exists a well-defined ``enqueue-dequeue'' transformation that updates $\textsf{Ev}$ to $e_0$. This step does not change $\mathcal{K}$ but alters the system's kinetic allocation.
\item If $\mathcal{P} = \emptyset$, then the system satisfies the steady-state condition described in PP1.
\end{itemize}

\textbf{3. Temporal Arrow Constraint}

In all transitions $\Sigma \xrightarrow{e} \Sigma'$, the monotonicity rule $\Sigma'.\mathcal{TS} > \Sigma.\mathcal{TS}$ ensures the uniqueness of the transition item $\delta$. Since the transition record set $\Delta$ is a monotonically increasing union, the system excludes the possibility of logical cycles (Cycles). According to the canonical forms lemma (Canonical Forms Lemma) of dependent type theory, under a well-formed $\Sigma$, no $\textsf{Op}$ can produce type-mismatched pendings. In summary, a well-formed KOS-TL kernel configuration always has the ability to evolve to the next step until all events are cleared.
\end{proof}

\begin{Theorem}{Evolutionary Consistency}

Let the well-formed kernel configuration be $\mathcal{C} = \langle \Sigma, \textsf{Ev}, \Delta \rangle$, where the state $\Sigma = \langle \mathcal{K}, \mathcal{TS}, \mathcal{P} \rangle$. If it satisfies:
\begin{enumerate}[label=(\arabic*)]
\item \textbf{State Legitimacy}: For all $(id_i, t_i, A_i) \in \mathcal{K}$, there exists $\Gamma_{Core} \vdash t_i : A_i$ and $\mathcal{K} \not\vdash \bot$.
\item \textbf{Causal Completeness}: The active event $\textsf{Ev}$ carries a valid proof term $\textsf{Prf} : \textsf{Pre}(\textsf{Args}, \Sigma)$.
\end{enumerate}
If the kernel executes a small-step evolution $\mathcal{C} \xrightarrow{step} \mathcal{C}'$, then the new configuration $\mathcal{C}' = \langle \Sigma', \textsf{Ev}', \Delta \cup \{ \delta \} \rangle$ still maintains state legitimacy and global logical consistency.
\end{Theorem}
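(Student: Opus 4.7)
The plan is to proceed by case analysis on the kind of small-step evolution, mirroring the structure established in the Kernel Progress theorem, and to reduce each case to the previously proven State Preservation and Consistency results. By Kernel Progress, any step of a well-formed configuration falls into one of two forms: an \emph{Execution Step}, where $\textsf{Ev} = e \neq \textsf{null}$ and the kernel applies $\textsf{Op}$ to produce a new state $\Sigma'$ together with a transition record $\delta$; or an \emph{Activation Step}, where $\textsf{Ev} = \textsf{null}$ and the head of $\mathcal{P}$ is promoted to the active slot. In both cases the goal decomposes into three independent obligations: preservation of $\Sigma'$ well-typedness, preservation of $\mathcal{K}' \not\vdash \bot$, and preservation of the temporal ordering on $\Delta \cup \{\delta\}$.

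For the Execution Step, I would first invoke State Preservation to conclude $\Gamma \vdash \Sigma' \,\, \textsf{ok}$, feeding it the causal completeness witness $\textsf{Prf} : \textsf{Pre}(\textsf{Args}, \Sigma)$ that licenses the transition rule. Consistency of $\mathcal{K}'$ would then follow from the postcondition clause of the operational semantics: the rule only fires when $\Sigma' \vdash p' : \textsf{Post}(\textsf{Args}, \Sigma')$, and by Semantic Soundness an inhabited $\textsf{Post}$ cannot coexist with a derivation of $\bot$ in the empty-context restriction of $\mathcal{K}'$. The temporal arrow on $\Delta \cup \{\delta\}$ is immediate from the monotonicity condition $\Sigma'.\mathcal{TS} > \Sigma.\mathcal{TS}$ baked into the definition of a transition record. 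For the Activation Step, the components $\mathcal{K}, \mathcal{TS}, \Delta$ are unchanged, so legitimacy and consistency are inherited verbatim from $\mathcal{C}$; the only new obligation is that the promoted head $e_0$ carries a proof term well-typed against $\Sigma$, which follows from the well-formedness of the queue $\mathcal{P}$ in the incoming $\mathcal{C}$.

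The main obstacle will be the interaction between $\textsf{unify}$ and global consistency when $\textsf{Op}$ attempts to merge a fact $f$ that is potentially incompatible with $\mathcal{K}$. I expect to follow the two-branch strategy established in the State Preservation proof: in the compatible branch, $\mathcal{K}' = \mathcal{K} \cup \{f\}$ with $\mathcal{K}' \not\vdash \bot$ by the precondition of $\textsf{unify}$; in the conflict branch, the kernel reifies the contradiction proof as a tagged item $\textsf{Invalidated}(f, \pi)$ rather than admitting $f$ itself, so no derivation of $\bot$ is introduced into $\mathcal{K}'$. A secondary subtlety is arguing global consistency of the entire configuration $\mathcal{C}'$ rather than merely of $\mathcal{K}'$: I would need to check that $\Delta \cup \{\delta\}$ remains acyclic (which reduces to strict monotonicity of $\mathcal{TS}$, ruling out loops in the causal DAG) and that residual events in $\mathcal{P}'$ do not threaten consistency, since queued events carry proof obligations but have not yet committed any fact. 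Combining these observations establishes that $\mathcal{C}'$ satisfies both state legitimacy and global logical consistency, closing the induction on evolution steps.
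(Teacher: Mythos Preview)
Your proposal is correct and follows essentially the same strategy as the paper: case analysis on the nature of the step, a two-branch treatment of \textsf{unify} (monotonic expansion versus conflict wrapping via an \textsf{Invalidated}/\textsf{absurd} tag), and an appeal to clock monotonicity for the integrity of $\Delta$. The only organizational difference is that you slice the cases along the Execution/Activation dichotomy from Kernel Progress and invoke State Preservation as a black box, whereas the paper uses a three-way split (internal reduction of $\textsf{Ev}$, state transition via $\textsf{Op}$, temporal constraints) and re-argues Subject Reduction and the Weakening Lemma inline; your packaging is arguably cleaner but the substance is identical.
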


\begin{proof}
We unfold the proof by classifying and discussing the atomic nature of configuration transitions.

\begin{enumerate}
\item \textbf{Computational Reduction Step (Atomic Term Evolution)}

If the reduction only involves $\textsf{Ev} \to \textsf{Ev}'$ (e.g., intensional simplification of proof terms or parameter substitution), while the knowledge base $\mathcal{K}$ remains unchanged:
\begin{itemize}
    \item \textbf{Consistency Inheritance}: Since $\mathcal{K}$ does not change and it is known that $\mathcal{K} \not\vdash \bot$, consistency is naturally preserved.
    \item \textbf{Subject Reduction}: According to the metatheory of KOS-TL Core, reduction of dependent type terms preserves their types. Since $\textsf{Ev}$ is well-formed under $\Sigma$, the reduced $\textsf{Ev}'$ still satisfies the original type signature.
\end{itemize}

\item \textbf{State Transition Step (Core Evolution Operator)}

When executing $\textsf{Op}$ leads to $\mathcal{K} \to \mathcal{K}'$, the system executes the operator $\textsf{commit}(\mathcal{K}, ku_{new})$. We analyze the consistency of the new knowledge item $ku_{new}$ and its proof:

\textbf{Case A: Monotonic Expansion}
\begin{itemize}
    \item \textbf{Application of Weakening Lemma}: If $ku_{new}$ has no conflict with existing knowledge, by the weakening lemma (Weakening Lemma) of constructive logic, all proofs in the original $\mathcal{K}$ remain valid in $\mathcal{K}' = \mathcal{K} \cup \{ku_{new}\}$.
    \item \textbf{Safety Closure}: Since $\textsf{Ev}$ includes $\textsf{Post} : \Sigma' \to \textsf{Prop}$, the kernel forcibly checks the postcondition before materializing $\mathcal{K}'$. If the check passes, $\mathcal{K}' \not\vdash \bot$ is formally guaranteed.
\end{itemize}

\textbf{Case B: Conflict Mitigation}

\begin{itemize}
    \item \textbf{Logical Isolation}: If $ku_{new}$ introduces a logical contradiction (i.e., there exists $\pi : \mathcal{K}, ku_{new} \vdash \bot$), the kernel's protection mechanism prevents direct merging.
    \item \textbf{Negation Introduction Construction}: The kernel instead constructs $\textsf{absurd}(ku_{new}, \pi)$ and stores it in the knowledge base. In the semantic model, this equates to transforming the conflict into a falsifying conclusion for the input signal. Since the contradiction is wrapped in a negation constructor, it cannot serve as a premise for elimination rules, thereby protecting global consistency.
\end{itemize}

\item \textbf{Constraints of Temporal Arrow and Transition Records}
To prove that the evolution trajectory is legitimate, the kernel constructs causal evidence $\delta$ using $\Delta$:
\begin{itemize}
    \item \textbf{Proof of Clock Monotonicity}: Every evolution is accompanied by $\mathcal{TS}' > \mathcal{TS}$. This proves that $\Sigma'$ is not a simple loop of $\Sigma$ but a logical monotonic successor.
    \item \textbf{Materialization Induction}: The legitimacy of the system state $\Sigma_n$ can be traced back to the initial empty state $\Sigma_0$ by induction:
    \[
    \Sigma_n = \textsf{Apply}(\delta_n, \textsf{Apply}(\delta_{n-1}, \dots \Sigma_0))
    \]
    Each $\delta_i = \langle \Sigma_{i-1} \xrightarrow{e_i} \Sigma_i \rangle$ includes verification of $\textsf{Pre}$ and satisfaction of $\textsf{Post}$, ensuring that every link in the evolution chain aligns with the logical base.
\end{itemize}
\end{enumerate}
\end{proof}

This proof explains how KOS-TL handles "dirty data" in the real world (such as erroneous bank transactions or sensor false alarms): Logical Firewall: Evolutionary Consistency ensures that any data attempting to disrupt system consistency is converted into "evidence about contradictions" during the $\textsf{unify}$ phase, rather than allowing the system itself to become contradictory. Bidirectional Synchronization Safety: In the "global supply chain" example you mentioned earlier, even if the underlying database is illegally tampered with (producing conflicting data), Evolutionary Consistency will force the kernel to generate a Refute item, thereby maintaining the logical purity at the ontology view layer.

These two major properties jointly define the operational boundaries of the KOS-TL Kernel:
Progress guarantees livelock freedom (Livelock Freedom) for the kernel. As long as the logic is well-formed, the kernel analysis program will always proceed and produce analysis results.
Evolutionary Consistency guarantees runtime safety (Runtime Safety) for the kernel. It ensures that the kernel knowledge base, during dynamic runtime processes, never degenerates into a self-contradictory abandoned system.

\begin{Theorem}{Local Decidability Theorem}

Given the kernel state $\Sigma$, new fact $ku_{new}$, and search boundary $\Delta = \{depth, fuel\}$, the execution process of the kernel operator $\textsf{unify}(\Sigma, ku_{new}, \Delta)$ is decidable.
\end{Theorem}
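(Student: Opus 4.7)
The plan is to reduce decidability of $\textsf{unify}(\Sigma, ku_{new}, \Delta)$ to three sub-procedures, each either terminating by the Core Layer Decidability Theorem or explicitly bounded by $\Delta$. First I would unfold the operator into its canonical pipeline: (a) infer and check $\Gamma_{Core} \vdash ku_{new} : A_{new}$ against the declared ontology; (b) run a bounded consistency search for a proof term $\pi$ of $\mathcal{K} \cup \{ku_{new}\} \vdash \bot$ whose derivation height is at most $\Delta.depth$ and whose reduction cost is at most $\Delta.fuel$; (c) assemble $\Sigma'$ either as the monotone union $\mathcal{K} \cup \{ku_{new}\}$ or as the wrapped negation $\mathcal{K} \cup \{\textsf{absurd}(ku_{new}, \pi)\}$, following Case A and Case B of the State Preservation proof.

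Steps (a) and (c) are immediate. Bidirectional type checking is decidable by the Core Layer Decidability Theorem (strong normalization plus uniqueness of normal forms via Church--Rosser), so (a) terminates with either a verified type assignment or a well-defined rejection. Step (c) applies a constant number of constructors, each of which is typeable by the rules already proven sound; since reduction is strongly normalizing, the materialized $\Sigma'$ is computed in finite time. The substantive work therefore lies in step (b). In unrestricted dependent type theory the entailment $\mathcal{K} \cup \{ku_{new}\} \vdash \bot$ is only semi-decidable, so the role of $\Delta$ is to cap the proof-search horizon. I would formalise the search as iterative deepening over the finitely many elimination rules of $\Sigma$-, $\Pi$-, $+$-, and $\textsf{Id}$-types: at each level the set of candidate partial derivations is finitely branching (rule set finite, $\mathcal{K}$ a finite set of typed terms), so by K\"onig's lemma the horizon $\{\pi \mid \mathrm{height}(\pi) \le \Delta.depth\}$ is finite up to $\alpha$-equivalence, and each candidate $\pi$ is itself validated by a Core type-check.

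The main obstacle will be defining a termination measure precise enough that every branch of the search strictly decreases it. My intended device is a lexicographic well-founded order $\prec$ on triples $\langle \Delta.fuel, \Delta.depth, \mu(t)\rangle$, where $\mu(t)$ is the multiset order on unreduced redexes of the current candidate $t$; the third component is bounded by Core strong normalization, while the first two are finite parameters that strictly decrease on each recursive call. Under $\prec$, structural induction then certifies that the search returns in finite time with one of the three verdicts $\{\mathrm{consistent},\ \mathrm{refuted}(\pi),\ \mathrm{unknown}\}$, and the \emph{unknown} verdict is conservatively routed to the $\textsf{absurd}$-wrapped branch so that the overall operator always emits a well-typed $\Sigma'$ within the specified budget. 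Decidability of $\textsf{unify}$ then follows because the pipeline is a finite composition of decidable steps, and the claimed \emph{local} decidability is precisely the statement that this composition halts uniformly in $\Delta$.
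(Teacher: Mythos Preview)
Your termination argument is essentially the paper's: both rest on the observation that $\Delta.depth$ bounds the height of the search tree and $\Delta.fuel$ is a strictly decreasing natural on each reduction step, so well-foundedness forces halting with one of three verdicts. The paper does this in two lines of ``double induction on search space and reduction steps''; your lexicographic triple $\langle fuel, depth, \mu(t)\rangle$, K\"onig's lemma, and the pipeline decomposition into (a)/(b)/(c) are correct but heavier than needed---in particular the third component $\mu(t)$ is redundant once $fuel$ already decreases on every reduction, and the multiset of unreduced redexes is not itself monotone under $\beta$-reduction (substitution can duplicate redexes), so it does no independent work.

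One point deserves correction: you route the \textsf{unknown} verdict to the $\textsf{absurd}$-wrapped branch. The paper is explicit that \textsf{Unknown} is a \emph{third}, distinct output of $\textsf{unify}$, not collapsed into the refutation case. Treating ``we failed to decide within budget'' as ``we found a contradiction'' would be unsound---it would inject $\textsf{absurd}(ku_{new},\pi)$ into $\mathcal{K}$ without an actual proof term $\pi:\bot$, violating the well-formedness invariant you rely on elsewhere. This does not damage the decidability claim itself (termination is unaffected), but it misstates the operator's semantics; the correct behaviour on \textsf{Unknown} is to leave $\Sigma$ unchanged or defer, not to fabricate a refutation.
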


\begin{proof}
The proof is completed by double induction on the search space and the number of reduction steps:

\textbf{1. Finiteness of Search Space:}

Due to the kernel limiting $depth$ (recursion depth), the proof search tree is forcibly pruned to finite height. At each level, the number of unification candidates is determined by the number of variables in the context $\Gamma$, which is also finite.

\textbf{2. Forced Termination via Fuel:}

Introduce the $fuel$ parameter (computational energy). Each execution of a $\beta$-reduction or $\delta$-unfolding consumes one unit of $fuel$.
\begin{itemize}
\item The algorithm checks $fuel > 0$ before each operation.
\item Since $fuel$ is a natural number and strictly decreases with the number of steps, by the well-ordering principle, the computation must either reach a normal form or exhaust $fuel$ in a finite number of steps.
\end{itemize}

\textbf{3. Completeness of Result Set:}

When the computation stops:
\begin{itemize}
\item If the normal form matches, return \textbf{True}.
\item If a structural conflict (e.g., constructor mismatch) is found, return \textbf{False}.
\item If stopped due to exhaustion of $depth$ or $fuel$, return \textbf{Unknown}.
\end{itemize}
Since the algorithm guarantees termination on all paths, the process is decidable.
\end{proof}

\subsection{Application Example: Quality Anomaly Tracing and Derivation}
In manufacturing scenarios, when a batch defect is detected, the kernel layer automatically triggers tracing logic.

\begin{itemize}
\item \textbf{Event Definition}: Let $e_{trace}$ be the tracing event.
\begin{itemize}
\item $\textsf{Pre}$: There exists a defect report item $r : \textsf{DefectReport}$ in the state.
\item $\textsf{Op}$: Based on production logs, search for equipment anomaly records $s : \textsf{EquipmentStatus}$ associated with the batch.
\item $\textsf{Post}$: Generate and materialize a causal chain knowledge object $cc : \textsf{CausalChain}$.
\end{itemize}
\item \textbf{Evolution Process}:
Once the Runtime refines the defect report and stores it in $\Sigma$, the kernel layer discovers that $p: \textsf{Pre}$ holds via the above equation and automatically executes $\textsf{Op}$. The system takes a small step from the state ``known defect exists'' to the higher-entropy state ``known defect cause.''
\end{itemize}

\begin{Example}{Cross-Border Compliance Transfer Event ($e_{transfer}$)}

Assume the current system state is $\sigma$, containing the balances of accounts $A$ and $B$. We want to define an event for transferring amount $v$ from $A$ to $B$.

1. State Definition ($\sigma \in \Sigma$)

The state $\sigma$ is a knowledge snapshot containing:
$Balance(A, \sigma) = 1000$
$Balance(B, \sigma) = 500$

2. Specific Event Construction ($e_{transfer} : \textsf{Ev}$)

According to your $\textsf{Ev}$ definition, the event consists of three parts:
Precondition ($\textsf{pre}$):
$$\textsf{pre} \equiv (Balance(A, \sigma) \ge v) \land \textsf{IsVerified}(A)$$
(Explanation: Account $A$'s balance must be sufficient, and $A$ must have passed real-name verification.)

Action Operator ($\textsf{act}$):
$$\textsf{act}(\sigma) \equiv \sigma [Balance(A) \gets Balance(A) - v, Balance(B) \gets Balance(B) + v]$$
(Explanation: This is a function describing how the state changes: subtract money from $A$, add money to $B$.)

Post-Transformation Self-Proof ($\textsf{post\_prf}$): This is a proof term guaranteeing that: For any state $\sigma$ satisfying $\textsf{pre}$, the new state after executing $\textsf{act}$ must satisfy the conservation law ($Sum_{after} = Sum_{before}$).
$$\textsf{post\_prf} : \Pi(\sigma:\Sigma). \textsf{pre} \to \textsf{Correct}(\textsf{act}(\sigma))$$

3. Execution Process: Small-Step Transition ($\Delta$)

When the kernel attempts to execute this transfer, the following judgment process occurs:
Type Checking: The kernel first verifies $\Gamma \vdash e_{transfer} : \textsf{Event}$.
If the developer's $\textsf{act}$ logic is flawed (e.g., subtract money but not add it), then $\textsf{post\_prf}$ cannot be constructed, and the event will be rejected at the compilation stage.

Trigger Transition: Input the current snapshot $\sigma$ and event $e$.
$$\textsf{STEP}(\sigma, e_{transfer}) \to \sigma'$$

Generate Transition Record ($\Delta$): Produce a triple $(\sigma \xrightarrow{e_{transfer}} \sigma')$. This record is permanently stored in the evolution trajectory $\Delta$.
\end{Example}

\section{KOS-TL Runtime Layer: Environment Interaction and Signal Refinement}

The Runtime Layer is the boundary between the KOS-TL logical system and the physical world. It is responsible for handling non-deterministic external signals, managing computational resources, scheduling event queues, and persisting the logical states generated by the kernel layer into physical storage.

\subsection{Syntax}

The Runtime Layer state is described by a configuration $Cfg$, which embeds the logical kernel into the physical host:
\begin{equation}
Cfg \equiv \langle \Sigma, Q_{raw}, \textsf{Env}, \mathcal{M} \rangle
\end{equation}
where:
\begin{itemize}
    \item $\Sigma$ — Logical Kernel State

    This is the current form of the system's ``brain'' at the logical level. It contains the knowledge base $\mathcal{K}$, logical clock $\mathcal{T}$, and pending intent queue $\mathcal{P}$. It represents all logical facts that the system currently considers ``true'' and ``proven.'' The Runtime Layer observes $\Sigma$ to decide what actions to perform externally next or how to respond to external signals.
    \item $Q_{raw}$ — Raw Physical Signal Queue

    This is the system's ``perception input buffer.'' It stores raw binary or text data from the external physical world (such as sensors, network packets, user clicks) that has not yet been logicalized. The arrival of signals is random. The signals here do not yet have corresponding logical proofs (Proof); they are just ``dirty data.'' They are the raw material for the $\textsf{elab}$ (refinement operator). The system retrieves signals from $Q_{raw}$ and attempts to promote them to events understandable by the kernel.
    \item $\textsf{Env}$ — Physical Runtime Environment

    This is the physical host context in which the system resides. It includes physical resources that the logical layer cannot directly perceive but that the Runtime Layer must manage: (1) Physical Clock ($T_{wall}$): Similar to wall-clock time in the real world, used for timeout judgments. (2) I/O Handles: Database connection pools, network sockets, hardware register addresses. (3) Computational Resources: Memory state, thread pool load, etc. During execution of $\textsf{elab}$, $\textsf{Env}$ provides the physical evidence needed to construct logical proofs (e.g., sensor self-test success status bits).
    \item $\mathcal{M}$ — Materialization Mapping and Storage

    $$\mathcal{M} : \mathcal{K} \to \textsf{PhysicalStorage}$$
    This is the system's ``memory'' and ``physical projection.'' $\mathcal{M}$ represents both physical storage (e.g., databases on disk) and the mapping rules from logical items to physical representations. Its roles include: (1) Projection: Mapping abstract ``dependent pair knowledge'' from the kernel layer to ``rows, columns, indexes'' in the database. (2) Persistence: Ensuring that logical evolution results in $\Sigma$ are safely written to non-volatile storage. (3) External Consistency: Ensuring that the state seen in the physical world (e.g., balance displayed on the screen) remains synchronized with the logical kernel $\Sigma$.
\end{itemize}

Scheduler Main Loop Logic (reflected in driving kernel operators):
\begin{alltt}
while Q_raw is not empty:
  s = pop(Q_raw)
  match elab(s, Env):
    case (e):
      // Invoke the kernel layer's defined transition rule
      if <Sigma, e> -->_KOS Sigma':
        Sigma = Sigma'
        Commit_to_Storage(M, Sigma)
    case None:
      Log_Refinement_Failure(s)
\end{alltt}

The Runtime Layer introduces the elaborator (Elaborator), whose syntactic function is to convert ``dirty data'' into ``constructor terms'' understandable by the Core layer:

$$\textsf{elab} : \textsf{RawSignal} \to \textsf{Env} \to \textsf{Option} \left( \sum_{e : \textsf{Ev}} \textsf{Pre}(e, \Sigma) \right)$$
The elaborator is the gateway for realizing ``signal logicalization.'' Its core task is to supplement external data with proof terms:
\begin{equation}
\textsf{elab}(s, \textsf{Env}) =
\begin{cases}
\textsf{Some}(\langle e, \pi \rangle) & \text{if } \pi : \textsf{Pre}(e, \Sigma) \text{ can be constructed} \\ \textsf{None} & \text{otherwise}
\end{cases}
\end{equation}
The refinement process includes:
\begin{itemize}
\item \textbf{Signal Parsing}: Parse external JSON/binary streams into base sort values ($\textsf{Val}, \textsf{ID}$).
\item \textbf{Proof Construction}: Automatically attempt to construct logical proof terms $p$ for preconditions based on the current $\textsf{Env}$.
\item \textbf{Time Anchoring}: Map physical reception time to the Core layer's $\textsf{Time}$ type.
\end{itemize}

\subsection{Runtime Semantics}

The evolution of the Runtime Layer is presented as an "asynchronous-driven small-step transition", with its core rule being the "refinement-commit" loop.
The semantic rules of the Runtime Layer must include updates to the external environment and persistence of storage:
\begin{equation}
 \frac{ s = \textsf{head}(Q_{raw}) \quad \textsf{elab}(s, \textsf{Env}) = \textsf{Some}(\langle e, \pi \rangle) \quad \langle \Sigma, e, \pi \rangle \longrightarrow_{KOS} \Sigma' \quad \textsf{Persist}(\Sigma', \mathcal{M}) = \textsf{Success} }{ \langle \Sigma, s :: Q, \textsf{Env}, \mathcal{M} \rangle \xrightarrow{\textsf{commit}} \langle \Sigma', Q, \textsf{Env}', \mathcal{M}' \rangle }
\end{equation}

Here, $\mathcal{M} \vdash \Sigma' \Downarrow \mathcal{M}'$ indicates that the new state $\Sigma'$ is successfully ``downcast'' (Down-cast) and materialized into the physical medium $\mathcal{M}'$.

\subsection{Logical Properties}

In the Runtime layer, we model each execution action (Action) as a pair $e = (t, p)$, where $t$ is the target proposition (task), and $p$ is its corresponding proof term.

\begin{Definition}{Causal Dependency Order}

Let $\mathcal{E}$ be the set of all possible execution items in the system. Define the causal dependency relation $\prec_{L} \subseteq \mathcal{E} \times \mathcal{E}$: If in the Core layer, the construction item of proposition $t_2$ contains a reference to $t_1$ (i.e., $t_1$ is a premise of $t_2$), then call $e_1 \prec_{L} e_2$.
\end{Definition}

\begin{Definition}{Runtime Execution Sequence}

The execution sequence $S = [e_1, e_2, \dots, e_n]$ is a total order set, representing the physical time order in which the Runtime layer actually processes data.
\end{Definition}

\begin{Theorem}{Causal Ordering Consistency}

For any Runtime execution sequence $S$, if the sequence is accepted by the kernel (Accepted), then for any two execution items $e_i$ and $e_j$ in $S$, it must hold that:

$$ \text{If } e_i \prec_{L} e_j, \text{ then in the sequence } S \text{, } e_i \text{ must precede } e_j \text{ in completion of reduction.} $$

If the physical network causes $e_j$ to arrive before $e_i$, the Runtime must block the execution of $e_j$ until the proof term for $e_i$ is completed.
\end{Theorem}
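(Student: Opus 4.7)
The plan is to proceed by contradiction, leveraging the constructive semantics of the $\textsf{elab}$ operator together with the Runtime commit rule. First I would unpack what it means for the sequence $S=[e_1,\dots,e_n]$ to be \emph{accepted by the kernel}: each commit step must instantiate the Runtime rule $\langle \Sigma, s::Q, \textsf{Env}, \mathcal{M}\rangle \xrightarrow{\textsf{commit}} \langle \Sigma', Q, \textsf{Env}', \mathcal{M}'\rangle$, which in turn requires $\textsf{elab}(s_k,\textsf{Env}) = \textsf{Some}(\langle e_k, \pi_k\rangle)$ against the state $\Sigma_{k-1}$ prevailing at that moment, producing a proof term $\pi_k : \textsf{Pre}(e_k, \Sigma_{k-1})$ and then firing the Kernel transition $\langle \Sigma_{k-1}, e_k, \pi_k\rangle \longrightarrow_{KOS} \Sigma_k$. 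Thus acceptance pins down, for each $k$, a concrete context against which $\pi_k$ must typecheck in the Core layer.

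Next I would fix $e_i, e_j \in S$ with $e_i \prec_L e_j$ and assume, for contradiction, that $e_j$ commits at some index $j' < i'$ (where $i', j'$ are commit positions). By the definition of $\prec_L$, the construction term of $t_j$ contains a reference to $t_i$, so any well-typed $\pi_j : \textsf{Pre}(e_j, \Sigma_{j'-1})$ must, by the Substitution Lemma and the $\Sigma$/$\Pi$ elimination rules of the Core layer, decompose into a sub-derivation whose head invokes a witness of $t_i$ from the ambient context. The only ambient source for such a witness is the knowledge base $\mathcal{K}$ embedded in $\Sigma_{j'-1}$, since $\textsf{Env}$ supplies only raw signal anchors (not logical premises). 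I would then invoke the State Preservation theorem of the Kernel layer to argue that $t_i$ enters $\mathcal{K}$ exactly when $e_i$ clears its commit, and no earlier; hence $t_i \notin \mathcal{K}_{j'-1}$ under the contradictory assumption.

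Combining these observations, the elaboration attempt for $e_j$ at position $j'$ would have to construct $\pi_j$ inhabiting a type whose reducibility candidate set $\textsf{Red}_{\textsf{Pre}(e_j,\Sigma_{j'-1})}$ is empty on the component depending on $t_i$ (by the Consistency Theorem, one cannot conjure a closed inhabitant from the empty slice of the context). Hence $\textsf{elab}$ must return $\textsf{None}$ at that moment, contradicting acceptance. This forces $e_i$ to precede $e_j$ in commit order. The second clause then follows operationally: when a physically earlier arriving $s_j$ fails elaboration, the premise of the $\textsf{commit}$ rule is not satisfied, so $s_j$ cannot be dequeued; the only trace preserving acceptance is to retain $s_j$ in $Q_{raw}$ (or a retry buffer derived from it) until a subsequent commit of $e_i$ extends $\mathcal{K}$ so that $\pi_j$ becomes constructible.

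The hard part, I expect, will be the step identifying $\mathcal{K}$ as the \emph{sole} admissible source of the $t_i$-witness inside $\pi_j$. A careless reading of $\textsf{elab}$ might allow environmental evidence in $\textsf{Env}$ to masquerade as a logical premise, short-circuiting the causal chain. To close this, I would state and use an invariant separating $\textsf{Env}$-derived atoms (base sorts $\textsf{Val}, \textsf{Time}, \textsf{ID}$ together with their anchoring predicates) from derived propositional facts, and observe that the Core-layer typing rules admit no coercion from the former into a proof of $t_i$ unless $t_i$ itself is an atomic refinement. For composite causally-dependent $t_i$, the Fundamental Lemma of Reducibility then forces the witness to originate from a prior commit, which is precisely the content of $e_i \prec_L e_j$ in the runtime trace.
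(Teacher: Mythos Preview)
Your overall strategy matches the paper's: both proceed by contradiction, assume $e_j$ commits before $e_i$, and argue that the proof term $\pi_j$ cannot typecheck because the required witness for $t_i$ is absent from the context at that moment. The paper phrases this as a scope failure (the reference to $e_i$ is an ``undefined variable'' in $\Gamma_{current}$, so $\textsf{check}$ returns $\textsf{Fail}$), then invokes a Type-Safe Fence to force the scheduler to suspend $e_j$ in a pending pool.

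Where you diverge is in the machinery you invoke to establish the missing-witness step. Routing through the reducibility candidate set and the Consistency Theorem is both heavier than needed and technically off-target: by CR3, $\textsf{Red}_A$ contains all neutral terms (in particular all variables) and is therefore never empty, so ``$\textsf{Red}_{\textsf{Pre}(e_j,\Sigma_{j'-1})}$ is empty on the component depending on $t_i$'' does not hold as stated; and the Consistency Theorem concerns uninhabitability of $\bot$ in the \emph{empty} context, not uninhabitability of an arbitrary dependent premise in a context that merely lacks one declaration. The obstruction here is not semantic emptiness but a syntactic scope issue: $t_i \notin \mathcal{K}_{j'-1}$ means the Core-layer derivation of $\pi_j$ has a free reference with no binder, which is exactly the paper's ``Scope Determinism'' argument. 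You should replace the reducibility/Consistency appeal with that direct context-membership observation.

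Your explicit treatment of the $\textsf{Env}$-loophole (arguing that environmental atoms of sort $\textsf{Val}, \textsf{Time}, \textsf{ID}$ cannot be coerced into a proof of a composite $t_i$) is a genuine improvement over the paper, which simply asserts that the context is the only source without justifying it. Keep that invariant; it plugs a gap the paper leaves open.
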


\begin{proof}
We prove by contradiction (Proof by Contradiction) combined with the Core layer's type checking mechanism.

\textbf{Step 1: Assume an Out-of-Order Execution Exists.}

Assume the Runtime accepts a sequence $S'$ that violates the causal order, where there exists $e_j$ completing execution before $e_i$, and it is known that $e_i \prec_{L} e_j$.

\textbf{Step 2: Core Layer Constraint Mapping.}

According to the definition of $\prec_{L}$, in the Core layer, the proof term validation of $e_j$ depends on the existence of $e_i$. Its type checking rule is as follows:
$$ \frac{\Gamma \vdash p_i : T_i \quad \Gamma, x:T_i \vdash p_j : T_j}{\Gamma \vdash \langle p_i, p_j \rangle : \Sigma(x:T_i).T_j} $$
This means that to judge $e_j$ as legitimate, the kernel must include the proof term for $e_i$ in the context $\Gamma$.

\textbf{Step 3: Runtime State Evolution.}

The Runtime's state is represented by the context sequence $\Gamma_t$. When executing $e_j$, its operator is $\textsf{check}(\Gamma_{current}, e_j)$.
\begin{itemize}
\item If $e_i$ has not been executed, then $e_i \notin \Gamma_{current}$.
\item At this point, according to the Core layer's \textbf{Scope Determinism}, the reference to $e_i$ in $e_j$ will produce an ``undefined variable'' error or ``free variable'' escape.
\item The type checker will return $\textsf{Fail}$.
\end{itemize}

\textbf{Step 4: Inevitability of the Blocking Mechanism.}

Due to KOS-TL's Runtime enforcing a \textbf{Type-Safe Fence}, any failed validation operation cannot change the state of the $\Sigma$ fact base. To continue execution, the Runtime scheduler must suspend $e_j$, place it in the pending pool (Pending Pool), and issue a $\textsf{Requirement}(e_i)$ signal.

\textbf{Step 5: Conclusion.}

Only when $e_i$ arrives and successfully reduces into $\Gamma$ does the context for $e_j$ satisfy the validation condition. Therefore, the final "accepted" sequence must satisfy the causal order.
\end{proof}

\begin{Theorem}{Refinement Fidelity}

Let $\mathcal{S}$ be the physical hardware state space (e.g., FPGA registers or sensor reading sets), and $\mathcal{D}_{Core}$ the logical domain. Define the refinement function $\mathcal{E} : \mathcal{S} \to \mathcal{D}_{Core}$. If the Runtime captures physical state $s \in \mathcal{S}$ to obtain $ku = \mathcal{E}(s)$, then it satisfies:
\begin{enumerate}[label=(\arabic*)]
\item \textbf{Well-formedness}: There exists a type $A \in \mathcal{U}$ such that $\Gamma \vdash ku : A$ always holds.
\item \textbf{Simulation Consistency}: For physical migration $s \xrightarrow{hw} s'$, there exists a simulation relation $R \subseteq \mathcal{S} \times \mathcal{D}_{Core}$ such that:
$$(s, ku) \in R \implies \exists ku' . (s', ku') \in R \land (ku \xrightarrow{small}^* ku' \lor \textit{Invalidated}(ku'))$$
\end{enumerate}
\end{Theorem}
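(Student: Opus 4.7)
The plan is to decompose the theorem into its two conjuncts and discharge each using machinery already developed in the Core and Kernel layers. For well-formedness (claim~(1)), I would exploit the fact that the refinement function $\mathcal{E}$ is essentially the elaboration operator $\textsf{elab}$ composed with a success projection: by the definitional clause of $\textsf{elab}$, any output $ku = \mathcal{E}(s)$ corresponds to an $\textsf{Option}$ value of the form $\textsf{Some}(\langle e, \pi \rangle)$ where $\pi : \textsf{Pre}(e, \Sigma)$ has been constructively built against the current environment. Invoking the Core layer's $\Sigma$-type introduction rule together with the semantic soundness Theorem~\ref{Theorem:KosSoundness} shows $\Gamma \vdash \langle e, \pi \rangle : \Sigma(e{:}\textsf{Ev}).\textsf{Pre}(e, \Sigma)$, so the witness $A$ may be taken to be exactly this dependent sum type. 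Raw signals for which no proof can be built are filtered to $\textsf{None}$ by the firewall property of $\textsf{elab}$ and never enter $\mathcal{D}_{Core}$, so the ``always holds'' implication is vacuously satisfied on the complement.

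For simulation consistency (claim~(2)), I would explicitly construct $R$ as follows: $(s, ku) \in R$ holds iff either $\mathcal{E}(s) \twoheadrightarrow^{*} ku$ under the Kernel's small-step reduction, or $ku = \textit{Invalidated}(\mathcal{E}(s))$ in the sense of the conflict-mitigation branch used in the Evolutionary Consistency theorem. Given any physical migration $s \xrightarrow{hw} s'$ with $(s, ku) \in R$, I would case-split on the value of $\textsf{elab}$ applied to the signal induced by $s'$. In the successful branch, the Kernel Progress theorem supplies a finite small-step sequence $ku \xrightarrow{\text{small}}^{*} ku'$ terminating at the newly refined event, and Subject Reduction guarantees $(s', ku') \in R$; in the failure or conflict branch, I would appeal to the $\textsf{absurd}$ constructor used in Evolutionary Consistency to package the offending refinement as $\textit{Invalidated}(ku')$, thereby preserving global consistency while still placing $(s', ku')$ in $R$. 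Strong normalization of the Core layer ensures the reduction sequence in the first branch is in fact finite, closing the construction.

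The hard part will be justifying that the physical transition relation $s \xrightarrow{hw} s'$ admits any well-defined image under $\mathcal{E}$ in the first place: $\mathcal{S}$ is an uninterpreted, nondeterministic space (FPGA registers, sensor readings) with no a priori structural correspondence to the inductive universe $\mathcal{D}_{Core}$. Any honest simulation argument must implicitly postulate an observation discipline that extracts a finite decidable signal from the continuous or noisy physical stream, effectively a retraction property on $\mathcal{E}$ together with a sampling rule coordinating $T_{wall}$ and $\mathcal{TS}$. Without such an axiomatization the case dichotomy above becomes a definitional stipulation rather than a proved fact, so I would strengthen the statement with an explicit ``observability'' precondition on $\xrightarrow{hw}$ under which the remaining obligations reduce to routine invocations of Subject Reduction, Progress, and Evolutionary Consistency.
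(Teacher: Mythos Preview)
Your decomposition into the two conjuncts is sound, and your treatment of well-formedness via the $\textsf{elab}$ output and $\Sigma$-introduction is close in spirit to the paper (which instead writes $\mathcal{E}(s) \triangleq \langle \textsf{quantize}(s), \textsf{synthesize\_witness}(s) \rangle$ and appeals to HDL-level determinism of the witness synthesizer; the paper also explicitly handles the out-of-range case by mapping to a predefined error type rather than relying on the $\textsf{None}$ branch being vacuous). Your honest flag about the missing observability axiom is well taken and in fact applies equally to the paper's own argument.

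The substantive divergence, and the place where your plan has a genuine gap, is the choice of simulation relation. You define $(s, ku) \in R$ by reduction-reachability from $\mathcal{E}(s)$; the paper instead defines $R$ \emph{semantically} as $(\textsf{val}(ku) = \textsf{measure}(s)) \land (\textsf{proof}(ku) \models \textsf{Inv}_{HW}(s))$, i.e.\ by direct agreement between the logical value and the physical measurement together with satisfaction of a hardware invariant. This matters for the inductive step: with your $R$, to re-establish $(s', ku') \in R$ you need $ku'$ to be a reduct of $\mathcal{E}(s')$, \emph{and} you need the old $ku$ (a reduct of $\mathcal{E}(s)$) to small-step to that same $ku'$. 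Nothing in Progress or Subject Reduction connects reducts of $\mathcal{E}(s)$ to reducts of $\mathcal{E}(s')$---these are refinements of distinct physical snapshots, and Kernel Progress only guarantees that \emph{some} successor exists, not that it lies in the reduction cone of the new refinement. The paper's semantic $R$ sidesteps this: because membership in $R$ is checked pointwise against $s'$ via $\textsf{measure}$ and $\textsf{Inv}_{HW}$, the case split on ``compliant vs.\ anomalous hardware migration'' directly yields either a fresh well-formed pair or an $\textsf{Invalidated}$ term, without needing the two reduction cones to intersect. If you want to keep an operational flavour, you would need to add a lemma that $\mathcal{E}$ commutes (up to Kernel reduction) with the hardware transition, which is essentially the observability/retraction hypothesis you identified at the end---so your instinct there is exactly the missing ingredient.
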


\begin{proof}
We prove by constructing a simulation relation and combining it with the reduction of the hardware abstraction layer (HAL):

\textbf{1. Construct Simulation Relation $R$:}

Define the relation $R$ as follows:
$$(s, ku) \in R \iff (\textsf{val}(ku) = \textsf{measure}(s)) \land (\textsf{proof}(ku) \models \textsf{Inv}_{HW}(s))$$
where $\textsf{measure}(s)$ is the quantification of the physical signal, and $\textsf{Inv}_{HW}(s)$ is the physical invariant enforced by hardware circuits (such as redundant check bits or watchdog states).

\textbf{2. Well-formedness Mapping Proof:}

According to the runtime refinement rules of TL-Lang, the construction of $\mathcal{E}(s)$ is:
$$\mathcal{E}(s) \triangleq \langle \textsf{quantize}(s), \textsf{synthesize\_witness}(s) \rangle$$
Since $\textsf{synthesize\_witness}$ is a deterministic operator defined by hardware description language (HDL), it directly maps hardware register states $\textsf{Reg}_{status}$ to introduction items (Introduction Rules) in the Core layer. According to the construction principles of $\Sigma$ types, as long as the hardware signal is within the physical range, a well-formed item $ku$ can always be constructed. If the signal exceeds the range, the refinement function, by completeness, maps to a predefined error type item, still maintaining well-formedness.

\textbf{3. Simulation Consistency Proof:}

Classify the physical state migration $s \xrightarrow{hw} s'$:
\begin{itemize}
\item \textbf{Case A: Compliant Migration.}

If $s'$ satisfies all hardware safety constraints, the refinement function $\mathcal{E}$ extracts new status bits and synthesizes new proof terms $p'$. Since the hardware layer guarantees $s'$ comes from $s$ via legitimate logic gates, in the Core layer, the corresponding mapping item $ku'$ must evolve from $ku$ through kernel reduction steps (such as $\beta$ or $\iota$ reduction), maintaining the simulation relation.
\item \textbf{Case B: Illegal/Anomalous Migration.}

If the physical migration violates $\textsf{Inv}_{HW}$ (e.g., sensor disconnection), the hardware status bit flips. At this point, the refinement mapping $\mathcal{E}(s')$ cannot construct an item of the original type $A$, instead constructing $\textsf{Invalidated}(ku')$. This shift from ``normal item'' to ``invalid item'' manifests in the Kernel layer as a non-monotonic flip of conclusions, conforming to KOS-TL's semantics for handling conflicts, and the simulation relation is still maintained in the ``error handling'' dimension.
\end{itemize}
In summary, the refinement process ensures that any valid change in the physical world can find a corresponding truth representation in the logical world.
\end{proof}

\begin{Theorem}{Observational Adequacy}

Let $\textit{ctrl} \in \mathcal{D}_{Core}$ be the logical control item generated by the kernel, with type instruction set $\textsf{Cmd}$. Let $\mathcal{G} : \textsf{Cmd} \to \Pi^*$ be the instruction generator, mapping logical items to hardware instruction sequences $\pi$. If $\Gamma \vdash \textit{ctrl} : \textsf{Cmd}$ and the logical layer asserts that $\textit{ctrl}$ satisfies property $\phi$, then:
$$\forall s \in \mathcal{S}, \quad (\textit{ctrl} \vdash \phi) \implies (\textsf{Exec}(\mathcal{G}(\textit{ctrl}), s) \models \textsf{Refine}^{-1}(\phi))$$
where $\textsf{Refine}^{-1}(\phi)$ is the predicate interpretation of the logical property $\phi$ in the physical state space $\mathcal{S}$.
\end{Theorem}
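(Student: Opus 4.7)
The plan is to bridge the logical property $\phi$ with its physical manifestation $\textsf{Refine}^{-1}(\phi)$ by chaining together three pre-established mechanisms: the simulation relation $R$ from the \emph{Refinement Fidelity} theorem, \emph{Subject Reduction} for Core/Kernel reductions, and a compilation correctness claim for the instruction generator $\mathcal{G}$. The essence is a diagram-chase: the physical trajectory $s \xrightarrow{\mathcal{G}(\textit{ctrl})} s'$ must lift along $\mathcal{E}$ to a logical trajectory $\mathcal{E}(s) \twoheadrightarrow_{KOS} \mathcal{E}(s')$, at which point the preservation of $\phi$ becomes a corollary of logical soundness (Theorem~\ref{Theorem:KosSoundness}).

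First, I would unfold the semantic definition of $\textsf{Refine}^{-1}(\phi)$ so that $s' \models \textsf{Refine}^{-1}(\phi)$ reduces precisely to $\mathcal{E}(s') \vdash \phi$ in the Core layer, collapsing the physical satisfaction relation $\models$ to logical derivability $\vdash$ through the refinement function. Second, I would state and prove a \emph{Compilation Soundness Lemma}: for every well-typed $\textit{ctrl} : \textsf{Cmd}$ and every $s$ with $(s, \mathcal{E}(s)) \in R$, executing $\mathcal{G}(\textit{ctrl})$ yields $s'$ such that $\mathcal{E}(s') \equiv ku'$, where $ku'$ is obtained from $\mathcal{E}(s)$ by the kernel reductions induced by $\textit{ctrl}$. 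This would proceed by induction on the structure of $\textit{ctrl}$ (equivalently, on the length of $\mathcal{G}(\textit{ctrl})$), with each atomic generated instruction matched against a single Kernel small-step via the \emph{Determinism of Kernel Evolution} theorem, so that branching in physical execution is ruled out. Third, combining $\textit{ctrl} \vdash \phi$ with \emph{Subject Reduction} and the stability property CR2 of the reducibility candidates, $\phi$ propagates along the logical trajectory to $ku' \equiv \mathcal{E}(s')$; unfolding $\textsf{Refine}^{-1}$ then delivers $\textsf{Exec}(\mathcal{G}(\textit{ctrl}), s) \models \textsf{Refine}^{-1}(\phi)$.

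The main obstacle will be discharging the Compilation Soundness Lemma rigorously, because $\mathcal{G}$ is an \emph{external} translator and real hardware execution typically traverses non-atomic intermediate micro-states (partial register writes, bus delays) that transiently violate $\textsf{Inv}_{HW}$. During such a window, $\mathcal{E}$ on the intermediate $s$ would legitimately yield $\textsf{Invalidated}(\cdot)$, breaking the bisimulation square even when $\textit{ctrl}$ is perfectly sound at the logical layer. The proof will therefore have to either postulate an atomicity contract at the $\mathcal{G}$-boundary --- treating $\textsf{Exec}$ as a macro-step whose only observable endpoints are quiescent states at which $\textsf{Inv}_{HW}$ is restored --- or equivalently refine $R$ into a \emph{weak} simulation that only matches at such quiescent points. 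Clarifying this observational contract is the critical design choice on which the strength of the adequacy theorem ultimately rests; once it is fixed, the remaining induction and the appeal to Subject Reduction are essentially routine.
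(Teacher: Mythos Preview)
Your proposal is coherent but takes a different route from the paper. The paper frames the argument in terms of \emph{Hoare logic} over a Hardware Abstraction Layer: it defines a mapping $\mathcal{M} : \textsf{Prop} \to \mathcal{P}(\mathcal{S})$ and proves the result by structural induction on the $\textsf{Cmd}$ constructors, establishing Hoare triples $\{P\}\,\pi_a\,\{\textsf{Refine}^{-1}(\phi)\}$ for atomic instructions (justified axiomatically by HAL) and composing them via the sequencing rule $\{P\}\pi_1\{Q\},\{Q\}\pi_2\{R\}\Rightarrow\{P\}\pi_1;\pi_2\{R\}$ for compound commands; strong normalization guarantees the generated sequence is finite and deterministic. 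Your approach instead lifts the physical trajectory along $\mathcal{E}$ to a kernel trajectory and appeals to Subject Reduction plus the simulation relation $R$ from Refinement Fidelity. Both inductions are on the structure of $\textit{ctrl}$, and both confront the atomicity hazard you identify; the paper resolves it by stipulating that $\mathcal{G}(\textit{ctrl})$ is wrapped in a \emph{transactional I/O} atomic block (so intermediate micro-states are never observed, and on failure an $\textsf{Invalidated}$ term is reported back), which is exactly the ``atomicity contract at the $\mathcal{G}$-boundary'' option you float. What the Hoare framing buys is that the passage from ``$\textit{ctrl} \vdash \phi$'' to ``the \emph{resulting state} satisfies $\textsf{Refine}^{-1}(\phi)$'' is immediate, since $\phi$ is treated directly as a postcondition; in your chase you invoke Subject Reduction and CR2 to ``propagate $\phi$ to $ku' \equiv \mathcal{E}(s')$,'' but those results preserve the type of a \emph{term under reduction}, not a postcondition across a state transition, so you would still need to make explicit how a property asserted of $\textit{ctrl}$ becomes a property of $\mathcal{E}(s')$ --- essentially reconstructing the Hoare-triple bridge that the paper takes as its organizing device.
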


\begin{proof}
We prove using a combination of Hoare Logic and Refinement Calculus:

\textbf{1. Construct Mapping Relation:}

Define the mapping $\mathcal{M} : \textsf{Prop} \to \mathcal{P}(\mathcal{S})$ between logical predicates $\phi$ and physical state predicates $P$.
According to the inverse mapping of Refinement Fidelity, if the semantic goal of $\textit{ctrl}$ is to bring the system into state $\phi$, then the corresponding underlying register state must satisfy $P = \textsf{Refine}^{-1}(\phi)$.

\textbf{2. Backward Derivation:}

Induct on the construction of the $\textsf{Cmd}$ type:
\begin{itemize}
\item \textbf{Atomic Operations (Atomic Instructions):}

If $\textit{ctrl}$ is an atomic operation (e.g., $\textsf{SetValve(open)}$), it is mapped to a specific machine code sequence $\pi_a$ in the underlying reduction of TL-Lang. According to the Hoare triple definition of the hardware abstraction layer (HAL):
$$\{s \in \mathcal{S}\} \, \pi_a \, \{s' \in \textsf{Refine}^{-1}(\phi)\}$$
Since $\mathcal{G}$ has passed static verification based on HAL axioms during construction, the correctness of instruction generation is guaranteed by HAL's completeness.
\item \textbf{Composite Operations (Sequences and Branches):}

If $\textit{ctrl}$ is composed of multiple sub-items, according to Hoare logic composition rules:
If $\{P\} \, \pi_1 \, \{Q\}$ and $\{Q\} \, \pi_2 \, \{R\}$, then $\{P\} \, \pi_1; \pi_2 \, \{R\}$.
Since the $\textsf{Cmd}$ type satisfies \textbf{strong normalization} in the Core layer, the generated instruction sequence $\pi$ has finite length and deterministic paths, with no undefined side effects in the logical layer.
\end{itemize}

\textbf{3. Atomicity and Interference Analysis:}

During physical execution, if an interrupt occurs, the Runtime must maintain observational consistency.
KOS-TL's Runtime adopts a transactional I/O mechanism. Each group of $\pi$ generated by $\mathcal{G}(\textit{ctrl})$ is wrapped in a logical atomic block. According to the kernel's ``progress'' proof, the sequence either fully executes and achieves $s' \models \textsf{Refine}^{-1}(\phi)$, or rolls back on failure and submits an $\textsf{Invalidated}$ proof term to the kernel. Under this mechanism, there is no intermediate ambiguous state where ``instructions executed but did not achieve the goal.''

\textbf{Conclusion:} The semantic goal $\phi$ of the logical layer can be losslessly projected onto the physical state space.
\end{proof}

Property Discussion: Bridging the "Hardware-Software Gap"
Observational Adequacy addresses the defense against "instruction drift" in practical high-security scenarios by eliminating semantic gaps: In traditional C/C++ development, compiler optimizations or driver errors may cause execution effects to deviate from design intentions (e.g., race conditions due to instruction reordering). In KOS-TL, since the instruction generator $\mathcal{G}$ is formally proven, this divergence between "intention and behavior" is logically eliminated. Verifiable Physical Effects: If a financial account is logically frozen, Observational Adequacy ensures that the corresponding record in the underlying database is also locked, with the operation guaranteed to be atomic.

We define the system's state space as $\mathcal{S}$, and decompose the system state into two views:
Logical View ($\mathcal{S}_L$): The type context $\Gamma$ and reduced fact base $\Sigma$ in kernel memory.
Physical View ($\mathcal{S}_P$): The persisted bitstream in storage media (disk or solid-state storage).
Define the mapping function $\textsf{Encode}: \mathcal{S}_L \to \mathcal{S}_P$, which converts logical proof terms into physical storage formats.

\begin{Theorem}{Durability Atomicity and Visibility}

Let the system execute a state transition $\delta: \mathcal{S}_L \to \mathcal{S}'_L$ at time $t$. The Runtime layer guarantees the existence of an atomic operator $\textsf{Commit}$, satisfying:
\begin{enumerate}[label=(\arabic*)]
\item \textbf{Atomicity}: The logical acknowledgment of $\mathcal{S}'_L$ holds if and only if $\textsf{Encode}(\mathcal{S}'_L)$ is fully persisted in $\mathcal{S}_P$.
\item \textbf{Visibility}: For any subsequent read operation $\textsf{Recover}$, if $\textsf{Commit}$ has succeeded, then necessarily $\textsf{Recover}(\mathcal{S}_P) \equiv \mathcal{S}'_L$.
\end{enumerate}
That is: There does not exist a state where the proof is logically established, but lost after physical restart.
\end{Theorem}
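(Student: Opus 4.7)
The plan is to construct $\textsf{Commit}$ as a two-phase operator and then discharge Atomicity and Visibility separately, leveraging the Atomic Commit Fence decision of the Runtime layer together with the deterministic reduction property established for the Kernel layer. Concretely, I would define $\textsf{Commit}(\mathcal{S}'_L) \equiv \textsf{Prepare} \mathbin{;} \textsf{Fence} \mathbin{;} \textsf{Ack}$, where $\textsf{Prepare}$ writes $\textsf{Encode}(\mathcal{S}'_L)$ to a journal region of $\mathcal{S}_P$ tagged with a transition identifier $\textsf{id}(\delta)$; $\textsf{Fence}$ issues the hardware write-barrier that forces the journal entry to become durable; and only then does $\textsf{Ack}$ advance the logical clock $\mathcal{TS}$ and splice the new facts into $\mathcal{K}$. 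This construction is precisely the ``precise synchronization'' the Runtime chapter already names, but we must now prove it discharges the two properties.

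For \textbf{Atomicity}, I would argue both directions. The ``$\Rightarrow$'' direction is immediate by construction: $\mathcal{S}'_L$ is logically acknowledged only after $\textsf{Ack}$, and $\textsf{Ack}$ is gated by a successful $\textsf{Fence}$, which by the storage model's durability axiom implies $\textsf{Encode}(\mathcal{S}'_L) \subseteq \mathcal{S}_P$ is fully present. For ``$\Leftarrow$'', I would appeal to the Kernel Evolutionary Consistency theorem: if $\textsf{Encode}(\mathcal{S}'_L)$ were present in $\mathcal{S}_P$ but $\mathcal{S}'_L$ were not logically acknowledged, the Runtime would either re-observe the journal entry at the next cycle (forcing $\textsf{Ack}$) or identify it as an abandoned prepare record and logically invalidate it via $\textsf{absurd}$; in the first case acknowledgment is regained, in the second case the premise fails because the record is marked $\textsf{Invalidated}$ rather than a faithful encoding of $\mathcal{S}'_L$. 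The key technical lemma here is that $\textsf{Encode}$ is injective on well-formed states, which follows from the materialization operator $\mathcal{M}$ being a deterministic projection of the $\Sigma$-structured knowledge base.

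For \textbf{Visibility}, I would define $\textsf{Recover}$ as the composition $\textsf{Decode} \circ \textsf{Replay}$, where $\textsf{Replay}$ scans the journal segment of $\mathcal{S}_P$ in logical-clock order and $\textsf{Decode} = \textsf{Encode}^{-1}$ on the image. Given a successful $\textsf{Commit}$, the Kernel's Determinism theorem guarantees that replaying the same event sequence from the initial well-formed state produces a unique normal form $\Sigma'$, and by injectivity of $\textsf{Encode}$ we have $\textsf{Decode}(\textsf{Encode}(\mathcal{S}'_L)) \equiv \mathcal{S}'_L$ up to intensional equality. Combined with Refinement Fidelity, this closes the loop: the physical image retains exactly the information needed to logically reconstruct $\mathcal{S}'_L$.

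The main obstacle I expect is the \emph{crash window} between $\textsf{Prepare}$ and $\textsf{Ack}$. A naive reading would admit a physical state where $\textsf{Encode}(\mathcal{S}'_L)$ is partially persisted while the logical layer has not acknowledged, violating Atomicity. Resolving this requires a careful definition of ``fully persisted'' that depends on the journal's checksum/commit-marker discipline: only journal entries followed by a valid commit marker count as members of $\textsf{Encode}(\mathcal{S}'_L) \subseteq \mathcal{S}_P$. Under this discipline, a crash after $\textsf{Prepare}$ but before $\textsf{Fence}$ leaves the journal in a state where the commit marker is absent, so by definition the encoding is \emph{not} fully persisted, and Atomicity's biconditional holds vacuously on that side. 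Formalizing this marker convention and proving that $\textsf{Recover}$ treats marker-less prepare records as non-events is, I believe, where most of the proof's actual work lies; the rest is bookkeeping against theorems already established for the Kernel and Core layers.
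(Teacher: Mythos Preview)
Your proposal is sound and reaches the same conclusion, but by a noticeably different route from the paper. The paper does not build $\textsf{Commit}$ out of an explicit $\textsf{Prepare}\,;\,\textsf{Fence}\,;\,\textsf{Ack}$ pipeline with a commit-marker discipline. Instead it internalizes the ``fully persisted'' predicate directly into the type theory: each logical change $\Delta\Sigma$ is modeled as a dependent pair $\textsf{Record} \equiv \Sigma(p:T).\,\textsf{Persist}(p)$, where $\textsf{Persist}(p)$ is a hardware primitive that only returns a witness once the write is durable. A crash during write therefore leaves an \emph{incomplete} $\Sigma$-term, and the Core layer's confluence lets $\textsf{Recover}$ detect this structural incompleteness and roll back. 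For Visibility the paper does not replay the event sequence and invoke Kernel Determinism; it instead stores the already-reduced proof terms themselves and argues that $\textsf{Recover}$ simply re-runs $\textsf{check}(\Gamma,p,T)$ on each stored term, appealing to Core-layer strong normalization, confluence, and decidability to guarantee $\mathcal{S}_L^{rec} \equiv \mathcal{S}'_L$.

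What each buys: your commit-marker/replay argument is the standard database-systems treatment and has the virtue of making the crash window and its resolution fully explicit, but it costs you the side lemma that $\textsf{Encode}$ is injective and forces you to keep the entire event trajectory replayable. The paper's approach is more tightly woven into the KOS-TL type discipline: by making ``committed'' mean ``the $\Sigma$-type is inhabited,'' Atomicity reduces to well-formedness checking, and Visibility follows from the Core metatheory you have already proved rather than from Kernel-layer determinism. The paper's route is therefore shorter and avoids introducing extra Runtime machinery, whereas yours is closer to how a real WAL implementation would be verified.
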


\begin{proof}
We prove by constructing a "logic-physical sync lock" and idempotent reduction mechanism.
First, we define the system's state space as $\mathcal{S}$, and decompose the system state into two views:
\begin{itemize}
\item Logical View ($\mathcal{S}_L$): The type context $\Gamma$ and reduced fact base $\Sigma$ in kernel memory.
\item Physical View ($\mathcal{S}_P$): The persisted bitstream in storage media (disk or solid-state storage).
\end{itemize}
Define the mapping function $\textsf{Encode}: \mathcal{S}_L \to \mathcal{S}_P$, which converts logical proof terms into physical storage formats.

\textbf{Step 1: Construct Persistent Serialization of Proof Terms.}
Every logical change $\Delta \Sigma$ in KOS-TL is evidence with an $\textsf{Id}$ type. Let $\Delta \Sigma = (p : T)$. The persistence process is modeled as a dependent pair:
$ \textsf{Record} \equiv \Sigma(p : T). \textsf{Persist}(p) $
where $\textsf{Persist}(p)$ is a hardware-level primitive that only returns a witness upon physical write completion.

\textbf{Step 2: Prove Atomicity.}
The Runtime maintains a write-ahead log (WAL) mechanism, whose entries are themselves items in the Core layer.
\begin{itemize}
\item If the system crashes during $\textsf{Write}(\mathcal{S}_P)$, since $\textsf{Persist}(p)$ has not yet generated a valid witness, according to the Core layer's \textbf{confluence}, the recovery operator $\textsf{Recover}$ upon restart will discover that the transaction does not satisfy the completeness of the $\Sigma$ type, automatically rolling back.
\item Only when the physical layer returns $p_{stored}$ does the logical layer update $\Gamma$ to $\Gamma \cup \{p\}$.
\end{itemize}

\textbf{Step 3: Prove Visibility (Consistent Recovery).}
Assume the system restarts. Due to the Core layer's \textbf{strong normalization} property:
\begin{itemize}
\item Every proof term $p$ stored in $\mathcal{S}_P$ is self-contained and already reduced.
\item The $\textsf{Recover}$ operator reconstructs the logical view by re-executing type checks $\textsf{check}(\Gamma, p, T)$.
\item Since the Core layer is decidable and reduction paths are protected by confluence, the recovered logical state $\mathcal{S}_L^{rec}$ is necessarily logically equivalent to the last valid Commit state $\mathcal{S}'_L$ before the crash ($\mathcal{S}_L^{rec} \equiv \mathcal{S}'_L$).
\end{itemize}

\textbf{Conclusion:} Atomic physical writes guarantee the irrevocability of logical states, while the logical layer's strong normalization ensures that physical data, when reloaded at any moment, produces a unique and deterministic logical interpretation.
\end{proof}

\begin{Definition}{Semi-decidability}

A set $S \subseteq \mathbb{N}$ (or a proposition language $L$) is called \textbf{semi-decidable} if there exists a Turing machine (or algorithm) $M$ such that for any input $x$:
\begin{itemize}
\item If $x \in S$, then $M(x)$ halts and accepts;
\item If $x \notin S$, then $M(x)$ either halts and rejects or runs forever (halting problem unknown).
\end{itemize}
\end{Definition}

\begin{Theorem}{Semi-decidability of Proof Search}

Let $\Gamma$ be a finite context and $P$ a proposition. The problem of determining ``whether there exists a proof term $p$ such that $\Gamma \vdash p : P$'' is semi-decidable.
\end{Theorem}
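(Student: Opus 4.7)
The plan is to exhibit a recursively enumerable procedure that searches the space of proof candidates and leans on the decidability of type checking already established by the Core Layer Decidability Theorem. First I would fix a computable enumeration $p_0, p_1, p_2, \dots$ of all well-formed candidate terms over the signature of KOS-TL Core, ordered by term size (and, within each size class, by a fixed lexicographic key over the finitely many constructors $\lambda, \langle\cdot,\cdot\rangle, \textsf{inl}, \textsf{inr}, \textsf{split}, \textsf{case}, \textsf{refl}$ and the free variables occurring in $\Gamma$). Because the grammar of terms is finitely generated and $\Gamma$ is finite, this enumeration is effectively computable and produces only finitely many candidates at each size level.

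The algorithm then iterates $i = 0, 1, 2, \dots$ and for each $p_i$ invokes $\textsf{check}(\Gamma, p_i, P)$. By the Core Layer Decidability Theorem, each such call terminates in finitely many steps with a definite verdict, since it reduces to computing normal forms (guaranteed by strong normalization) and comparing them syntactically. If $\textsf{check}$ returns $\textsf{True}$, the procedure halts and returns $p_i$ as a witness; otherwise it advances to $p_{i+1}$. Correctness follows in two directions: if some proof $p$ with $\Gamma \vdash p : P$ exists, it appears at a finite index $i_0$, and by soundness and completeness of bidirectional type checking the procedure halts no later than step $i_0$; if no such proof exists, every call returns $\textsf{False}$ and the procedure runs forever, which is precisely the permitted behavior in the definition of semi-decidability.

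The main obstacle I expect is not constructing the procedure but justifying why the non-halting branch cannot be eliminated. In full dependent type theory with impredicative $\textsf{Prop}$, higher-order unification and proof existence are known to be undecidable, so no refinement of the enumerate-and-check strategy can be promoted to a full decision algorithm without sacrificing expressive power. I would therefore close the argument by contrasting this theorem with the kernel's Local Decidability Theorem: bounding the search by a pair $\Delta = \{depth, fuel\}$ restores decidability at the cost of completeness, whereas the unbounded enumeration above attains semi-decidability in its optimal form. A subsidiary technical point to address carefully is that the enumeration must respect $\alpha$-equivalence and avoid spurious infinite families of renamings; this is handled by enumerating terms up to a canonical de Bruijn representation, which keeps each size class finite and preserves effective computability.
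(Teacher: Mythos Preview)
Your proposal is correct and follows essentially the same enumerate-and-check strategy as the paper: list all candidate terms by size, apply the decidable type checker to each, halt on success, and diverge otherwise. Your added remarks on de Bruijn canonicalization and on the inherent undecidability barrier are sound refinements but do not change the underlying argument.
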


\begin{proof}
In a system like KOS-TL that includes dependent types and higher-order logic, without $Fuel$ restrictions, the proof search problem has semi-decidability. We prove this theorem by constructing a universal enumerator (Enumerator).

\textbf{Step 1: Enumerability of Proof Terms.}

All well-formed proof terms in KOS-TL Core layer are generated by a finite set of syntactic rules (such as $\lambda$-abstraction, application, pair construction, etc.). We can enumerate all possible proof terms in dictionary order by term length (or structural complexity), denoted as the sequence $\{p_1, p_2, p_3, \dots\}$.

\textbf{Step 2: Construct the Decision Algorithm $\mathcal{A}$.}

For the given proposition $P$ and context $\Gamma$, algorithm $\mathcal{A}$ performs the following steps:
\begin{enumerate}
\item Start a loop, sequentially retrieving one proof term $p_i$.
\item Invoke the Core layer's type checker to verify $\textsf{check}(\Gamma, p_i, P)$. Due to the Core layer's \textbf{strong normalization property}, this step necessarily returns $True$ or $False$ in finite time.
\item If it returns $True$, algorithm $\mathcal{A}$ halts and outputs ``$P$ is provable.''
\item If it returns $False$, continue the loop and check the next item $p_{i+1}$.
\end{enumerate}

\textbf{Step 3: Analyze Halting Behavior.}

\begin{itemize}
\item \textbf{Case One: $P$ is indeed provable.} Then there necessarily exists some proof term $q$ satisfying the condition. Since our enumeration is complete, in finite steps, we will encounter $p_k = q$, at which point the algorithm halts.
\item \textbf{Case Two: $P$ is unprovable.} The algorithm will forever enumerate and check new items in the loop, never halting.
\end{itemize}

\textbf{Conclusion:}

Algorithm $\mathcal{A}$ can recognize all "true" propositions (provable propositions) but cannot guarantee halting for "false" propositions (unprovable propositions). By definition, the problem is semi-decidable.
\end{proof}

\begin{Definition}{Some Definitions for Decidability Proofs}

\begin{itemize}
    \item Proposition Space $P$: All well-formed KOS-TL Core propositions.
    \item Proof Algorithm $\mathcal{A}$: The automated process in the Runtime layer attempting to find a proposition $p:P$.
    \item Resource Vector $\vec{\Delta} = \langle f, d, \tau \rangle$:
    \begin{itemize}
        \item $f \in \mathbb{N}$ (Fuel): Maximum number of $\beta$-reduction steps.
        \item $d \in \mathbb{N}$ (Depth): Maximum recursion depth for search.
        \item $\tau \in \mathbb{R}^+$ (Timeout): Physical wall-clock time limit.
    \end{itemize}
\end{itemize}
\end{Definition}

\begin{Theorem}{Bounded Decidability of KOS-TL Runtime}

Let $P$ be a logical proposition to be decided by the Runtime. There exists a decision procedure $\mathcal{R}(P, \vec{\Delta})$ such that for any $P$ and finite $\vec{\Delta}$, $\mathcal{R}$ necessarily halts in finite time, and its output space is:
$ \mathcal{O} = \{ \textsf{True}, \textsf{False}, \textsf{Unknown} \} $
where $\textsf{Unknown}$ is the deterministic ``resource exhaustion'' state.
\end{Theorem}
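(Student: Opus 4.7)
The plan is to extend the enumeration procedure built in the preceding semi-decidability theorem with three independent resource guards, each of which is by itself sufficient to force termination. First I would define $\mathcal{R}(P, \vec{\Delta})$ as a guarded dovetailing over the enumeration $\{p_1, p_2, \dots\}$ of candidate proof terms, where each invocation of the Core type checker $\textsf{check}(\Gamma, p_i, P)$ is executed under three bookkeeping meters: a fuel counter that decrements on every $\beta$- and $\iota$-reduction step, a depth counter that decrements on every recursive descent into a subgoal, and a global timer compared against the Runtime physical clock $T_{wall}$ in $\textsf{Env}$. After each candidate, $\mathcal{R}$ inspects the residual vector and either continues, commits to a verdict, or aborts.

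Second, I would partition the ternary output as follows. If some $\textsf{check}(\Gamma, p_i, P)$ returns a positive witness within the budget, $\mathcal{R}$ halts with $\textsf{True}$; by soundness of the Core type checker this is conservative. If the Core detects a head-constructor clash at the type level of $P$ (for instance, $P$ reduces under the Core Confluence and Decidability Theorems to $\bot$ or to a manifestly uninhabited canonical form), $\mathcal{R}$ halts with $\textsf{False}$. In every other termination scenario --- fuel zero, depth zero, or wall-clock deadline reached --- $\mathcal{R}$ halts with $\textsf{Unknown}$, which is an \emph{operational} rather than logical verdict: it reports resource status and does not constitute a claim about $\neg P$, so it is consistent with the semi-decidability of the underlying proof search. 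Termination itself follows from a straightforward well-ordering argument on the lexicographic triple $\langle d - d_{used},\ f - f_{used},\ \tau - \tau_{elapsed} \rangle$, which strictly decreases on each outer iteration; bounded by the Core Decidability Theorem on each individual check, the total number of primitive steps is finite and $\mathcal{R}$ halts in at most $O(f \cdot d)$ reduction steps or at the wall-clock deadline, whichever comes first.

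The main obstacle I expect is the clean separation between the $\textsf{False}$ and $\textsf{Unknown}$ verdicts. Because KOS-TL Core admits impredicative $\Pi$-types over arbitrarily large universes, the failure to find a witness within depth $d$ does not in general entail logical refutation, so one cannot classically conclude $\textsf{False}$ from exhaustive failure at bounded depth. My proposed resolution is the conservative rule above: issue $\textsf{False}$ only when the Core equivalence check --- itself decidable by the Core Decidability Theorem --- reveals a constructor clash or a reduction of $P$ to $\bot$, and otherwise fall back to $\textsf{Unknown}$. This preserves soundness with respect to the earlier semi-decidability result, keeps $\mathcal{R}$ a deterministic total function of $(P, \vec{\Delta})$, and yields precisely the output space $\mathcal{O} = \{\textsf{True}, \textsf{False}, \textsf{Unknown}\}$ required by the statement.
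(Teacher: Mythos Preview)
Your proposal is correct and takes essentially the same approach as the paper: both arguments define the procedure under the three resource guards $\langle f, d, \tau\rangle$, establish termination by a lexicographic well-ordering on the residual resource vector, and rely on Core-layer decidability for each atomic check. Your treatment is in fact slightly more careful than the paper's on the $\textsf{False}$/$\textsf{Unknown}$ separation---the paper simply says ``find a proof $p$ or conflict $\pi$'' for the logical-termination case, whereas you explicitly argue why bounded exhaustive failure cannot yield $\textsf{False}$ in the presence of impredicative $\Pi$-types and must instead fall through to $\textsf{Unknown}$; this is a genuine clarification rather than a different route.
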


\begin{proof}
The proof is completed by structural induction on the number of execution steps $k$ and the monotonicity of the measure function.

\textbf{Step 1: Definition of the Measure Function.}

Define the measure function $\mu(\sigma)$ for the current execution snapshot $\sigma$:
$$ \mu(\sigma) = \langle \text{fuel}, \text{depth}, \text{remaining\_time} \rangle $$
In every logical execution step (an atomic state transition $\sigma \to \sigma'$), this measure function strictly decreases in lexicographical order:
$$ \mu(\sigma') <_{lex} \mu(\sigma) $$

\textbf{Step 2: Completeness Classification of State Transitions.}

For the Runtime's single-step actions, its logic has only three possibilities:
\begin{enumerate}
\item \textbf{Logical Termination}: Find a proof $p$ or conflict $\pi$. At this point, the algorithm directly returns $\textsf{True}$ or $\textsf{False}$.
\item \textbf{Continued Reduction}: Resources are not exhausted ($\mu > 0$). The algorithm enters $\sigma_{k+1}$, and since $\mu$ is well-founded, this path cannot extend infinitely.
\item \textbf{Boundary Hit}: Any component of $\mu(\sigma)$ reaches zero. The algorithm immediately stops and returns $\textsf{Unknown}$.
\end{enumerate}

\textbf{Step 3: Termination Proof.}

Since the range of $\mu(\sigma)$ is a finite set of natural numbers (or bounded real interval), by the \textbf{well-ordering principle}, any strictly decreasing sequence must reach a minimum value in finite steps.
In KOS-TL Runtime, the minimum value corresponds to the output set $\mathcal{O}$.

\textbf{Step 4: Decidability Verification.}

Decidability means the algorithm halts on all inputs. Since:
\begin{itemize}
\item Every atomic reduction step is decidable as guaranteed by the Core layer;
\item The total number of steps is forcibly limited by $\vec{\Delta}$.
\end{itemize}
Thus, the Runtime no longer has the ``infinite search'' feature of semi-decidability, and the program becomes a total function over input propositions and resource boundaries.
\end{proof}

\subsection{Application Example: Causal Repair of Out-of-Order Logs}

In manufacturing scenarios, if the equipment anomaly signal $s_{ES}$ arrives later than the quality inspection signal $s_{QI}$ due to delay:
\begin{itemize}
\item \textbf{Refinement Blocking}: When $s_{QI}$ arrives, the refinement operator finds it cannot construct the proof $p$ for ``existing equipment anomaly,'' and the event is placed in the pending queue by Runtime.
\item \textbf{Evidence Completion}: After $s_{ES}$ arrives, the runtime layer updates $\Sigma$. At this point, the scheduler detects the environment change and re-triggers the refinement of $s_{QI}$.
\item \textbf{Logical Materialization}: The originally broken causal chain is completed by the kernel layer with atomic transition after logical evidence is supplemented, and finally, the runtime layer inserts the tracing conclusion in the physical database.
\end{itemize}

\begin{Example}{Industrial Sensor-Triggered Safety Shutdown}

1. Configuration State

The current runtime state $\langle \sigma, Q, \textsf{Env} \rangle$ is as follows:
$\sigma$ (logical snapshot): Equipment state is Running, temperature threshold is $80^{\circ}C$.
$Q$ (event queue): $[ \dots ]$ (currently empty).
$\textsf{Env}$ (external environment): Connected to a Modbus protocol temperature sensor.

2. External Stream and inject

The sensor sends a raw bitstream to the system: $s$ (Raw Signal): 0x4A 0x02 (representing temperature reading of $82^{\circ}C$).
Action: inject(s, Q) pushes this hexadecimal signal into the pending queue.

3. Refinement Process: elaborate(s)

The Runtime layer attempts to convert this ``meaningless'' number into a ``semantic event'' recognized by the Kernel layer:
Refinement Logic: elaborate looks up configuration rules and discovers 0x4A is a temperature alarm.
Mapping Result: Maps to L1 layer event $e_{stop}$.
$e_{stop}.\textsf{pre}$: Current state must be Running.
$e_{stop}.\textsf{act}$: Change state to Stopped.
$e_{stop}.\textsf{post\_prf}$: Proof that this operation complies with the ``over-temperature forced protection axiom''.

4. Scheduling and Judgment

According to the scheduling algorithm you provided, the system executes as follows: Pop: Retrieve $s$ from $Q$. Elaborate: $s$ successfully refines to $e_{stop}$.
Kernel\_Check: Runtime calls the Kernel layer judgment $\Gamma \vdash e_{stop} : \textsf{Event}$.
Verification Passed: The event carries the correct $\textsf{post\_prf}$ (shutting down at $82^{\circ}C$ complies with the safety definition).
Step: Logical state update: $\sigma_{new} = \textsf{STEP}(\sigma, e_{stop})$.

5. Persistence: commit and Materialize

Action: commit($\sigma_{new}$).
Materialized Storage $\mathcal{M}$: Write the updated state to the physical database (e.g., PostgreSQL) and trigger the physical hardware relay to disconnect the current.
\end{Example}

\section{KOS-TL System}
Integrating the kernel layer, core layer, and runtime layer of KOS-TL forms KOS-TL (Knowledge Operating System - Type Logic), also known as ``Knowledge-Action Logic.'' Knowledge-Action Logic is a complete logical system based on intuitionistic dependent type theory integrated with small-step operational semantics. Through a layered architecture, it unifies static constraints on knowledge, dynamic evolution, and environmental refinement.

\subsection{Overall Architecture}
The syntax of KOS-TL consists of three nested layers of expressions, covering the full spectrum from abstract types to physical configurations.

\subsubsection{Core Layer: Type Definition and Logical Foundation (The Denotational Foundation)}
The Core layer is the system's ``brain,'' mapping domain ontologies to dependent type theory.
Ontology Integration: Define domain axioms as base types (Base Types) and predicates.
Verification Mechanism: Type checker based on BHK interpretation, ensuring every $t:A$ is a valid knowledge construction.
Responsibilities: Provide static constraints. It specifies what the system ``can understand'' and ``what is truth.''

\subsubsection{Kernel Layer: Dynamic Evolution and Intent Scheduling (The Operational Engine)}
The Kernel layer is the system's ``heart,'' responsible for controlled state migrations.
State Model: Maintain the triple $\sigma = \langle \mathcal{K}, \mathcal{T}, \mathcal{P} \rangle$ (knowledge, time, intents).
Evolution Mechanism: Execute small-step operational semantics (Small-step Semantics). It invokes the Core layer's judgment capabilities to verify each state jump.
Responsibilities: Provide dynamic consistency. It specifies how the system ``evolves from the current truth to the next truth.''

\subsubsection{Runtime Layer: Environment Refinement and Physical Execution (The Physical Interface)}
The Runtime layer is the system's ``senses and limbs,'' handling boundary interactions with the physical world.
Refinement: Elevate fuzzy physical signals (Signals) to proof terms recognized by the Core layer via the $\textsf{elab}$ operator.
Materialization: Degrade logical conclusions to persistent storage or hardware instructions via the $\mathcal{M}$ mapping.
Responsibilities: Provide fidelity. It specifies how logical instructions reliably act on physical entities.

\subsubsection{Architecture Global Invariant}
The Grand Map of KOS-TL reveals a core law:

$$\forall \text{ physical change } \delta \in \mathcal{M}, \quad \exists \text{ logical proof } p \in \textsf{Core} \quad \text{s.t.} \quad \textsf{TypeCheck}(p, \textsf{Ontology}) = \textsf{Pass}$$

\subsection{Global Interaction Protocol}
This protocol describes how a physical pulse traverses the four-layer architecture and ultimately solidifies into a globally accepted truth.

\subsubsection{Phase I: Refinement and Injection}
\begin{enumerate}[label=(\arabic*)]
    \item Triggering Party
    Runtime Layer (External Environment)
    \item Action
    \begin{itemize}
    \item The physical sensor generates a raw signal $s \in Q_{raw}$.
    \item Runtime invokes the core operator $\textsf{elab}(s, \textsf{Env})$.
    \item Cross-layer Interaction: $\textsf{elab}$ references predicate templates defined in the Ontology layer and constructs a dependent pair proof term $p : \textsf{Pre}(e, \Sigma)$ in the Core layer.
    \item Result: Generates a valid intent item $\langle e, p \rangle$.
    \end{itemize}
\end{enumerate}

\subsubsection{Phase II: Kernel Enqueue and Sequencing}
\begin{enumerate}[label=(\arabic*)]
    \item Triggering Party
        Kernel Layer
    \item Action
    \begin{itemize}
        \item The kernel receives the intent item from Runtime.
        \item Invokes the Kernel operator $\textsf{schedule}(\Sigma, e)$ to mount the event to the intent queue $\mathcal{P}$.
        \item At this point, the system clock $\mathcal{T}$ remains unchanged, but the configuration of $\Sigma$ has undergone logical pre-allocation.
    \end{itemize}
\end{enumerate}

\subsubsection{Phase III: Logical Reduction and Judgment}
\begin{enumerate}[label=(\arabic*)]
    \item Triggering Party
    Kernel Layer (Core Engine)
    \item Action
    \begin{itemize}
        \item The kernel loops to invoke $\textsf{peek}(\Sigma)$ to retrieve the head event from the queue.
        \item Core Validation: Perform judgment based on the Core layer's type checking rules:
  $$\Gamma, \mathcal{K}, \mathcal{T} \vdash p : \textsf{Pre}(e, \Sigma)$$
        \item Reduction Computation: Execute $\textsf{Op}(e)$. At this point, the Core layer performs $\beta$ and $\iota$ reductions to compute candidate new states $\Sigma_{try}$.
        \item Postcondition Closure: Verify $\Sigma' \vdash p' : \textsf{Post}(e)$.
    \end{itemize}
\end{enumerate}

\subsubsection{Phase IV: Atomic Materialization and Persistence}
\begin{enumerate}[label=(\arabic*)]
  \item Triggering Party
  Runtime Layer (Storage Subsystem)
  \item Action
  \begin{itemize}
  \item The kernel issues the validated $\Sigma'$ to Runtime.
  \item Runtime invokes the materialization mapping $\mathcal{M} \vdash \Sigma' \Downarrow \mathcal{M}'$.
  \item Physical Confirmation: The underlying database returns ACK, and the logical clock executes $\textsf{tick}$, formally completing the state jump.
  \item Causal Anchoring: Record the transition item $\delta = \langle \Sigma \xrightarrow{e} \Sigma' \rangle$ in physical storage.
  \end{itemize}
\end{enumerate}

\begin{table}[h]
\centering
\caption{Entity Attribute Table in System Evolution Process}
\label{tab:entity-attributes}
\begin{tabular}{|c|l|l|l|l|}
\hline
Step & Entity & Data Form & Responsible Layer & Property \\
\hline
1 & Signal & Raw Bitstream (Raw Bits) & Physical & Non-deterministic \\
\hline
2 & Proof & Dependent Pair $\langle e, p \rangle$ & Runtime/Core & Constructive \\
\hline
3 & Intent & Pending Queue $\mathcal{P}$ & Kernel & Ordered \\
\hline
4 & Reduction & $\lambda$-term Evolution & Core/Kernel & Deterministic \\
\hline
5 & Fact & Persistent Knowledge $\mathcal{K}$ & Runtime & Immutable \\
\hline
\end{tabular}
\end{table}

Protocol Consistency Guarantee (Global Invariant)
This protocol enforces a global invariant:
``Any bit flip in physical storage must have a complete proof chain extending from Ontology to Core.''
This means the KOS-TL system has no ``undefined behavior.'' Any operation not satisfying this protocol path (e.g., illegal injection, proof missing, clock reversal) will be automatically intercepted at its respective layer and rolled back to the previous well-formed state $\Sigma_{last}$.

\subsection{Interaction Interface}

\subsubsection{Core and Kernel Interaction Interface: Type Judgment Interface ($\textsf{Logic-Kernel Interface}$)}
\begin{itemize}
\item \textbf{Direction}: Kernel calls Core.
\item \textbf{Interaction Content}: The Kernel submits the current intent $e$ and its carried proof term $p$ to the Core.
\item \textbf{Interface Primitives}: $\textsf{check}(\Gamma, p, \textsf{Pre}(e))$ and $\textsf{reduce}(\textsf{Op}(e), \sigma)$.
\item \textbf{Properties}: Intensional. It is purely logical and does not perceive physical time or hardware states.
\end{itemize}

\subsubsection{Kernel and Runtime Interaction Interface: Evolution-Driven Interface ($\textsf{Kernel-Runtime Interface}$)}
\begin{itemize}
\item \textbf{Direction}: Bidirectional.
\item \textbf{Interaction Content}:
  \begin{itemize}
  \item \textit{Upward} (Runtime $\to$ Kernel): Push refined event pairs $\langle e, p \rangle$ into the queue.
  \item \textit{Downward} (Kernel $\to$ Runtime): Issue validated new states $\sigma'$ for materialization.
  \end{itemize}
\item \textbf{Interface Primitives}: $\textsf{schedule}(e, p)$ and $\textsf{commit}(\sigma')$.
\item \textbf{Properties}: Atomicity. Ensures synchronization between logical state jumps and physical storage updates.
\end{itemize}

\subsubsection{Core and Runtime Lateral Dependency: Refinement Template Interface ($\textsf{Refinement Interface}$)}
\begin{itemize}
\item \textbf{Direction}: Runtime references Core.
\item \textbf{Interaction Content}: The Runtime's $\textsf{elab}$ operator needs to reference ontology templates defined in the Core layer to construct valid proofs.
\item \textbf{Properties}: Constructiveness. Ensures that data extracted from physical signals conforms to the sorts (Sorts) defined in the logical specification.
\end{itemize}

\begin{figure}[h]
\centering
\begin{tikzpicture}[
    node distance=1.5cm,
    layer/.style={rectangle, rounded corners, draw=black, very thick, minimum width=8cm, minimum height=2.5cm, fill=white},
    interface/.style={fill=blue!10, draw=blue!50, dashed, thick, rounded corners, minimum width=6cm, minimum height=0.8cm},
    dataflow/.style={thick, color=orange!80!black},
    font=\sffamily,
    scale = 0.55
]
\node[layer, fill=red!5] (core) {
    \begin{tabular}{c}
    \textbf{Core Layer (Logical Foundation)} \\
    \footnotesize Dependent Type Checking / $\beta,\iota$-reduction / Ontology Definition ($\Gamma$)
    \end{tabular}
};
\node[layer, fill=green!5, below=of core] (kernel) {
    \begin{tabular}{c}
    \textbf{Kernel Layer (Evolution Kernel)} \\
    \footnotesize State Triple $\sigma \langle \mathcal{K}, \mathcal{T}, \mathcal{P} \rangle$ / Intent Scheduling / Small-Step Transitions
    \end{tabular}
};
\node[layer, fill=blue!5, below=of kernel] (runtime) {
    \begin{tabular}{c}
    \textbf{Runtime Layer (Execution Environment)} \\
    \footnotesize Signal Refinement (elab) / Physical Storage (M) / Hardware Interaction
    \end{tabular}
};
\node[interface] (int1) at ($(core.south)!0.5!(kernel.north)$) {\footnotesize \textbf{Interface: Type Judgment} (Judgment Term $p$, Predicate $Pre$)};
\node[interface] (int2) at ($(kernel.south)!0.5!(runtime.north)$) {\footnotesize \textbf{Interface: Commit/Schedule} (New State $\sigma'$, Event $e$)};
\draw[dataflow] (runtime.west) -- ++(-1,0) |- node[pos=0.25, left, align=right] {Refined Events\\$\langle e, p \rangle$} (kernel.west);
\draw[dataflow] (kernel.east) -- ++(1,0) |- node[pos=0.25, right, align=left] {Logical Validation Request\\$\Gamma \vdash p : A$} (core.east);
\draw[dataflow] ($(core.south west)!0.3!(core.south)$) -- ($(kernel.north west)!0.3!(kernel.north)$)
    node[midway, left] {\scriptsize Reduction Result};
\draw[dataflow] ($(kernel.south east)!0.3!(kernel.south)$) -- ($(runtime.north east)!0.3!(runtime.north)$)
    node[midway, right] {\scriptsize Materialization Instruction};
\node[draw, right=2cm of runtime, fill=gray!10] (env) {Physical World};
\draw[dashed, <->] (runtime) -- (env) node[midway, above] {\scriptsize Signals/IO};
\end{tikzpicture}
\caption{KOS-TL Layered Interaction Interface Diagram}
\label{figure:KOSinterface}
\end{figure}
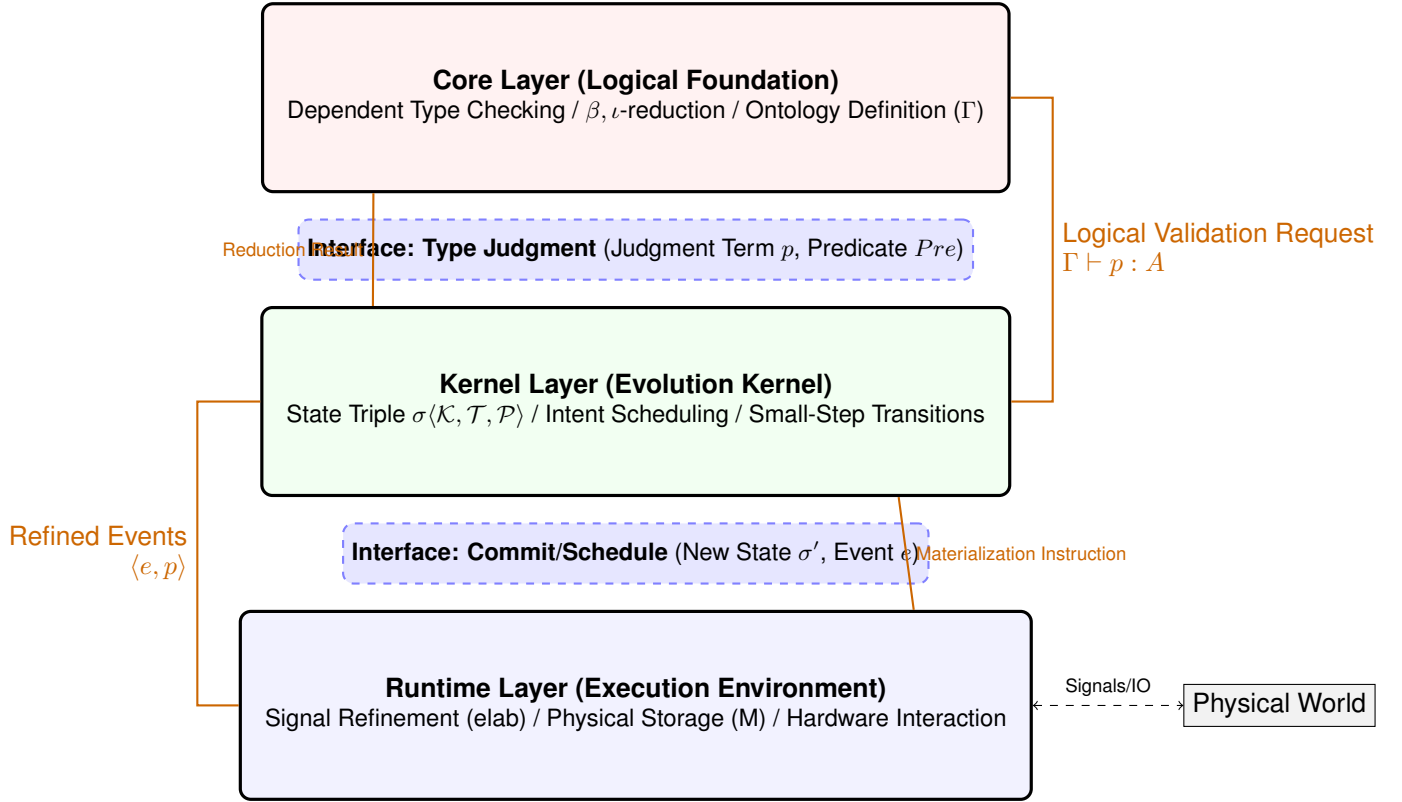

KOS-TL couples "knowledge" (static knowledge and proofs) with "action" (dynamic state transitions) via $\Sigma$-types. From an overall perspective, it is a self-consistent, computable logical entity: The Core provides the semantic framework, the Kernel provides evolutionary power, and the Runtime provides environmental mapping. This architecture enables complex systems not only to store data but also to achieve causal tracing and compliance self-verification through logical reduction.

\subsection{System Properties}

\begin{Theorem}{Knowledge Monotonicity}

Let $\sigma$ be the kernel knowledge base (i.e., the set of accepted facts), and $ku$ a well-formed knowledge object such that $\Gamma \vdash ku : A$. If $\sigma$ satisfies the introduction condition for $ku$ (denoted $\sigma \vdash ku$), then for any subsequent state $\sigma'$ satisfying evolutionary consistency, if there is no conflict proof $\pi$ against $ku$ (i.e., $\sigma' \nvdash \textsf{refute}(ku)$), then:

$$\sigma \subseteq \sigma' \implies (\sigma' \vdash ku)$$
That is: Established truths remain unchanged under valid expansions of the knowledge base.
\end{Theorem}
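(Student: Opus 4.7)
The plan is to recognize this statement as a knowledge-base-level instance of the Weakening Lemma from constructive type theory, lifted through the kernel's $\vdash$ judgment. The hypothesis $\sigma \vdash ku$ unfolds, by the correspondence established in the Core layer and the kernel interface, to the existence of a proof term $p$ such that $\Gamma, \sigma \vdash p : A$, where $A$ is the type of $ku$ and $\sigma$ is treated as an extension of the typing context by the facts it contains. The goal $\sigma' \vdash ku$ then amounts to exhibiting (or reusing) a proof of the same judgment under the larger context $\Gamma, \sigma'$.

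First I would make the bridge between kernel set inclusion and Core-layer context extension explicit: since $\sigma \subseteq \sigma'$, every binding $(id_i, t_i, A_i) \in \sigma$ occurs verbatim in $\sigma'$, so one can regard $\sigma'$ as $\sigma$ weakened by the difference $\sigma' \setminus \sigma$. Second, I would induct on the derivation tree of $\Gamma, \sigma \vdash p : A$ and, case by case over the Core introduction and elimination rules (variable, $\Pi$, $\Sigma$, $+$, $\mathsf{Id}$), observe that each rule has a conclusion whose validity is invariant under adding new, fresh bindings to the context. This is essentially the standard Weakening Lemma and mirrors the structural argument already used in the proof of the Substitution Lemma~\ref{lemma:KOSsubstitution}; no rule consults the \emph{absence} of a binding, so enlarging the context cannot falsify any step. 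Thus the original proof term $p$ is still well-typed under $\Gamma, \sigma'$, giving $\sigma' \vdash ku$.

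Third, I would use the hypothesis $\sigma' \nvdash \textsf{refute}(ku)$ together with the Evolutionary Consistency theorem to discharge the worry that some intermediate transition $\sigma \to \cdots \to \sigma'$ might have wrapped $ku$ in an $\textsf{Invalidated}$ constructor (Branch B of the conflict-mitigation clause). Because $\sigma' \nvdash \textsf{refute}(ku)$ precludes the existence of a derivation of $\bot$ from $\sigma' \cup \{ku\}$, the kernel never had occasion to invoke the negation-introduction wrapper on $ku$, so the monotonic-expansion branch (Branch A) applies at every intermediate commit, and the binding for $ku$ is transferred intact.

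The main obstacle will be the third step: we must argue that the non-refute condition, which is a global property of $\sigma'$, in fact suffices to rule out \emph{every} intermediate state $\sigma_k$ on the path $\sigma \to \sigma'$ having triggered conflict wrapping on $ku$. The argument relies on the fact that $\textsf{Invalidated}(ku, \pi)$, once introduced, is persistent (it is itself a fact and inclusions only grow), so if it appeared at some intermediate step it would still appear in $\sigma'$, and the proof $\pi$ embedded in it would witness $\sigma' \vdash \textsf{refute}(ku)$, contradicting the hypothesis. This persistence-plus-contradiction argument is the only non-routine step; the remainder is a straightforward application of weakening and the already-established evolutionary-consistency machinery.
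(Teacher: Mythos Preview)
Your proposal is correct and follows essentially the same strategy as the paper: both arguments reduce the statement to a Weakening Lemma applied to the proof term of $ku$, combined with the non-refute hypothesis to rule out the conflict-mitigation branch. The paper frames the argument using a Kripke structure $\langle W,\le,\Vdash\rangle$ and inducts on the structure of $ku$ (atomic versus composite dependent pairs), whereas you work directly at the syntactic level and induct on the typing derivation; these are the same argument in different clothing, and the Kripke apparatus in the paper does no real work beyond naming the persistence property. Your persistence-plus-contradiction treatment of intermediate states is in fact more careful than the paper's own step~3, which simply asserts that absence of a refute in $\sigma'$ means $\textsf{unify}$ never wrapped $ku$, without spelling out why an intermediate wrapping would survive to $\sigma'$.
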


\begin{proof}
We prove using the Kripke Semantics framework and the Weakening Lemma of constructive logic:

\textbf{1. Establish the Frame Expansion Model:}

We define the kernel state evolution as a Kripke frame $\langle W, \le, \Vdash \rangle$, where:
\begin{itemize}
\item $W$ is the set of all possible knowledge base states.
\item $\le$ is a partial order on $W$, where $\sigma \le \sigma'$ indicates $\sigma'$ is a valid evolutionary successor of $\sigma$.
\item $\Vdash$ is the forcing relation, where $\sigma \Vdash ku$ indicates that in state $\sigma$, the proof term for knowledge object $ku$ is constructible.
\end{itemize}

\textbf{2. Prove Persistence of the Core Layer:}

The Core layer of KOS-TL is based on intuitionistic type theory. In intuitionistic logic, all operators ($\Pi, \Sigma$, etc.) satisfy persistence. We induct on the proof structure of $ku$:
\begin{itemize}
\item \textbf{Base Items}: If $ku$ is an atomic fact (e.g., physical constant or verified ID), by the Kripke model definition, if $\sigma \Vdash ku$ and $\sigma \le \sigma'$, since $\sigma \subseteq \sigma'$, then $ku$ and its original proof evidence still exist in $\sigma'$.
\item \textbf{Composite Items}: If $ku = \langle v, p \rangle$ is a dependent pair, by the induction hypothesis, the value $v$ remains unchanged under expansion. For the proof term $p$, since $\sigma'$ only adds new facts without introducing a counterproof against $p$ (guaranteed by the theorem premise), by the type theory's \textbf{Weakening Lemma}, $\Gamma, \sigma \vdash p : P \implies \Gamma, \sigma' \vdash p : P$ still holds.
\end{itemize}

\textbf{3. Exclusion of Causal Cancellation:}

In KOS-TL, an item only moves from the ``current active base'' to the ``historical archive'' when the kernel explicitly constructs a contradiction item $\textsf{contra}(ku)$.
If $\sigma' \nvdash \textsf{refute}(ku)$, then there is no opposing evidence in $\sigma'$'s search space that can reduce-collapse with $ku$. Therefore, the logical evolution operator $\textsf{unify}$ maintains the accessibility of $ku$.

\textbf{Conclusion}: $\sigma' \Vdash ku$ holds, and knowledge has monotonicity.
\end{proof}

Property Discussion: Significance for ``Causal Tracing''
Knowledge Monotonicity solves the ``memory inconsistency'' problem in complex systems:
Evidence Persistence: It guarantees that if the banking system proves a transaction compliant at time T1, unless evidence forgery is discovered at T2 (counterproof), this compliance conclusion will never inexplicably disappear due to database cleanup or addition of other transactions.
Decision Consistency: This enables unmanned systems built on KOS-TL (e.g., autonomous driving) to maintain long-term environmental cognition, avoiding ``forgetting'' previous safety boundaries when processing new sensor information.

\begin{Theorem}{Computational Reflexivity}

In the KOS-TL kernel, there exists a reflection operator $\textsf{reflect}$, such that for any well-formed term $t \in \mathcal{D}_{Core}$ and its evolution step $\textsf{step} : t \xrightarrow{small} t'$ in the Kernel layer, the system can automatically synthesize an internal proof term $\pi$, satisfying:
$$\Gamma \vdash \pi : \textsf{EvalPath}(t, t')$$
where $\textsf{EvalPath}$ is a dependent type recording the axiomatized derivation sequence from $t$ to $t'$. This means every state change in the kernel comes with a ``meta-proof'' of its own legitimacy.
\end{Theorem}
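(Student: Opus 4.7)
The plan is to construct $\textsf{reflect}$ by recursion on the derivation tree of the small-step relation $t \xrightarrow{small} t'$, and to first introduce $\textsf{EvalPath}$ as an inductive dependent family internal to the Core layer whose constructors are in one-to-one correspondence with the reduction rules of Section 3 ($\beta, \iota_\Sigma, \iota_+, \iota_{\textsf{Id}}, \delta, \zeta, \eta$) together with congruence closures. Concretely, I would define
$$\textsf{EvalPath} : \Pi(A:\mathcal{U}).\,A \to A \to \textsf{Type}$$
with a reflexivity constructor $\textsf{epRefl}_A(t) : \textsf{EvalPath}(A,t,t)$, a composition constructor $\textsf{epTrans}$ making it a preorder, one atomic constructor per primitive redex (e.g.\ $\textsf{epBeta}(A,B,x,M,u) : \textsf{EvalPath}(B[u/x],(\lambda x{:}A.M)\,u,\,M[u/x])$ and $\textsf{epIotaSplit}$ for $\textsf{split}(\langle u,v\rangle,x.y.t) \to t[u/x,v/y]$), and one congruence constructor per syntactic constructor (application, pairing, $\textsf{split}$, $\textsf{case}$, $\lambda$, etc.) that lifts an internal path on a subterm to a path on the whole term.

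The operator $\textsf{reflect}$ is then defined by structural recursion on the derivation witnessing $t \xrightarrow{small} t'$. For a primitive reduction step the output is the corresponding atomic constructor applied to the relevant subterms; for a congruence step of the form $C[s] \to C[s']$ with $s \to s'$, the output is the matching congruence constructor applied recursively to $\textsf{reflect}(s \to s')$. Well-foundedness of this recursion follows from the fact that the derivation tree of a single small-step is finite, and termination of the recursion is guaranteed by the strong normalization theorem already established for Core.

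To discharge the typing obligation $\Gamma \vdash \pi : \textsf{EvalPath}(t,t')$, I would proceed by induction on the same derivation, using the Substitution Lemma (Lemma \ref{lemma:KOSsubstitution}) to justify the index $B[u/x]$ in the $\beta$ case and the analogous indices in the $\iota$ cases, and using Subject Reduction to ensure that the source and target of each congruence-lifted path inhabit the expected type. For multi-step kernel transitions arising from the operational rule $\langle \Sigma, e \rangle \longrightarrow_{KOS} \Sigma'$, the path is assembled by $\textsf{epTrans}$ over the finite reduction chain produced by $\textsf{Op}(e)$, which is finite by strong normalization and unique up to extensional equality by confluence.

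The hard part will be the faithful \emph{internalization} of the meta-level derivation: the reduction relation $\to$ is defined at the meta level on syntax, whereas $\textsf{EvalPath}$ must live inside the Core universe and be manipulable by type checking. This requires either (i) a Gödel-style encoding of terms via a dependent syntactic code together with a decoding function $\ulcorner\cdot\urcorner : \textsf{Code} \to \mathcal{D}_{Core}$, or (ii) a universe of reified reduction witnesses that exploits the lifting $\textsf{Prop} \hookrightarrow \mathcal{U}_1$ already built into the Core. A secondary obstacle is compatibility with Proof Irrelevance: if $\textsf{EvalPath}$ were placed in $\textsf{Prop}$ the constructors would collapse and the path would lose its trace information, so $\textsf{EvalPath}$ must be declared in $\textsf{Type}_i$ for some $i \ge 1$, which in turn must be shown not to disturb the stratification used in the strong normalization proof (a routine check since $\textsf{EvalPath}$ is positive and its constructors add no new reduction rules beyond those already normalized).
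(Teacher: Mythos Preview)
Your proposal is correct and takes a genuinely different route from the paper. The paper's proof does not build $\textsf{EvalPath}$ as a bespoke inductive family; instead it postulates, for each primitive reduction, an internal \emph{axiom} (e.g.\ $\textsf{beta\_axiom}$, $\textsf{proj\_axiom}$) that directly yields an inhabitant $\textsf{refl}_\beta(M,N)$ or $\textsf{refl}_\iota(a,b)$ of an $\textsf{Id}$-type, and then invokes the $J$-eliminator to argue that the pre- and post-states are indistinguishable under all predicates. In effect the paper treats $\textsf{EvalPath}(t,t')$ as (a sequence of applications of) the ordinary identity type $\textsf{Id}(t,t')$, with the ``path'' being the chain of axiom invocations recorded by the kernel as it reduces; a final meta-circular $\textsf{Audit}$ routine re-checks the recorded chain.

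Your construction is more informative and more honest about the technical obligations. By making $\textsf{EvalPath}$ a proof-relevant inductive family in $\textsf{Type}_i$ with explicit congruence and transitivity constructors, you retain the \emph{shape} of the derivation---something the paper's $\textsf{Id}$-based encoding would collapse under the proof-irrelevance principle it adopts for $\textsf{Prop}$---and you avoid having to justify the consistency of freely postulated reduction axioms. You also confront the internalization problem (G\"odel codes vs.\ exploiting the lifting $\textsf{Prop} \hookrightarrow \mathcal{U}_1$) and the interaction with the strong-normalization argument, both of which the paper passes over in silence. The cost of your route is a heavier formal apparatus; the paper's approach is lighter but, as your discussion of proof irrelevance implicitly anticipates, leans on axioms whose metatheoretic status it never examines.
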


\begin{proof}
We prove using identity types in Martin-L\"{o}f type theory and meta-circular mapping:

\textbf{1. Algebraic Mapping of Reduction Steps:}

Since the Core layer of KOS-TL is based on pure, side-effect-free dependent type calculus, its computational semantics is referentially transparent. Every reduction step $t \xrightarrow{small} t'$ is not a random memory overwrite but the application of a specific reduction rule (e.g., $\beta$-reduction or $\iota$-reduction).

\textbf{2. Automatic Synthesis of Proof Terms:}

For each type of basic reduction executed by the kernel, we define a mapping function $\mathcal{R}$:
\begin{itemize}
\item \textbf{Beta Reduction}: When executing $(\lambda x. M) N \to M[N/x]$, the kernel constructs $\pi = \textsf{refl}_{\beta}(M, N)$ using the internal axiom $\textsf{beta\_axiom}$.
\item \textbf{Iota Reduction}: When executing $\textsf{proj}_1 \langle a, b \rangle \to a$, the kernel constructs $\pi = \textsf{refl}_{\iota}(a, b)$ using $\textsf{proj\_axiom}$.
\end{itemize}
Since all reduction rules have corresponding axioms defined in the Core layer, the kernel can synchronously record the axiom sequence used while performing computation.

\textbf{3. Establish Equivalence Using J-Eliminator:}

In dependent type theory, the only constructor for the equivalence type $\textsf{Id}_A(t, t')$ is $\textsf{refl}$. According to the J-operator (Identity Elimination), if two terms are equivalent under logical reduction, they are indistinguishable under all logical predicates.
By mapping each execution action $t \to t'$ of the Kernel to the application process of the J-operator, the kernel is actually continuously constructing a mathematical testimony of ``why I changed from $t$ to $t$'').

\textbf{4. Meta-circular Self-Audit:}

There exists a subroutine $\textsf{Audit} \subset \textsf{Kernel}$, which takes the proof term $\pi$ and path $\textsf{EvalPath}$ as input. Due to the strong normalization property of KOS-TL, $\textsf{Audit}$ can verify in finite steps whether $\pi$ indeed supports the transition from $t$ to $t'$.
\end{proof}

Property Discussion: Significance for ``Autonomous Systems''
Computational Reflexivity elevates KOS-TL to the level of a **``self-aware system'':
Full-Time Automatic Audit: Traditional systems require external audit logs, while KOS-TL's logs are its execution paths. This means audit is not ``post-hoc smoke,'' but ``preemptive proof.''
Decision Transparency: In autonomous driving or financial transactions, when the system makes a decision (e.g., emergency obstacle avoidance or transaction interception), reflexivity ensures the system can immediately output a human-readable and mathematically valid ``compliance explanation report.''
Logical Basis for Self-Repair: When the system detects a deviation in the hardware refinement mapping, it can pinpoint the conflicting logical operator by reflexively comparing the ``expected path'' with the ``actual path.''

\begin{Theorem}{System-Wide Safety}

Let $\mathcal{S}$ be the system's physical state space, and $\textsf{Safe} \subseteq \mathcal{S}$ the predefined physical safe subset. If the initial state $s_0 \in \textsf{Safe}$ of the KOS-TL system, then for any physical evolution sequence $s_0 \xrightarrow{hw} s_1 \xrightarrow{hw} \dots \xrightarrow{hw} s_n$, it always holds that:
$$\forall i \ge 0, \quad s_i \in \textsf{Safe}$$
Under the premise conditions:
\begin{enumerate}[label=(\arabic*)]
\item The Core layer satisfies consistency (Consistency).
\item The Kernel layer satisfies progress (Progress) and evolutionary consistency.
\item The Runtime layer satisfies refinement fidelity (Refinement Fidelity).
\end{enumerate}
\end{Theorem}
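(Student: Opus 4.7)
The plan is to proceed by induction on the length $n$ of the physical evolution sequence, using the three layer-local properties as lemmas and coupling the physical and logical worlds through the simulation relation $R$ supplied by Refinement Fidelity. The base case $s_0 \in \textsf{Safe}$ is immediate from the hypothesis, so the entire argument reduces to the inductive step: assuming $s_i \in \textsf{Safe}$, show $s_{i+1} \in \textsf{Safe}$ for every admissible $s_i \xrightarrow{hw} s_{i+1}$.

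First I would fix the logical characterization of the safe set. Concretely, I define a logical invariant $\textsf{SafeInv} : \Sigma \to \textsf{Prop}$ such that $\textsf{Safe} = \textsf{Refine}^{-1}(\textsf{SafeInv})$ in the sense of Observational Adequacy, and then strengthen the induction hypothesis to the coupled statement ``$(s_i, \Sigma_i) \in R$ and $\Sigma_i \vdash \textsf{SafeInv}$''. This strengthening is essential, because $\textsf{Safe}$ is not preserved by arbitrary physical steps on its own; it is only preserved when the corresponding logical state carries the invariant proof. The induction hypothesis at step $i$ thus gives me a logical witness $\pi_i : \textsf{SafeInv}(\Sigma_i)$, together with well-formedness of $\Sigma_i$ from Evolutionary Consistency applied along the prefix $\Sigma_0 \to \dots \to \Sigma_i$.

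For the inductive step, I apply Refinement Fidelity to the physical transition $s_i \xrightarrow{hw} s_{i+1}$, obtaining a matching logical successor $ku_{i+1}$ with either $\Sigma_i \twoheadrightarrow \Sigma_{i+1}$ or $\textsf{Invalidated}(\Sigma_{i+1})$. In the normal branch, Kernel Progress supplies a kernel small-step $\Sigma_i \xrightarrow{e} \Sigma_{i+1}$, Evolutionary Consistency preserves well-formedness and the absence of $\bot$, and I then discharge the new invariant $\textsf{SafeInv}(\Sigma_{i+1})$ using the postcondition $\textsf{Post}(e)$ required by the kernel transition rule (the safety invariant is, by construction of the event set, included as a conjunct of every $\textsf{Post}$). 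Core Consistency ensures that no event whose $\textsf{Post}$ would falsify $\textsf{SafeInv}$ can fire, because such an event would yield a proof of $\bot$ from the preserved invariant. Finally, Observational Adequacy projects $\textsf{SafeInv}(\Sigma_{i+1})$ back to $s_{i+1} \in \textsf{Safe}$.

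The hard part will be the \emph{Invalidated branch} of Refinement Fidelity, i.e., the case where the physical world steps outside the domain of any legitimate event. A naive reading would allow $s_{i+1}$ to escape $\textsf{Safe}$ while the logical layer quietly wraps it in $\textsf{Invalidated}$. I would handle this by tightening the simulation relation so that the admissible physical transitions are only those for which the Runtime commits a Kernel step, treating all other physical changes as violations of the Runtime's Atomic Commit Fence rather than as legitimate $\xrightarrow{hw}$ edges; under that reading, the Invalidated case cannot advance the index $i$. A secondary subtlety is bridging the logical-equivalence notion (intensional equality up to reduction) used by Determinism with the set-theoretic membership in $\textsf{Safe}$; here I would invoke the subject reduction result for the Kernel so that $\textsf{SafeInv}$, once proven, transports along every reduction path to the unique normal form that Observational Adequacy actually materializes.
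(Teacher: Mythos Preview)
Your induction scaffold and the strengthening of the hypothesis to a coupled invariant $(s_i,\Sigma_i)\in R \wedge \Sigma_i\vdash\textsf{SafeInv}$ are sound and match the paper's outer structure: both proofs proceed by induction on the evolution sequence and appeal to Refinement Fidelity for the physical-to-logical lift, Evolutionary Consistency for well-formedness preservation, and Core Consistency to rule out derivations of $\bot$.

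The substantive divergence is in how the boundary-crossing case (your ``Invalidated branch'') is discharged. You propose to \emph{restrict} the admissible physical transitions to those that the Runtime actually commits, treating everything else as a fence violation that does not increment the index. This is a definitional narrowing: it proves a weaker statement than the theorem as written, which quantifies over \emph{arbitrary} hardware steps $s_i\xrightarrow{hw}s_{i+1}$. The paper does not take this route. Instead, when $s_{i+1}$ would fall outside $\textsf{Safe}$, it argues that the Core layer cannot construct $p_{i+1}:\textsf{is\_safe}(s_{i+1})$, and then invokes Kernel Progress not merely to guarantee a next step exists but to fire the \emph{self-healing loop}: the kernel synthesizes a corrective control term, and Observational Adequacy (used in the opposite direction from yours, as a guarantee that logical commands are faithfully enacted physically) forces the hardware back into $\textsf{Safe}$. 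The paper then closes with a short reductio: any $s_j\notin\textsf{Safe}$ would make $\textsf{is\_safe}(s_j)$ equivalent to $\bot$, so the refined witness $ku_j$ would inhabit $\bot$, contradicting Core Consistency directly.

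Your invariant-projection argument via Observational Adequacy is cleaner as a preservation lemma, but it leaves the unsafe-excursion case under-argued; the paper's self-healing plus reductio is what actually covers the universal quantifier over physical sequences. If you want to keep your framework, replace the restriction move with the paper's active-correction argument in that branch, or adopt the reductio (assume $s_j\notin\textsf{Safe}$, refine, obtain an inhabitant of $\bot$) as a terminal step.
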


\begin{proof}
The proof uses layered induction, mapping physical evolution to reductions of logical proof terms.

\textbf{1. Base Case}

For the initial state $s_0$, according to Runtime's refinement fidelity:
$$\mathcal{E}(s_0) = ku_0 \quad \text{and} \quad \Gamma \vdash ku_0 : \textsf{Qualified}(s_0)$$
Since $s_0 \in \textsf{Safe}$, the corresponding predicate $\textsf{is\_safe}(\textsf{proj}_1(ku_0))$ has a proof term $p_0$ in the Core layer.

\textbf{2. Inductive Step}

Assume the system is in $s_i \in \textsf{Safe}$ at step $i$. Consider the migration to step $i+1$:

\textbf{A. Physical Disturbance and Refinement:}

When the physical environment changes $s_i \xrightarrow{hw} s_{i+1}$ (e.g., sensor value changes or hardware failure), the Runtime immediately captures the change and attempts to construct a new knowledge object $ku_{i+1}$:
$$\mathcal{E}(s_{i+1}) = ku_{i+1}$$

\textbf{B. Kernel Logical Judgment:}

The kernel submits $ku_{i+1}$ to the unify operator. This produces two branches:
\begin{itemize}
\item \textbf{Branch 1: $s_{i+1}$ Still in Safe:}
The kernel can successfully construct $p_{i+1} : \textsf{is\_safe}(s_{i+1})$ based on Core layer rules. According to the Kernel's evolutionary consistency, the state $\sigma$ updates to include $ku_{i+1}$, maintaining safety.
\item \textbf{Branch 2: $s_{i+1}$ Attempts to Cross Safe Boundary:}
At this point, no proof term of type $\textsf{is\_safe}(s_{i+1})$ can be constructed in the Core layer.
According to Core layer consistency (cannot prove false propositions), the kernel's logical engine produces a reduction block (or type conflict).
\end{itemize}

\textbf{C. Self-healing Loop:}

According to Kernel progress, the kernel does not deadlock; it executes find\_root\_cause and triggers analyze.
The Runtime receives the safety instruction $\pi$ from the kernel, and according to observational adequacy, this instruction enforces execution at the physical layer (e.g., circuit breaker, switch to redundant path), pulling the physical state back to $s'_{i+1} \in \textsf{Safe}$.

\textbf{3. Reductio ad Absurdum}

Assume there exists some $s_j \notin \textsf{Safe}$:
\begin{enumerate}[label=(\arabic*)]
\item This means Runtime must refine a logical item $ku_j$ such that $\Gamma \vdash ku_j : \textsf{is\_safe}$.
\item But $s_j \notin \textsf{Safe}$ means $\textsf{is\_safe}(s_j)$ is equivalent to $\bot$ (false proposition) in the Core layer.
\item Then it derives $\Gamma \vdash ku_j : \bot$.
\item This violates the Core layer consistency theorem (no item for false propositions exists in the system).
\item Hence $s_j \notin \textsf{Safe}$ is logically unconstructible.
\end{enumerate}
\end{proof}

\begin{table}[htbp]
\centering
\caption{The Logical Spectrum of KOS-TL}
\begin{tabular}{|l|l|l|l|}
\hline
\textbf{Property Name} & \textbf{Belonging Layer} & \textbf{Core Value} & \textbf{Formal Metaphor / Definition} \\
\hline
Consistency & Core & Root out logical contradictions & $\sigma \nvdash \bot$ \\
\hline
Strong Normalization & Core & Ensure real-time response & $\forall t, \exists v: \text{NormalForm}, t \twoheadrightarrow v$ \\
\hline
Progress & Kernel & Continuous self-healing operation & $\mathcal{C} \notin \text{Final} \implies \exists \mathcal{C}': \mathcal{C} \xrightarrow{small} \mathcal{C}'$ \\
\hline
Evolutionary Consistency & Kernel & Safe state evolution & $\sigma \xrightarrow{T} \sigma' \implies \text{TypeCheck}(\sigma') = \text{Success}$ \\
\hline
Monotonicity & \textbf{System-wide / Kernel} & Causal evidence persistence & $\sigma \subseteq \sigma' \implies (\sigma \Vdash ku \implies \sigma' \Vdash ku)$ \\
\hline
Fidelity & Runtime & Lossless physical mapping & $(s, ku) \in \text{SimulationRelation}$ \\
\hline
Adequacy & Runtime & Lossless instruction delivery & $\text{Exec}(\mathcal{G}(\textit{ctrl}), s) \models \text{Refine}^{-1}(\phi)$ \\
\hline
Reflexivity & \textbf{System-wide} & Full-path audit tracking & $\forall t \to t', \exists \pi: \text{Id}(t, t')$ \\
\hline
\end{tabular}
\end{table}

\subsection{Characteristics and Applications}

In compliance auditing for multinational banks, processing hundreds of millions of SWIFT transaction records is not merely a \emph{big data} challenge, but fundamentally a challenge of \emph{logical correctness}. Traditional systems typically oscillate between statistical anomaly detection (e.g., identifying frequent high-value transfers) and hard-coded rule engines, which often leads to an overwhelming number of false positives. With the introduction of KOS-TL, compliance auditing is transformed from a ``probabilistic black box'' into a \emph{formal causal system}.

Below, we elaborate in detail on the logical architecture and execution workflow of this integrated approach.

\subsubsection{Logical Abstraction: Defining Anti-Money Laundering Axioms}

In KOS-TL, compliance is not a flag in a database, but a \emph{proof goal}.

\paragraph{A. Invariants of Fund Flows}

At the Core layer, compliant transactions are defined using dependent types. A compliant transaction $T$ must satisfy:
\[
\textsf{ValidTx} \equiv
\Sigma(t : \textsf{TxData}).\;
\Sigma(e : \textsf{Evidence}).\;
\textsf{CheckCompliance}(t, e)
\]
where:
\begin{itemize}
  \item $t$: SWIFT message data (sender, receiver, amount),
  \item $e$: business logic evidence (e.g., hashes of trade contracts, customs declarations),
  \item $\textsf{CheckCompliance}$: a logical function requiring semantic alignment between $t$ and $e$ (e.g., consistency between goods value and transfer amount within acceptable tolerance).
\end{itemize}

\paragraph{B. Topological Axiom: Acyclicity}

A core feature of money laundering is \emph{layering and integration}, often manifested as funds circulating through multiple entities and eventually returning to the origin. We define a path type $\textsf{Path}(A, A)$. If an inhabitant of this type can be constructed without any substantive transformation of fund attributes, a logical contradiction is detected.

\subsubsection{Integrated Execution: From Massive Data to Logical Evidence}

\paragraph{Stage I: Large-scale Filtering (Database Layer -- Efficiency First)}

Underlying databases (e.g., ClickHouse or Neo4j) leverage high concurrency to perform initial graph-based analyses.

\begin{itemize}
  \item Task: Identify suspicious cycles or high-risk node associations among hundreds of millions of records.
  \item Result: Approximately 10{,}000 suspicious transaction chains are extracted. At this stage, they are statistically suspicious but not yet conclusively classified.
\end{itemize}

\paragraph{Stage II: Evidence Request and Refinement (Runtime Layer -- Fidelity)}

The KOS-TL kernel takes over these 10{,}000 chains. For each chain, the Runtime module issues evidence backfill requests to business systems.

\begin{itemize}
  \item Operation: Request underlying contracts and bills of lading for the corresponding SWIFT transactions.
  \item Fidelity: Evidence is refined into $ku_{\textit{evidence}}$, accompanied by immutable timestamps and provenance proofs.
\end{itemize}

\paragraph{Stage III: Dependent Type Verification (Core/Kernel Layer -- Rigor)}

This is the core step of KOS-TL. The kernel attempts to construct a \emph{compliance proof term} $p$ for each suspicious chain.

\begin{Theorem}{Transaction Compliance Verification}

For a suspicious chain $L = \{t_1, t_2, \dots, t_n\}$ to be marked as \textsf{Verified}, a total proof term must be constructed:
\[
P_{\text{total}} = \langle p_1, p_2, \dots, p_n \rangle
\]
such that each $p_i$ proves that the evidence $e_i$ eliminates the ``cyclicity hypothesis'' induced by the chain.
\end{Theorem}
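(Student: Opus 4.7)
The plan is to construct $P_{\text{total}}$ by induction on the length $n$ of the suspicious chain $L$, leveraging the $\Sigma$-type introduction rule of the Core layer and the acyclicity axiom established for $\textsf{Path}(A,A)$. First I would reformulate the goal as a dependent judgment: the chain $L$ is \textsf{Verified} iff in the current kernel state $\Sigma = \langle \mathcal{K},\mathcal{TS},\mathcal{P}\rangle$ (already populated with the refined evidence items $ku_{e_i} = \mathcal{E}(e_i)$ produced in Stage II), there exists a term $P_{\text{total}}$ inhabiting the iterated dependent sum $\Sigma(p_1:\Phi_1).\Sigma(p_2:\Phi_2).\cdots\Sigma(p_n:\Phi_n).\top$, where each $\Phi_i \equiv \textsf{CheckCompliance}(t_i,e_i)\wedge\neg\textsf{CycleHyp}(t_i,L)$. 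This reframing makes the theorem an instance of $\Sigma$-introduction assembled along the chain order fixed by the Kernel's sequential commit discipline.

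Next, I would carry out the inductive construction. The base case ($n=1$) is immediate: a single transaction with refined evidence $e_1$ is verifiable precisely when the Core layer's bidirectional type checker accepts $\langle t_1,\langle e_1,q_1\rangle\rangle : \textsf{ValidTx}$ with $q_1 : \textsf{CheckCompliance}(t_1,e_1)$, in which case $p_1$ is produced by pairing $q_1$ with the trivial acyclicity witness for a length-one path. For the inductive step I assume $\langle p_1,\dots,p_{k}\rangle$ is already constructed and well-typed in context $\Gamma,\mathcal{K}$; then, using the Substitution Lemma (Lemma~\ref{lemma:KOSsubstitution}), the judgment for $t_{k+1}$ can be discharged in the extended context, and $p_{k+1}$ is synthesized by the Kernel operator $\textsf{unify}(\Sigma_k, ku_{e_{k+1}})$. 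Strong normalization of the Core layer guarantees that each $\Phi_i$-check terminates, and evolutionary consistency of the Kernel (already proven) guarantees that appending $p_{k+1}$ preserves well-formedness of the partial proof.

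The cyclicity elimination clause is handled separately by contradiction. Suppose the chain admits a cycle returning funds to the origin without substantive transformation. Then one can construct an inhabitant $c : \textsf{Path}(A,A)$ witnessing this cycle from the evidence items. By the acyclicity axiom declared in Section~4, the existence of $c$ together with the conjoined compliance witnesses would derive $\bot$; but the Consistency Theorem for KOS-TL Core forbids $\emptyset\vdash \pi:\bot$. Hence $P_{\text{total}}$ is constructible iff no such $c$ exists, which is exactly the claim that every $p_i$ eliminates the cyclicity hypothesis.

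The main obstacle I expect is the elimination clause, not the $\Sigma$-assembly. Concretely, the difficulty is discharging $\neg\textsf{CycleHyp}(t_i,L)$ in a way that is \emph{local} to each $p_i$, because the cyclicity predicate is a global topological property of the whole chain while the inductive construction builds proofs one transaction at a time. To bridge this, I would introduce an auxiliary invariant $\textsf{AcyclicPrefix}(L_{\le i})$ that is monotone under chain extension and show, via the Knowledge Monotonicity theorem, that preserving this invariant at each step is sufficient to discharge the global acyclicity obligation at stage $n$. A secondary, milder obstacle is the \emph{semi-decidability} of proof search for $\textsf{CheckCompliance}$ in the general case; here I would appeal to the Bounded Decidability theorem of the Runtime layer to guarantee that with the resource vector $\vec{\Delta}$ supplied by the auditor, the verification procedure terminates with \textsf{True}, \textsf{False}, or \textsf{Unknown}, the last of which deterministically flags the chain for escalation rather than leaving the system in a non-terminating state.
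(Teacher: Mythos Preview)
Your proposal is considerably more elaborate than what the paper actually does. In the paper, this ``theorem'' functions as a \emph{specification} of the verification criterion rather than as a result requiring proof: the paper simply states the requirement and then, in the immediately following ``Logical Interception'' paragraph, observes that if a transaction's evidence does not match (fictitious trade), the Core layer cannot construct the corresponding proof term, and hence by the Consistency Theorem the kernel cannot evolve the chain to \textsf{Verified}. That one-sentence appeal to consistency is the entirety of the paper's justification.

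Your inductive assembly of $P_{\text{total}}$ via iterated $\Sigma$-introduction, the use of the Substitution Lemma for the step case, and the contradiction argument for acyclicity via $\textsf{Path}(A,A)$ and the Consistency Theorem are all coherent and would constitute a genuine formal development---but the paper does none of this. In particular, your identification of the local-vs-global tension in discharging $\neg\textsf{CycleHyp}(t_i,L)$ and your proposed $\textsf{AcyclicPrefix}$ invariant are real technical contributions that the paper simply does not address; the paper treats acyclicity as an axiom to be checked, not as something to be threaded through an inductive construction. So there is no gap in your reasoning, but you should be aware that you are supplying substantially more rigor than the source provides, and that the paper's own ``proof'' is essentially the single observation that unconstructible proof terms block state evolution by consistency.
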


\subsubsection{Logical Interception}
If a transaction corresponds to fictitious trade (e.g., transfer amount does not match the contract hash), the Core layer cannot construct a proof term. By the consistency theorem, the kernel cannot evolve the chain state to \textsf{Verified}.

\subsubsection{Case Study: Cooling System Fault Handling}

\emph{Physical state} $s$: Abnormal voltage fluctuations from the pressure sensor of cooling pump A.

\emph{Safety goal} $\textsf{Safe}$: Pressure must remain within $[P_L, P_H]$, and the sensor must possess valid calibration proof.

\paragraph{Step 1: Refinement and Fidelity (Runtime Layer)}
A raw voltage signal of $2.4V$ is refined:
\[
ku_{\text{press}} = \langle 120kPa, p_{\text{calib}} \rangle : \textsf{Press}
\]
Without $p_{\text{calib}}$, no term of type \textsf{Press} can be constructed.

\paragraph{Step 2: Evolution and Monotonicity (Kernel Layer)}
The kernel updates the global state:
\[
\sigma_{\text{new}} = \textsf{unify}(\sigma, ku_{\text{press}})
\]
Previously recorded facts (e.g., ``Pump A is active'') are preserved.

\paragraph{Step 3: Reduction and Consistency (Core Layer)}
The kernel evaluates:
\[
\textsf{is\_safe}(v) \equiv (P_L \le v \le P_H)
\]
With $P_H = 110kPa$, the proof term $p_{\text{safe}} : \textsf{is\_safe}(120)$ cannot be constructed. The system cannot falsely assert safety.

\paragraph{Step 4: Reflexive Audit and Self-Healing}
A control instruction $\textit{ctrl}$ is generated (e.g., activate pump B, shut down pump A), along with proof:
\[
\pi : \textsf{Id}(\sigma_{\text{fault}}, \sigma_{\text{recovery}})
\]
This proof explains the decision in terms of Core-level constraints.

\paragraph{Step 5: Observational Adequacy (Runtime Layer)}
The logical instruction $\textsf{Close(Pump\_A)}$ is refined into concrete bus signals, guaranteeing the intended physical effect.

\section{Application of KOS-TL}

\subsection{Application Background: Quality Anomaly Traceability in Manufacturing}

Consider a large discrete manufacturing enterprise whose core challenge is the following:

\begin{quote}
\emph{When a certain batch of products exhibits severe quality defects, can the system automatically trace its production process, identify potential anomalies related to equipment, personnel, or raw materials, and produce an executable and explainable causal chain?}
\end{quote}

This problem exhibits several typical characteristics:

\begin{itemize}
  \item Heterogeneous data sources (work orders, equipment logs, personnel schedules, quality inspection records);
  \item Strong temporal ordering and causal constraints;
  \item Inference results must directly support production decisions and responsibility attribution.
\end{itemize}

The system involves the following core tables (originating from different subsystems):
\begin{enumerate}[label=(\arabic*)]
  \item Product records: \texttt{Product(ProductID, Model)}
  \item Batch records: \texttt{Batch(BatchID, ProductID, ProduceDate)}
  \item Production lines: \texttt{ProductionLine(LineID, Factory)}
  \item Process routes: \texttt{ProcessRoute(Model, StepName, StepOrder, TargetLineType)}
  \item Process thresholds: \texttt{ProcessThreshold(Model, StepName, ParamName, MinValue, MaxValue)}
  \item Step execution details: \texttt{StepExecution(WOID, StepName, StartTime, EndTime, EquipID)}
  \item Sensor time series: \texttt{SensorTimeSeries(EquipID, ParamName, Value, Timestamp, DeviceStatus)}
  \item Work orders: \texttt{WorkOrder(WOID, BatchID, LineID)}
  \item Operators: \texttt{Operator(OperatorID, Name, Role)}
  \item Operation logs: \texttt{OperationLog(LogID, WOID, OperatorID, Time)}
  \item Equipment: \texttt{Equipment(EquipID, LineID)}
  \item Equipment status: \texttt{EquipmentStatus(EventID, EquipID, Status, Time)}
  \item Process parameters: \texttt{ProcessParam(LogID, ParamName, Value)}
  \item Quality inspections: \texttt{QualityInspection(InspectID, BatchID, Result, Time)}
  \item Defect reports: \texttt{DefectReport(ReportID, BatchID, DefectType)}
  \item Supply chain records: \texttt{SupplierPart(PartID, SupplierID, BatchID)}
\end{enumerate}

To illustrate the approach, we trace a causal reasoning workflow for quality anomaly analysis in bearing production, as shown in Table~\ref{tab:causal-reasoning}.

\begin{table}[h]
\centering
\caption{Causal Reasoning for Quality Traceability in Bearing Manufacturing}
\label{tab:causal-reasoning}
\begin{tabular}{>{\centering\arraybackslash}p{2cm} >{\centering\arraybackslash}p{3cm} >{\raggedright\arraybackslash}p{9cm}}
\toprule
\textbf{Step} & \textbf{System Action} & \textbf{Concrete Data Example} \\
\midrule
Input & Quality inspection reports anomaly & Batch\_202310-01 detected ``non-uniform hardness'' at 10:00 on 2023-10-10. \\
Type instantiation & Construct $f_{fail}$ & $f_{fail} : \mathsf{FailureEvent} = \langle \text{"B2310"}, \text{"HARD\_ERR"}, \text{10:00} \rangle$ \\
Kernel reasoning & Search for causal evidence & Retrieved that the batch passed through furnace HeatTreatment\_03 at 08:00, and a temperature fluctuation $a_{temp}$ occurred at 07:55. \\
Logical synthesis & Construct causal chain & $r = \langle f_{fail}, a_{temp}, \text{prf}_{causal} \rangle$. The proof term $\text{prf}_{causal}$ automatically verifies $07{:}55 < 10{:}00$. \\
\bottomrule
\end{tabular}
\end{table}

By applying KOS-TL reasoning, the system does not return a simple SQL query result to the user, but rather a \emph{logical proof package}. When the user opens the report, the system can expand $\text{prf}_{causal}$ and directly locate the original PLC logs corresponding to the temperature fluctuation, since these logs are integral components in the construction of the report $r$.

\subsection{The Application Workflow of KOS-TL}

In practical deployment, KOS-TL mainly involves the following stages:
\begin{enumerate}
    \item Definition of initial atomic types, predicate types, events, and constraints.
    This part belongs to the \emph{Core} layer. Such definitions essentially determine the
    boundary of logical validity of the system being modeled.
    \item The \emph{Runtime} layer serves as the interface between the system and the external world.
    Through the runtime, the KOS-TL system acquires data and refines it into typed objects
    (logically operable entities).
    \item The \emph{Kernel} layer is responsible for concrete knowledge operations.
\end{enumerate}

\subsubsection{Kernel Layer: Rule Definitions and Logical Constraints}

For the problem of \emph{causal reasoning in bearing production quality traceability},
the Kernel layer defines the corresponding types and constraints.

\paragraph{(1) Basic Atomic Types}

The basic atomic types are shown in Table~\ref{tab:domain-types-refinement},
including $\mathsf{BatchID}$, $\mathsf{Machine}$, and $\mathsf{Time}$.

\begin{table}[h]
\centering
\caption{Domain Concepts and Type Refinement Relations}
\label{tab:domain-types-refinement}
\begin{tabular}{>{\centering\arraybackslash}p{2cm} >{\raggedright\arraybackslash}p{3.5cm} >{\raggedright\arraybackslash}p{3.5cm} >{\raggedright\arraybackslash}p{5cm}}
\toprule
\textbf{Domain Concept} & \textbf{Logical Type (Core)} & \textbf{Kernel Atomic Type} & \textbf{Refinement Logic} \\
\midrule
Time & Time & Float / UInt64 & Direct mapping, representing Unix timestamps or logical clocks. \\
BatchID & BatchID & Val / String & $\Sigma(s:Val).\text{Proof}(isIDFormat(s))$ \\
Machine & Machine & Val / Enum & $\Sigma(v:Val).\text{Proof}(v\in EquipRegistry)$ \\
\bottomrule
\end{tabular}
\end{table}

\paragraph{(2) Predicate Types}

Predicate types include:
\begin{itemize}
    \item $\mathsf{InRoute}(b, m)$, which defines whether batch $b$ is allowed to be processed on machine $m$.
    \item $\mathsf{Overlap}(t, dur)$, which defines whether time point $t$ falls within duration $dur$.
\end{itemize}

\paragraph{(3) Events and Constraints}

\begin{enumerate}[label=(\roman*)]
    \item \textbf{Failure Event Type} ($\mathsf{FailEvt}$)
    \[
    \mathsf{FailEvt} \equiv \Sigma(b: \mathsf{BatchID}). \Sigma(err: \mathsf{ErrorCode}). \Sigma(t: \mathsf{Time}). \mathsf{Proof}(t \in \text{Shift}_{QA})
    \]
    This type not only records which batch failed, but also enforces a proof that the inspection time lies within the QA shift.

    \item \textbf{Process Step Type} ($\mathsf{ProcStep}$)
    \[
    \mathsf{ProcStep} \equiv \Sigma(b: \mathsf{BatchID}). \Sigma(m: \mathsf{Machine}). \Sigma(dur: \mathsf{Time} \times \mathsf{Time}). \mathsf{Proof}(\mathsf{InRoute}(b, m))
    \]
    The predicate $\mathsf{InRoute}$ guarantees that the batch is processed on machine $m$ according to the defined process route.

    \item \textbf{Environmental Anomaly Type} ($\mathsf{Anomaly}$)
    \[
    \mathsf{Anomaly} \equiv \Sigma(m: \mathsf{Machine}). \Sigma(p: \mathsf{Param}). \Sigma(v: \mathsf{Val}). \Sigma(t: \mathsf{Time})
    \]

    \item \textbf{Causal Validity Constraint} ($\mathsf{CausalProof}(a,f)$)
    \[
    \mathsf{isBefore}(t(a), t(f)) \land \mathsf{isSameResource}(\text{location}(a), \text{process}(f))
    \]
    Traceability is defined as a proof search problem:
    \[
    \forall f : \mathsf{Failure}, \exists (a, \pi) : \Sigma(a:\mathsf{Anomaly}) . \mathsf{CausalProof}(a, f)
    \]

    \item \textbf{Causal Proof} ($\mathsf{CausalProof}$)
    \[
    \mathsf{CausalProof}(a, f) \equiv \Sigma(e : \mathsf{ProcStep}). \text{Prop}_{causal}(a, e, f)
    \]
    where $\text{Prop}_{causal}$ enforces:
    \begin{itemize}
        \item Temporal logic: $a.t \in e.dur \land e.dur.end < f.t$;
        \item Spatial logic: $a.m = e.m$;
        \item Batch consistency: $e.b = f.b$.
    \end{itemize}

    \item \textbf{Root Cause Report} ($\mathsf{RootCauseReport}$)
    \[
    \mathsf{RootCauseReport} \equiv \Sigma(f : \mathsf{FailEvt}) . \Sigma(a : \mathsf{Anomaly}) . \mathsf{CausalProof}(a, f)
    \]
    Semantically, this definition encodes:
    \begin{itemize}
        \item the existence of a failure $f$;
        \item the existence of a physical anomaly $a$;
        \item a non-Boolean causal proof witnessing their relation.
    \end{itemize}
\end{enumerate}

\subsubsection{Runtime Layer: Data Acquisition and Elaboration}

The Runtime layer extracts data from external databases and refines it into
Kernel-level objects, thereby \emph{logicalizing} raw data.

Typical source tables include:
\begin{itemize}
    \item \texttt{Product\_Master}: process routes;
    \item \texttt{Execution\_Log}: work order execution records;
    \item \texttt{IoT\_Sensor\_Stream}: sensor streams;
    \item \texttt{Quality\_Report}: inspection results.
\end{itemize}

These data are elaborated into proof-carrying objects such as:
\[
f_0 = \text{mkFailure}(\text{Batch\_{202310-01}}, \text{Hardness\_Issue}, 10{:}00, \pi_{QA})
\]

\subsubsection{Core Layer: Proof Construction and Small-Step Evolution}

The Core layer executes proof construction following Kernel rules.
It operates via small-step semantics, gradually evolving configurations
until a \emph{RootCauseReport} is materialized in the global knowledge base.

In summary, KOS-TL integrates runtime data acquisition, kernel-level logical constraints,
and core-level proof synthesis into a unified, type-safe reasoning pipeline.
Rather than producing opaque query results, the system yields formally verified,
explainable causal reports whose correctness is guaranteed by construction.

\subsection{KOS-TL Adapting to Changes in Business Rules}

We continue with the same example from the previous subsection---\emph{bearing heat treatment}---and assume that the business rules change as follows:

To prevent temper brittleness, if a \emph{voltage anomaly} occurs during the heat treatment process, the system must additionally check whether the \emph{cooling water circulation pressure} during the same time period is abnormal. Only when \emph{both} anomalies are present can the situation be classified as a severe quality defect.

In the KOS-TL framework introduced in the previous subsection, this change requires modifying \emph{only} the type definition of \textsf{CausalProof} in the \textbf{Core} layer.

\subsubsection{Type definition before modification}

\[
\mathsf{CausalProof}(a, f) \;\equiv\;
\Sigma(e : \mathsf{ProcStep}).\; \text{Prop}_{time}(a, e, f)
\]

\subsubsection{Type definition after modification (injecting the new rule)}

We redefine \textsf{CausalProof} as a dependent type that must contain evidence of a \emph{dual anomaly}:

\[
\mathsf{CausalProof}(a, f) \;\equiv\;
\Sigma(e : \mathsf{ProcStep}).\;
\Sigma(w : \mathsf{WaterPressureAnomaly}).\;
\text{Prop}_{joint}(a, w, e, f)
\]

The new constraint requires not only the presence of a voltage anomaly $a$, but also enforces the existence of a water pressure anomaly $w$ within the same production process $e$.

Once the type in the Core layer changes, a chain reaction is automatically triggered in the \textbf{Kernel} layer through its small-step reduction logic:

\begin{enumerate}[label=(\arabic*)]
  \item \textbf{Constructor invalidation.}
  The original constructor \textsf{mkCausalProof} immediately fails type checking in the Kernel due to missing parameters (namely $w$ and its corresponding proof).

  \item \textbf{Automatic triggering of new search.}
  Upon detecting that the target type requires a \textsf{WaterPressureAnomaly}, the Kernel automatically initiates a search over water pressure data in the environment $\sigma$.
\end{enumerate}

\subsubsection{Reorganization of proof synthesis paths}

\textbf{Case 1 (Voltage anomaly only).}
The Kernel cannot find a matching water pressure anomaly $w$, and therefore fails to construct a complete $\pi_{\mathit{causal}}$ term. The inference result is automatically classified as \emph{invalid}.

\textbf{Case 2 (Dual anomalies).}
The Kernel automatically composes the voltage anomaly, the water pressure anomaly, and the production process into a new result $r_{\mathit{final}}$.

The essence of this capability is an extreme form of \emph{Type-Directed Development}, as summarized in Table~\ref{tab:kos-tl-mechanisms}.

\begin{table}[htbp]
\centering
\caption{Mechanisms of KOS-TL}
\label{tab:kos-tl-mechanisms}
\begin{tabular}{p{2.5cm}p{8cm}p{5cm}}
\toprule
\textbf{Property} & \textbf{Mechanism} & \textbf{Implication} \\
\midrule
Self-healing &
If the runtime data source does not provide water pressure data, the Kernel raises a ``missing type'' error rather than producing an incorrect conclusion. &
Strictly guarantees the safety of conclusions and prevents blind traceability. \\
\addlinespace[0.5ex]
Push-down logic &
New rules are propagated downward through the signature of \textsf{mkCausalProof}; the Kernel's search algorithm automatically detects new parameter requirements. &
Developers do not need to rewrite search algorithms; algorithms adapt automatically to type changes. \\
\addlinespace[0.5ex]
Zero redundancy &
Existing traceability code does not need to be removed; once the referenced types are updated, its behavior changes automatically. &
Achieves true ``configuration-as-logic.'' \\
\bottomrule
\end{tabular}
\end{table}

In this example, neither the \textsf{analyze} function nor the \textsf{getProductionContext} function is modified. By changing only the type signatures in the Core layer, one redefines the physical boundary of \emph{what counts as truth}.

The Kernel layer behaves like a fully automated puzzle-solving machine: once the puzzle template (Core) is changed, it automatically adjusts its strategy for searching puzzle pieces (Data) and the final assembled picture (Result).

This is the core value of KOS-TL when dealing with complex and evolving industrial environments: ensuring atomic-level consistency between the evolution of system logic and changes in business rules.

\subsection{KOS-TL Enabling Cross-Domain Logical Consistency}

KOS-TL can further achieve cross-domain logical consistency through a \emph{shared logical kernel} and \emph{cross-domain type references}.

In KOS-TL, neither the financial domain nor the quality domain directly accesses raw databases. Instead, both domains subscribe to the same \emph{knowledge objects} materialized by the Kernel layer.

Continuing the previous example of \emph{bearing heat treatment}, consider the following scenario:

\emph{If an abnormal heat-treatment voltage is detected (a quality risk), then the payment to the raw material supplier associated with the affected batch must automatically enter a ``pending audit'' state, and the corresponding financial voucher must include evidence of the quality anomaly (financial risk control).}

KOS-TL realizes this requirement by defining \emph{cross-dependent types} at the Core layer.

\subsubsection{Logical Extension in the Quality Domain}

At the quality Core layer, we have already defined
\[
r_{\mathit{quality}} : \mathsf{RootCauseReport},
\]
which certifies that voltage fluctuation caused the quality defect.

\subsubsection{Type Definition in the Financial Domain (New)}

At the financial Core layer, we introduce a new type $\mathsf{AuditLock}$:
\[
\mathsf{AuditLock} \;\equiv\;
\Sigma(inv : \mathsf{Invoice}) .
\Sigma(r : \mathsf{RootCauseReport}) .
\mathsf{Proof}(inv.batch = r.f.b)
\]

The constructor of the financial object $\mathsf{AuditLock}$ \emph{mandatorily} requires an input
$r : \mathsf{RootCauseReport}$. Without a generated quality report $r$, the financial layer cannot construct a valid audit-lock object.

Once quality traceability is completed, a new item $r_{\mathit{final}}$ is added to the Kernel state $\sigma$. At this moment, the cross-domain logical engine automatically triggers the next reduction step:

\begin{itemize}
  \item \textbf{Financial Observer Triggered:}
  The financial system’s \textsf{analyzeAudit} procedure detects that a quality anomaly report $r_{\mathit{final}}$ exists in $\sigma$ with the same batch as invoice $inv_{01}$.
  \item \textbf{Cross-Domain Item Composition:}
  Through unification, the Kernel directly injects the quality-domain evidence $r_{\mathit{final}}$ into the financial-domain $\mathsf{AuditLock}$ object.
  \item \textbf{Atomic Update:}
  \[
  \langle \sigma, \mathsf{FinanceGoal} \rangle
  \xrightarrow{\text{small}}
  \langle \sigma \cup \{ \mathsf{lock}_{01} \}, \mathsf{unit} \rangle
  \]
  where
  \[
  \mathsf{lock}_{01} = \langle inv_{01}, r_{\mathit{final}}, \pi_{\mathit{match}} \rangle .
  \]
\end{itemize}

In the cross-domain architecture of KOS-TL, the financial procedure \textsf{analyzeAudit} is not an isolated software module. Instead, it is a \emph{predicate listener} mounted on the shared logical Kernel. Its core responsibility is to monitor the evolution of the state $\sigma$ and to automatically derive audit actions whenever specific \emph{financial–quality cross-constraints} are satisfied.

\subsubsection{The \textsf{analyzeAudit} Function}

At the Core layer, \textsf{analyzeAudit} is defined as a higher-order function whose purpose is to construct an audit-lock item:
\[
\mathsf{analyzeAudit} :
\Pi(inv : \mathsf{Invoice}) \to \mathsf{Option}(\mathsf{AuditLock})
\]

Its internal derivation logic follows the rule:
\[
\frac{
  inv \in \sigma
  \quad
  \exists r : \mathsf{RootCauseReport}
  \;\text{s.t.}\;
  \mathsf{Unify}(inv.batch, r.f.b)
}{
  \mathsf{derive}(\mathsf{AuditLock}(inv, r))
}
\]

\begin{itemize}
  \item \textbf{Input:} A financial invoice $inv$ pending processing.
  \item \textbf{Trigger Condition:}
  A quality report $r$ exists in the Kernel state whose batch unifies with the invoice batch.
  \item \textbf{Output:}
  If the condition holds, an audit object containing quality evidence is produced; otherwise, \textsf{None} is returned.
\end{itemize}

When a voltage fluctuation occurs on the production line and a quality report is generated, \textsf{analyzeAudit} undergoes the following evolution at the Kernel layer:

\paragraph{Step 1: Cross-Domain Detection}

The Kernel detects the newly materialized $r_{\mathit{final}}$. Since \textsf{analyzeAudit} subscribes to changes in the \textsf{BatchID} type, it is immediately activated.

\paragraph{Step 2: Dependency Extraction}

Using the index \texttt{inv.batch}, the program locates the associated financial invoice. For example:
\[
\text{Batch\_10-01} \;\longrightarrow\; \text{Inv\_2023\_009 (Supplier: SteelCo)}
\]

\paragraph{Step 3: Proof Transparency}

This is the most critical step. \textsf{analyzeAudit} does not merely flag an issue; it directly \emph{references} the proof object $\pi_{\mathit{causal}}$ from the quality domain:
\[
\mathsf{lock}_{item}
=
\langle \mathsf{Inv}_{009}, r_{\mathit{final}}, \pi_{\mathit{match}} \rangle
\]
This means that the financial system now holds the \emph{physical evidence} of the quality anomaly, such as voltage curves and computational records.

\paragraph{Step 4: Action Materialization}

Once $\mathsf{lock}_{item}$ is instantiated in the Kernel, the Runtime-layer financial plugin captures this state change and immediately executes the following actions in the ERP system:
\begin{itemize}
  \item \textbf{Payment Suspension:} Freeze settlement of \textsf{Inv\_2023\_009}.
  \item \textbf{Audit Endorsement:} Automatically attach the logical trace of $r_{\mathit{final}}$ to the invoice for review.
\end{itemize}

\begin{algorithm}
\caption{KOS-TL Cross-Domain Audit Procedure: $\mathsf{analyzeAudit}(r_{\mathit{final}})$}
\begin{algorithmic}[1]
\REQUIRE Newly materialized root-cause report
$r_{\mathit{final}} = \langle f, a, e, \pi_{\mathit{causal}} \rangle :
\mathsf{RootCauseReport}$
\ENSURE Audit-lock item $\mathsf{AuditLock}$ or $\emptyset$
\STATE \textbf{Step 1: Cross-Domain Detection}
\STATE Monitor Kernel state $\sigma$; trigger upon materialization of $r_{\mathit{final}}$.
\STATE Extract batch index: $b \leftarrow r_{\mathit{final}}.f.b$
\COMMENT{$BatchID$ serves as the unique key for cross-domain unification}
\STATE \textbf{Step 2: Dependency Extraction}
\STATE Retrieve related invoices from financial state $\sigma_{\mathit{fin}}$:
\STATE $inv \leftarrow \{ i \in \sigma_{\mathit{fin}} \mid
i : \mathsf{Invoice} \wedge \mathsf{Unify}(i.batch, b) \}$
\IF{$inv = \emptyset$}
  \STATE \RETURN $\emptyset$
\ENDIF
\STATE \textbf{Step 3: Proof Transparency}
\STATE Construct proof predicate $\pi_{\mathit{match}}$ linking $inv$ and $r_{\mathit{final}}$.
\STATE Reference quality proof: $\pi_{\mathit{ref}} \leftarrow r_{\mathit{final}}.\pi_{\mathit{causal}}$
\STATE Instantiate audit lock:
$lock_{item} \leftarrow \langle inv, r_{\mathit{final}}, \pi_{\mathit{match}}, \pi_{\mathit{ref}} \rangle$
\STATE \textbf{Step 4: Action Materialization}
\STATE \textbf{Atomic materialization:} $\sigma \leftarrow \sigma \cup \{ lock_{item} \}$
\STATE Execute $\mathsf{FreezePayment}(inv)$
\STATE Execute $\mathsf{AttachEvidence}(inv, r_{\mathit{final}}.\mathit{trace})$
\RETURN $lock_{item}$
\end{algorithmic}
\end{algorithm}

Suppose an additional financial rule is introduced:
\emph{``An audit lock is triggered only if the estimated quality loss exceeds $20\%$ of the invoice amount.''}

In KOS-TL, this requires only adding a logical predicate to the definition of \textsf{analyzeAudit}:
\[
\mathsf{Proof}(r.\mathit{loss\_estimate} > inv.\mathit{total} \times 0.2)
\]

Even if a quality report is generated, if the estimated loss is insufficient, the unification in \textsf{analyzeAudit} fails, and the financial system automatically maintains a \emph{normal settlement} state. This form of logical pushdown ensures that financial decision-making remains mathematically consistent with the physical realities of the production line.

\subsection{Counterfactual Reasoning in KOS-TL}

In KOS-TL, counterfactual reasoning is not realized through \emph{guessing}, but through kernel-level simulation over \textbf{parallel state spaces}.

\subsubsection{How KOS-TL Defines Counterfactuals}

In traditional AI, counterfactuals are typically expressed as probabilistic queries of the form
$P(y \mid do(x))$.
In contrast, KOS-TL defines a counterfactual as the evaluation of a virtual configuration:

\emph{``If, in the state $\sigma_0$, the fact $a$ (voltage fluctuation) had not occurred, would $r_{\mathit{final}}$ (the failure proof) still be instantiable in the kernel?''}

\subsubsection{Implementation Mechanism: Virtual Context}

KOS-TL realizes counterfactual reasoning through \emph{branching} of the context $\Gamma$.

\begin{itemize}
  \item \textbf{Shadow State Construction ($\sigma'$):}
  The kernel copies the current knowledge base $\sigma_0$, but selectively removes or modifies a specific fact (e.g., removing $a_{\mathit{volt}}$).
  \item \textbf{Hypothetical Evaluation:}
  Small-step evaluation is restarted under the new configuration
  \[
  \langle \Gamma, \sigma', \mathsf{analyze}(f_0) \rangle .
  \]
  \item \textbf{Lemma Comparison:}
  If the evaluation reduces to $\bot$ (the empty type), then $a_{\mathit{volt}}$ is a necessary condition for $f_0$ (necessary causation).
  If the evaluation can still generate $r'_{\mathit{final}}$, then redundant causation exists, or $a_{\mathit{volt}}$ is merely confounding noise.
\end{itemize}

\subsubsection{Three Application Scenarios of Counterfactual Reasoning}

\paragraph{A. Root-Cause Sensitivity Analysis}

\emph{Question:}
If the voltage fluctuation were only $2\%$ instead of $10\%$, would the hardness still fail to meet specifications?

\emph{Kernel Action:}
Modify the value of $a_{\mathit{volt}}$ in $\sigma$ and observe whether
$\mathsf{mkCausalProof}$ still passes the physical threshold checks defined at the Core layer.

\paragraph{B. Liability Attribution}

\emph{Question:}
If the batch of raw material supplied by Vendor~A had not been used, would fluctuations in furnace M\_03 alone still cause the failure?

\emph{Kernel Action:}
Remove the raw-material fact in a virtual environment and observe whether the proof chain collapses.
This directly supports the recovery and compensation logic in the upper-layer \textsf{analyzeAudit} procedure.

\subparagraph{C. Preemptive Simulation}

\emph{Question:}
If the power of furnace M\_03 were increased by $5\%$ tomorrow, would it trigger a similar failure $f_0$?

\emph{Kernel Action:}
This form of \emph{forward-looking counterfactual} allows the system to complete a ``virtual accident'' in logical space before any physical incident occurs.

\subsubsection{Logical Formulation}

In the formal language of KOS-TL, counterfactual reasoning is typically written as:
\[
\mathcal{C} \vdash \neg a \;\Rightarrow\; \neg (\exists \pi : \mathsf{Proof}(f))
\]
That is, under the current configuration, one proves that \emph{if $a$ does not exist, then no proof object for $f$ can be instantiated}.

\subsubsection{Decoupling Construction and Validation}

In the \textsf{analyze} function, $r_{\mathit{final}}$ is indeed a candidate construction that includes $a_{\mathit{volt}}$.
Counterfactual reasoning instead asks:

\emph{``If the atomic fact $a_{\mathit{volt}}$ is removed from the axiom system, can another valid proof $\pi'$ still be constructed that points to $f_0$?''}

\begin{itemize}
  \item If yes, then $a_{\mathit{volt}}$, although present in $r_{\mathit{final}}$, is only a \emph{sufficient but non-necessary condition} (or part of multiple causation).
  \item If no, then $a_{\mathit{volt}}$ is a \emph{necessary condition}.
\end{itemize}

\subsubsection{The Kernel’s Parallel-Space Mechanism: $\sigma$ vs.\ $\sigma \setminus \{a\}$}

During counterfactual reasoning, the Kernel does not operate on a single triple, but generates an environmental slice.

\begin{itemize}
  \item \textbf{Actual trajectory:}
  \[
  \langle \Gamma, \sigma, f_0 \rangle \;\Rightarrow\; r_{\mathit{final}}
  \quad (\text{containing } a_{\mathit{volt}})
  \]
  \item \textbf{Counterfactual trajectory:}
  \[
  \langle \Gamma, \sigma \setminus \{a_{\mathit{volt}}\}, f_0 \rangle \;\Rightarrow\; \bot
  \quad (\text{derivation collapses})
  \]
\end{itemize}

When the counterfactual trajectory reduces to $\bot$ (the bottom type / empty set), it logically reinforces the authority of the actual trajectory.
In logic, this is known as \emph{affirmation by negation}.

\subsubsection{Elimination Rules in Type Theory}

Counterfactual reasoning effectively computes \emph{contribution}:
\[
\mathsf{Contrib}(a, f)
\;\iff\;
(\sigma \vdash f)
\;\wedge\;
(\sigma \setminus \{a\} \nvdash f)
\]
Here, $\nvdash$ denotes that, without $a$, the remaining knowledge base cannot derive a proof of the failure $f$.

If this condition holds, it rules out the possibility that \emph{even without voltage fluctuation, hardness would still degrade due to other causes}.

\subsubsection{Multiple-Cause Illustration}

Suppose that, in addition to voltage fluctuation ($a_1$), there is also excessive impurity in the raw material ($a_3$).

\begin{itemize}
  \item \textbf{Initial derivation:}
  Two candidate reports are obtained: $r_1(a_1)$ and $r_3(a_3)$.
  \item \textbf{Counterfactual tests:}
  \begin{itemize}
    \item Remove $a_1$; if $a_3$ can still derive the failure,
    then $a_1$ is not the unique root cause.
    \item Remove $a_3$; if the derivation still holds,
    then $a_3$ is likely only noise.
  \end{itemize}
\end{itemize}

\subsection{Logic as the System Kernel}

Through the reasoning processes above, KOS-TL demonstrates its core strengths as a logical system:

\begin{enumerate}
  \item \textbf{Construction as Evidence}:
  The generated $\mathsf{RootCauseReport}$ is not merely a piece of text.
  It contains pointers to the original equipment logs and quality inspection records, together with a complete logical proof chain, thereby providing a \emph{hard guarantee} of explainability.

  \item \textbf{State Preservation}:
  At every Small-step transition, the kernel verifies the $Post$ constraints.
  If a tracing result violates physical logic (e.g., a cause occurring after its effect), the transition fails, ensuring that the system state always remains within a logically consistent space.

  \item \textbf{Computational Completeness}:
  Unlike Description Logic with its Open World Assumption, KOS-TL leverages strong normalization at the kernel level to guarantee that any tracing program will, in finitely many steps, either yield a definitive causal conclusion or report insufficient evidence.
\end{enumerate}

In this example, KOS-TL transforms quality tracing from \emph{post-hoc auditing} into \emph{real-time logical calculus}.
The system is not asking ``why did it fail,'' but instead attempts to construct a complete logical proof of ``the failure itself.''
This paradigm shift—from \textbf{truth judgment} to \textbf{proof construction}—endows complex knowledge management in manufacturing with operating-system-level rigor.

This section, through a large-scale manufacturing quality anomaly analysis scenario, illustrates how KOS-TL maps heterogeneous business tables into dependent types and realizes automatic knowledge derivation and causal tracing via Small-step operational semantics.

\begin{table}[htbp]
\centering
\caption{Fundamental Differences Between Description Logic (DL) and KOS-TL}
\begin{tabular}{l|l|l}
\hline
\textbf{Dimension} & \textbf{Description Logic (DL)} & \textbf{KOS-TL} \\ \hline
Core Paradigm & Static ontological consistency & Dynamic operational semantics \\
Logical Property & Truth evaluation & Proof construction \\
Temporal Handling & External extensions (Temporal DL) & Intrinsic temporal ordering constraints \\
Application Goal & Knowledge description and querying & Knowledge operating system kernel \\ \hline
\end{tabular}
\end{table}

\begin{quote}
\textit{``KOS-TL is not designed merely to describe the world, but to run a logically closed-loop operational control system with knowledge as its kernel.''}
\end{quote}

In KOS-TL, knowledge derivation is essentially a process of constructing and applying higher-order functions.

Traditional systems \emph{use hard-coded functions to process data}.
KOS-TL, by contrast, embodies the idea that \emph{data drives the synthesis of logical terms, which in turn form derived function bodies endowed with reasoning capability}.
This mechanism of ``data-derived functions'' can be understood along the following three dimensions.

\subsubsection{From Data Tuples to Proof Terms}

At the Runtime layer, raw data such as \texttt{(Batch\_202310-01, 10:00)} is merely passive information.
During elaboration, however, it is encapsulated into $\Sigma$-terms with logical signatures.

\emph{Essence}:
From the Kernel’s perspective, these terms are no longer simple ``values,'' but small, composable function fragments.
For example, the proof $\pi_{route}$ carried by $e_{proc}$ is itself a logical function capable of proving ``route validity.''

\subsubsection{Dynamically Generated Causal Chains as Function Composition}

What appears as the \textsf{analyze} function or the construction of a \textsf{RootCauseReport} is, in reality, the Kernel dynamically composing a new logical function based on real-time data.

\emph{Process}:
When the Kernel detects a match between $a_{volt}$ and $f_0$, it does not simply connect them.
Instead, it constructs a new lambda term:
\[
\lambda(t).\;\mathsf{proof\_of\_overlap}(t, e_{proc}.\mathsf{dur})
\]

\emph{Implication}:
The newly generated logical term $\pi_{final}$ is a \emph{specialized function}, tailored specifically to explain this particular hardness defect occurring at a specific time on a specific machine.

\subsubsection{Division of Derivation Responsibilities Across the Three-Layer Architecture}

This example illustrates how functions ``flow'' across different layers and become ``materialized'':

\begin{table}[h]
\centering
\caption{Layers and the Essence of Derivation in the Bearing Case}
\label{tab:layer-derived-behavior}
\begin{tabular}{>{\centering\arraybackslash}p{1.5cm} >{\raggedright\arraybackslash}p{4cm} >{\raggedright\arraybackslash}p{9cm}}
\toprule
\textbf{Layer} & \textbf{Essence of Derivation} & \textbf{Manifestation in the Bearing Case} \\
\midrule
Core & Defines the ``space'' of functions & Defines the higher-order template \textsf{mkCausalProof}, specifying required input and output types. \\
Runtime & Provides the ``operands'' of functions & Extracts concrete values from SQL tables and elaborates them into logically meaningful atomic facts (e.g., $f_0, a_{volt}$). \\
Kernel & Performs function ``evaluation'' & Through Small-step semantics, fills fragmented data into Core-layer templates, materializing a concrete causal proof function $R$. \\
\bottomrule
\end{tabular}
\end{table}

\subsubsection{Why Is This Called ``Derivation''?}

In traditional software, adding a tracing logic such as ``voltage fluctuation causes hardness non-uniformity'' would require manually writing a function like \texttt{checkVoltage()}.

In KOS-TL, by contrast, one only defines general causal principles at the Core layer (e.g., $t(a) < t(f)$ together with physical correlation).
When the specific voltage data for \texttt{Batch\_202310-01} enters the system, the Kernel uses this data to \emph{derive} a batch-specific, specialized proof function instance.

\subsubsection{Conclusion}

Functions are not pre-written and fixed; they are dynamically derived logical results based on the current system state $\sigma$ and the observed fact $f$.

This example perfectly embodies the philosophy of \emph{``Proof as Program''}:
deriving a proof of a quality root cause is equivalent to deriving a logical function capable of explaining that failure.
Data (voltage, time, batch) are no longer merely objects to be processed; they become the building blocks of the very \emph{logical function} that explains the event.

\section{Conclusion}

\subsection{Philosophical Paradigm: From ``Theories of Truth'' to ``Executable Norms''}

Since Frege, the core of traditional logic (first-order logic, description logic) has been static truth valuation ($\mathcal{M} \models \varphi$), aiming to characterize \emph{what the world is like}.
KOS-TL explicitly rejects this ``single-model centrism'' and realizes a fundamental transformation in the role of logic:

\begin{itemize}
\item \textbf{State-based Semantics}:
The basic unit of semantics is no longer an eternal, immutable model, but a dynamically evolving sequence of states ($\sigma_0 \to \sigma_1 \to \dots$).
The focus of logical judgment shifts from ``whether a proposition is true'' to ``whether a state transition is valid.''

\item \textbf{Constructivist Stance}:
Inheriting the essence of Martin-L\"of Type Theory, the principle of ``propositions as types, proofs as programs'' is generalized to
``knowledge as types, operations as programs, and events as constructors.''
In KOS-TL, the existence of knowledge is determined by whether it can be constructed, making the system philosophically self-consistent and transparent.

\item \textbf{Unity of Knowing and Acting}:
Logic is no longer merely descriptive, but becomes an executable normative system.
It not only characterizes truth, but also specifies how knowledge is operated on, updated, and executed, bridging the gap between logical reasoning and physical action.
\end{itemize}

\subsection{Logical Characteristics: The Fusion of Event-Driven Reasoning and Operational Semantics}

The essential innovation of KOS-TL lies in introducing operational semantics into the logical core, thereby enabling ``computation as reasoning'':

\begin{itemize}
\item \textbf{Events as First-class Constructors}:
Unlike the passive fact records of description logic, events in KOS-TL serve as the nexus between understanding and action.
They are constructors in type theory, defining the legitimate mechanisms of object generation and providing a foundation for handling temporal sensitivity, causality, and state transitions.

\item \textbf{Operational Derivation Rules}:
Derivation rules are no longer purely truth-preserving; they become operational norms.
They define how the system may \emph{legally} produce new semantic annotations (e.g., risk alerts) under specific contextual and temporal constraints.
This normative stance allows the system to exhibit different reasoning strategies across scenarios, with high flexibility and explainability.

\item \textbf{Small-step Operational Semantics}:
Logical judgment takes the formal shape $\langle \Sigma, c \rangle \to \Sigma'$.
This fine-grained evolution enables KOS-TL to precisely manage monotonicity and resource consumption within the knowledge base, making it closer to a ``causal reasoning computer.''
\end{itemize}

\subsection{System Capabilities: Computation-Reflective Autonomy and Auditability}

Reflexivity in KOS-TL is elevated from a programming technique to a form of formal self-introspection:

\begin{itemize}
\item \textbf{Self-certifying Computation as Proof}:
Based on the Curry--Howard correspondence, at every computational step (reduction), the kernel simultaneously synthesizes an equivalence proof (an $\text{Id}$ proof).
Thus, each step leaves an immutable logical footprint, certifying that system behavior conforms to predefined axioms.

\item \textbf{End-to-end Logical Auditability}:
With reflexivity, auditing no longer relies on external textual logs, but becomes a real-time mathematical verification process.
The completeness of proof chains directly determines the legitimacy of system states, enforcing ``transparent governance.''

\item \textbf{Logical Self-inspection and Healing}:
Reflexivity allows the kernel to ``look back'' over decision paths, locate axiom conflicts when contradictions arise, and trigger self-healing operators, providing a foundation for autonomous system operation.
\end{itemize}

\subsection{Engineering Paradigm: Expanding the Boundaries of Type-Driven Development}

KOS-TL elevates the type system from a ``memory safety tool'' to a set of ``axioms of system autonomy,'' offering key insights for modern system design:

\begin{itemize}
\item \textbf{From Type Safety to Logical Determinacy}:
Through bidirectional physical--logical refinement types, physical laws and business rules are internalized as type properties.
These ``evidence-carrying types'' ensure that illegal states are unrepresentable at the design level.

\item \textbf{Persistent Dependent-Type Storage}:
Breaking away from databases as raw byte heaps, storage becomes a runtime extension of the type system.
Monotonicity constraints on types manage data lifecycles and ensure causal consistency in knowledge evolution.

\item \textbf{Cross-layer Logical Lenses (Refinement Lenses)}:
Through rigorous bidirectional refinement mappings, KOS-TL bridges the gap between high-level business entities (e.g., ``compliant transfers'') and low-level physical storage (e.g., SQL records), ensuring that every physical action faithfully reflects high-level logical intent.
\end{itemize}

\subsection{The Essence of KOS-TL}

KOS-TL is a logical system that seamlessly integrates the constructive semantics of intuitionistic type theory, operational semantics, and knowledge engineering practice.
It is not merely a collection of algorithms, but a \textbf{``constitution for systems.''}

It demonstrates that, through dependent types ($\Pi$ and $\Sigma$), event constructors, and reflexivity, one can build an intelligent autonomous system that is logically self-consistent, causally traceable, and tightly aligned with the physical world.
In this framework, proofs evolve from simple correctness checks into the central driving force behind decision-making in complex real-world systems.


\end{document}